\newcommand{\algname}[1]{{\sf \footnotesize #1}\xspace}
\title{Streamlining in the Riemannian Realm:\\ Efficient Riemannian Optimization with Loopless Variance Reduction}
\date{}
\author{%
	Yury Demidovich \qquad Grigory Malinovsky \qquad Peter Richtárik \\
	\phantom{x}
	\\
	King Abdullah University of Science and Technology (KAUST) \\
	Thuwal, Saudi Arabia
}
\begin{document}
	
	\maketitle
	
	\begin{abstract}
		
		In this study, we investigate stochastic optimization on Riemannian manifolds, focusing on the crucial variance reduction mechanism used in both Euclidean and Riemannian settings. Riemannian variance-reduced methods usually involve a double-loop structure, computing a full gradient at the start of each loop. Determining the optimal inner loop length is challenging in practice, as it depends on strong convexity or smoothness constants, which are often unknown or hard to estimate. Motivated by Euclidean methods, we introduce the Riemannian Loopless \algname{SVRG} (\algname{R-LSVRG}) and \algname{PAGE} (\algname{R-PAGE}) methods. These methods replace the outer loop with probabilistic gradient computation triggered by a coin flip in each iteration, ensuring simpler proofs, efficient hyperparameter selection, and sharp convergence guarantees. Using R-PAGE as a framework for non-convex Riemannian optimization, we demonstrate its applicability to various important settings. For example, we derive Riemannian \algname{MARINA} (\algname{R-MARINA}) for distributed settings with communication compression, providing the best theoretical communication complexity guarantees for non-convex distributed optimization over Riemannian manifolds. Experimental results support our theoretical findings.
		
	\end{abstract}
	
	\section{Introduction}
	\label{submission}
	
	In this work we study finite-sum optimization problem: 
	\begin{equation} \label{eq:finite-sum}
		\min_{x\in\mathcal{X}\subset\mathcal{M}} \ f(x)\ \triangleq\ \frac{1}{n} \sum_{i=1}^{n} f_i(x),
	\end{equation}
	where $(\mathcal{M}, \mathfrak{g})$ represents a Riemannian manifold equipped with the Riemannian metric $\mathfrak{g}$, and $\mathcal{X} \subset \mathcal{M}$ is a geodesically convex set. Additionally, we assume that each function $f_i: \mathcal{M} \to \mathbb{R}$ is geodesically $L$-smooth.
	
	This formulation demonstrates its applicability across a wide range of practical scenarios, encompassing fundamental tasks such as principal component analysis (PCA) \citep{wold1987principal} and independent component analysis (ICA) \citep{lee1998independent}. Additionally, it extends its utility to address challenges like the completion and recovery of low-rank matrices and tensors \citep{tan2014riemannian, vandereycken2013low, mishra2014r3mc, kasai2016low}, dictionary learning \citep{cherian2016riemannian, sun2016complete}, optimization under orthogonality constraints \citep{edelman1998geometry, moakher2002means}, covariance estimation \citep{wiesel2012geodesic}, learning elliptical distributions \citep{sra2013geometric}, and Poincaré embeddings \citep{nickel2017poincare}. Furthermore, it proves effective in handling Gaussian mixture models \citep{hosseini2015matrix} and low-rank multivariate regression \citep{meyer2011linear}. This versatility makes it a valuable tool for tackling numerous problems in various settings.
	
	In addressing problems involving manifold constraints, a traditional approach involves alternating between optimization in the ambient Euclidean space and the process of "projecting" onto the manifold \citep{hauswirth2016projected}. The most popular method in this class is projected stochastic gradient descent \citep{luenberger1972gradient, calamai1987projected}. Furthermore, this concept is employed by other well-established methods. For example, two widely recognized techniques used to compute the leading eigenvector of symmetric matrices—power iteration \citep{bai1996some} and Oja’s algorithm \citep{oja1992principal}—are based on a projecting approach. However, these methods tend to suffer from high computational costs, when projecting onto certain manifolds (e.g., positive-definite matrices), which could be expensive in large-scale learning problems \cite{zhang2016riemannian, zhou2019faster}.
	
	An alternative option is to utilize Riemannian optimization, a method that directly interacts with the specific manifold under consideration \citep{da1998geodesic}. This approach enables Riemannian optimization to interpret the constrained optimization problem~(\ref{eq:finite-sum}) as an unconstrained problem on a manifold, eliminating the need for projections \citep{bonnabel2013stochastic, zhang2016first}. What's particularly significant is the conceptual perspective: by formulating the problem within a Riemannian framework, one can gain insights into the geometry of the problem \citep{zhang2018estimate}. This not only facilitates more precise mathematical analysis but also leads to the development of more efficient optimization algorithms.
	
	The expression of equation (\ref{eq:finite-sum}) in Euclidean form, where $\mathcal{M} = \mathbb{R}^d$ and $\mathfrak{g}$ represents the Euclidean inner product, has been a central focus of substantial algorithmic advancements in the fields of machine learning and optimization \citep{bertsekas2000gradient, goodfellow2016deep, sun2020optimization, ajalloeian2020convergence, demidovich2023guide}. This evolution traces back to the foundational work of Stochastic Gradient Method (\algname{SGD}) by \citet{robbins1951stochastic}. However, both batch and stochastic gradient methods grapple with considerable computational demands. When addressing finite sum problems with $n$ components, the full-gradient method requires $n$ derivatives at each step, while the stochastic method requires only one derivative \citep{bottou2018optimization}. Nevertheless, Stochastic Gradient Descent suffers from a slow sublinear rate \citep{gower2019sgd, khaled2020better, paquette2021sgd}. Tackling these challenges has spurred notable advancements in faster stochastic optimization within vector spaces, leveraging variance reduction techniques \citep{schmidt2017minimizing, johnson2013accelerating, defazio2014saga, konevcny2017semi}.
	
	In conjunction with numerous recent studies \citep{song2020variance,gower2020variance, dragomir2021fast}, these algorithms showcase accelerated convergence compared to the original gradient descent algorithms across various scenarios, including strongly convex \citep{gorbunov2020unified}, general convex \citep{khaled2023unified}, and non-convex settings \citep{reddi2016stochastic, allen2016variance, fang2018spider}. In order to simplify complicated structures of variance-reduced methods, loopless versions were proposed initially in strongly convex settings \citep{hofmann2015variance, kovalev2020don}. Later, this approach was adopted in non-convex and general convex settings \citep{li2021page, horvath2022adaptivity, khaled2023unified}. The loopless structure allows us to obtain practical parameters and make proofs more elegant.
	
	In the context of distributed learning, where each individual function in equation \eqref{eq:finite-sum} is stored on a separate device, the communication compression approach is frequently utilized to ease communication load \citep{alistarh2017qsgd}. The initial concept involved employing quantization or sparsification of gradients, sending compressed gradients to the server for aggregation, and subsequently performing a step \citep{wangni2018gradient}. However, owing to the high variance or errors linked with compression operators, a direct application does not consistently ensure improved convergence \citep{alistarh2018convergence, shi2019convergence}. To tackle this issue, akin to the \algname{SGD} scenario, various mechanisms for variance reduction \citep{mishchenko2019distributed, horvath2023stochastic} and error compensation in compression \citep{stich2018sparsified, stich2019error, richtarik2021ef21} have been proposed. The current state-of-the-art method in non-convex scenario is \algname{MARINA} \citep{gorbunov2021marina}, which allows to obtain optimal $\mathcal{O}\left(\frac{1+\omega / \sqrt{n}}{\varepsilon^2}\right)$ communication complexity. 
	
	In recent times, there has been a growing interest in exploring the Riemannian counterparts of batch and stochastic optimization algorithms. The pioneering work by \citet{zhang2016first} marked the first comprehensive analysis of the global complexity of batch and stochastic gradient methods for geodesically convex functions. Subsequent research by \citet{zhang2016riemannian, kasai2016riemannian, sato2019riemannian, han2021improved} focused on improving the convergence rate for finite-sum problems through the application of variance reduction techniques. Additionally, there has been an analysis of the near-optimal variance reduction method for non-convex Riemannian optimization problems, known as \algname{R-SPIDER} \citep{zhang2018r, zhou2019faster}. While this method boasts strong theoretical guarantees, its practical implementation poses challenges due to its double-loop structure. Determining practical parameters is particularly difficult, as the length of the inner loop often depends on condition number or smoothness constant and the desired level of accuracy, denoted by $\varepsilon$. 
	
	\section{Contributions} 
	Below, we outline the primary contributions of this paper.
	\begin{itemize}
		\item We introduce \algname{R-LSVRG}, a Riemannian Loopless Stochastic Variance Reduced Gradient Descent method inspired by the Euclidean \algname{L-SVRG} method \citep{kovalev2020don}. In its loopless form, we eliminate the inner loop and replace it with a biased coin-flip mechanism executed at each step of the method. This coin-flip determines when to compute the gradient by making a pass over the data. Alternatively, this approach can be interpreted as incorporating an inner loop of random length. The resulting method is easier to articulate, understand, and analyze. We demonstrate that \algname{R-LSVRG} exhibits the same rapid theoretical convergence rate of $\mathcal{O}\left(\left(n + \frac{L^2\zeta}{\mu^2}\right)\log\frac{1}{\varepsilon}\right)$ as its looped counterpart \citep{zhang2016riemannian}. Moreover, our analysis allows the expected length of the inner loop to be  $\mathcal{O}(n)$, independent of the strong convexity constant $\mu$ and the smoothness constant $L$, making the method more practically applicable.
		
		\item We present \algname{R-PAGE}, a Riemannian adaptation of the Probabilistic Gradient Estimator designed for non-convex optimization, drawing inspiration from the research conducted by \citet{li2021page}. We analyze \algname{R-PAGE} for optimizing geodesically smooth stochastic non-convex functions. Our analysis shows that this method achieves the best-known rates of $\mathcal{O}\left(n+\frac{n^{1 / 2}}{\varepsilon^2}\right)$ in the finite sum setting and $\mathcal{O}\left(\frac{1}{\varepsilon^3}\right)$ in the online setting. Moreover, these guarantees align with the lower bound established by \citep{fang2018spider} in the Euclidean case. Similar to the \algname{R-LSVRG} method, the \algname{R-PAGE} algorithm also employs a coin-flip approach, making the length of the inner loop random. Our analysis allows us to choose the expected length to be $\mathcal{O}(n)$, independent of the smoothness constant $L$ and the accuracy $\varepsilon$, rendering the method practical.
		
		\item Employing \algname{R-PAGE} as a foundation for non-convex Riemannian optimization, we showcase its adaptability across diverse and significant contexts. As an illustration, we formulate Riemannian \algname{MARINA} (\algname{R-MARINA}), specifically tailored for distributed scenarios incorporating communication compression and variance reduction. This development not only extends the utility of \algname{R-PAGE} but also establishes, to the best of our knowledge, the best theoretical communication complexity assurances for non-convex distributed optimization over Riemannian manifolds, aligning with the lower bounds in the Euclidean case \citep{gorbunov2021marina}.

	\end{itemize}
	
	\section{Preliminaries}

	A \emph{Riemannian manifold} $(\mathcal{M}, \mathfrak{g})$ is a real smooth manifold $\mathcal{M}$ equipped with a Riemannian metric $\mathfrak{g}$. This metric induces an inner product structure in each tangent space $T_x\mathcal{M}$ associated with every $x\in\mathcal{M}$. The inner product of vectors $u$ and $v$ in $T_x\mathcal{M}$ is denoted as $\langle u, v \rangle \triangleq \mathfrak{g}_x(u,v)$, and the norm of a vector $u\in T_x\mathcal{M}$ is defined as $\|u\| \triangleq \sqrt{\mathfrak{g}_x(u,u)}$. The angle between vectors $u$ and $v$ is given by $\arccos\frac{\langle u, v \rangle}{\|u\|\|v\|}$.
	
	A geodesic is a curve parameterized by constant speed $\gamma: [0,1]\to\mathcal{M}$ that locally minimizes distance. The exponential map $\Expmap_x:T_x\mathcal{M}\to\mathcal{M}$ maps a vector $v$ in $T_x\mathcal{M}$ to a point $y$ on $\mathcal{M}$ such that there exists a geodesic $\gamma$ with $\gamma(0) = x$, $\gamma(1) = y$, and $\dot{\gamma}(0) \triangleq \frac{d}{dt}\gamma(0) = v$.
	
	If there is a unique geodesic between any two points in $\mathcal{X}\subset\mathcal{M}$, the exponential map has an inverse $\Expmap_x^{-1}:\mathcal{X}\to T_x\mathcal{M}$. The geodesic is the uniquely shortest path with $\|\Expmap_x^{-1}(y)\| = \|\Expmap_y^{-1}(x)\|$, defining the geodesic distance between $x$ and $y\in\mathcal{X}$.
	
	Parallel transport, denoted as $\Gamma_x^y: T_x\mathcal{M}\to T_y\mathcal{M}$, is a mapping that transports a vector $v\in T_x\mathcal{M}$ to $\Gamma_x^y v\in T_y\mathcal{M}$. This process retains both the norm and, in a figurative sense, the "direction," akin to translation in $\mathbb{R}^d$. Notably, a tangent vector of a geodesic $\gamma$ maintains its tangential orientation when parallel transported along $\gamma$. Moreover, parallel transport preserves inner products.
	
	In our work, we will employ various crucial definitions and standard assumptions for theoretical analysis.
	
	\begin{definition}[Riemannian gradient]
		The Riemannian gradient, denoted as $\nabla f(x)$, represents a vector in the tangent space $T_x(\mathcal{M})$ such that $\frac{d(f(\gamma(t)))}{dt}\vert_{t=0}=\langle v, \nabla f(x)\rangle_x,$ holds true for any $v\in T_x\mathcal{M}.$
	\end{definition}

	\begin{assumption}[Geodesic convexity] 
		A function $f:\mathcal{X}\to\mathbb{R}$ is considered geodesically convex if, for any $x$ and $y$ in $\mathcal{X}$, and for any geodesic $\gamma$ connecting $x$ to $y$ such that $\gamma(0)=x$ and $\gamma(1)=y$, the inequality
		\[ f(\gamma(t)) \le (1-t)f(x) + tf(y) \]
		holds for all $t$ in the interval $[0,1].$
	\end{assumption}
	
	Demonstrably, an equivalent definition asserts that, for any $x$ and $y$ in $\mathcal{X}$,
	\[ f(y) \ge f(x) + \langle g_x, \Expmap_x^{-1}(y) \rangle_x, \]
	where $g_x$ serves as a subgradient of $f$ at $x$, or the gradient in case $f$ is differentiable. Here, $\langle\cdot,\cdot\rangle_x$ signifies the inner product within the tangent space at $x$ induced by the Riemannian metric. 
	\begin{assumption}[Geodesic strong convexity]
		\label{ass:g-strong-convexity}
		A function $f:\mathcal{X}\to\mathbb{R}$ is considered geodesically $\mu$-strongly convex ($\mu$-strongly g-convex) if, for any $x$ and $y$ in $\mathcal{X}$ and a subgradient $g_x$, the inequality holds:
		\[ f(y) \ge f(x) + \langle g_x, \Expmap_x^{-1}(y) \rangle_x + \frac{\mu}{2}d^2(x,y).\]

	\end{assumption}
	\begin{assumption}[Geodesic smoothness]
		\label{ass:l-g-smoothness}
		
		A differentiable function $f:\mathcal{X}\to\mathbb{R}$ is considered geodesically $L$-smooth ($L$-g-smooth) if its gradient is geodesically $L$-Lipschitz ($L$-g-Lipschitz). This is expressed as, for any $x$ and $y$ in $\mathcal{X}$,
		\begin{eqnarray*}\label{eq:l-g-smoothness}
			\|g_x - \Gamma_y^x g_y\|  \leq L\|\Expmap_x^{-1}(y)\|,
		\end{eqnarray*}
		where $\Gamma_y^x$ represents the parallel transport from $y$ to $x$. In such cases, the following inequality holds:
		\begin{equation*}
			f(y) \leq f(x) + \langle g_x, \Expmap^{-1}_x(y) \rangle + \frac{L}{2}\norm{\Expmap^{-1}_x(y)}^2.
		\end{equation*}
	\end{assumption}
	
	Throughout the remainder of the paper, we will omit the index of the tangent space when its context is apparent.
	
	\begin{assumption}[Polyak-\L ojasiewicz condition]
		\label{ass:pl-condition}
		We assert that $f:\mathcal{M}\to\mathbb{R}$ satisfies the Polyak-\L ojasiewicz condition (P\L-condition) if $f^{*}$ uniformly bounds $f(x)$ from below for all $x\in\mathcal{M}$, and there exists $\mu>0$ such that
		\begin{equation*}
			2\mu\left(f(x) - f^*\right)\leq \norm{f(x)}^2, \quad \forall; x\in\mathcal{X}.
		\end{equation*}
		
		\begin{assumption}[Uniform lower bound]
			\label{ass:lower}
			There exists $f^* \in$ $\mathbb{R}$ such that $f(x) \geq f^*$ for all $x \in \mathbb{R}^d$.
		\end{assumption}
		
	\end{assumption}
	We revisit an essential trigonometric distance bound, which holds significance for our analytical considerations. 
	\begin{lemma}[\citep{zhang2016first} {Lemma 6}]\label{lemma:distance_bound}
		If $a$, $b$, and $c$ represent the sides (i.e., side lengths) of a geodesic triangle in an Alexandrov space with curvature lower bounded by $\kappa_{\min}$, and $A$ denotes the angle between sides $b$ and $c$, then the following distance bound holds:
		\begin{equation} \label{eq:distance_bound}
			a^2 \leq \frac{\sqrt{|\kappa_{\min}|}c}{\tanh(\sqrt{|\kappa_{\min}|}c)}b^2 + c^2 - 2bc\cos(A).
		\end{equation}
	\end{lemma}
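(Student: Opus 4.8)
The plan is to split the argument into (a) a comparison-geometry reduction that replaces the abstract Alexandrov triangle by a triangle in the model space of constant curvature $\kappa_{\min}$, and (b) an explicit one-variable computation inside that model. For (a): since the ambient space has curvature bounded below by $\kappa_{\min}$, the defining comparison property of such spaces (Toponogov comparison, hinge version) says that if $\bar p\bar q\bar r$ is the comparison triangle in the space form $M_{\kappa_{\min}}$ with $d(\bar p,\bar q)=c$, $d(\bar p,\bar r)=b$ and angle $A$ at $\bar p$, then $a\le d(\bar q,\bar r)=:\bar a$. So it suffices to prove \eqref{eq:distance_bound} with $a$ replaced by $\bar a$, i.e.\ inside $M_{\kappa_{\min}}$. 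If $\kappa_{\min}\ge 0$ this is immediate: the curvature is then also bounded below by $0$, the Euclidean law of cosines gives $\bar a^2=b^2+c^2-2bc\cos A$, and $x/\tanh x\ge 1$ only enlarges the right-hand side. So assume $\kappa_{\min}=-\delta^2<0$. Both sides of \eqref{eq:distance_bound} are invariant under $(a,b,c,\delta)\mapsto(\lambda a,\lambda b,\lambda c,\delta/\lambda)$, which is just an isometric rescaling of the model, so we may take $\delta=1$ and work in the hyperbolic plane of curvature $-1$. There the hyperbolic law of cosines gives $\cosh\bar a=\cosh b\cosh c-\sinh b\sinh c\cos A$, and the claim reduces to the scalar inequality
\begin{equation*}
\bar a^2\ \le\ \frac{c}{\tanh c}\,b^2+c^2-2bc\cos A. \tag{$\star$}
\end{equation*}

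\emph{The scalar inequality.} Fix $c>0$ and $A\in(0,\pi)$ and regard $\bar a=\bar a(b)>0$, defined implicitly by the law of cosines, as a function of $b\ge 0$. At $b=0$ the hinge degenerates, $\bar a(0)=c$, and one checks $\bar a(0)^2=c^2$ and $\frac{d}{db}\bar a(b)^2\big|_{b=0}=-2c\cos A$, which match the value and first derivative at $b=0$ of the right-hand side of $(\star)$; moreover its second derivative in $b$ is the constant $2c/\tanh c$. Hence, by Taylor's formula with integral remainder, $(\star)$ would follow from $\frac{d^2}{db^2}\bar a(b)^2\le 2c/\tanh c$ for all $b\ge 0$. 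Putting $w:=\cosh\bar a=\cosh b\cosh c-\sinh b\sinh c\cos A$ one has $w''=w$ and $(w')^2=\sinh^2\bar a-\sinh^2 c\,\sin^2 A$; feeding these together with the first two derivatives of $s\mapsto(\operatorname{arccosh} s)^2$ into the chain rule gives
\begin{equation*}
\frac{d^2}{db^2}\bar a(b)^2\ =\ 2+2\Bigl(\tfrac{\bar a}{\tanh\bar a}-1\Bigr)\frac{\sinh^2 c\,\sin^2 A}{\sinh^2\bar a},
\end{equation*}
so everything reduces to
\begin{equation*}
\Bigl(\tfrac{\bar a}{\tanh\bar a}-1\Bigr)\frac{\sinh^2 c\,\sin^2 A}{\sinh^2\bar a}\ \le\ \frac{c}{\tanh c}-1. \tag{$\star\star$}
\end{equation*}

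\emph{Proving $(\star\star)$, and the main obstacle.} This last inequality is the only real work. I would prove it using two elementary facts about the curvature$-1$ plane: (i) letting $h:=d(\bar q,\ell)$ be the distance from $\bar q$ to the geodesic line $\ell$ through $\bar p$ and $\bar r$, right-triangle trigonometry gives $\sinh h=\sinh c\,\sin A$, while $h\le c$ and $h\le\bar a$ because both $\bar p$ and $\bar r$ lie on $\ell$; and (ii) on $(0,\infty)$ the map $x\mapsto x/\tanh x$ is increasing and $x\mapsto(x/\tanh x-1)/\sinh^2 x$ is decreasing, both of which reduce after a short computation to $\sinh 2x\ge 2x$. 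Then, since $\sinh^2 c\,\sin^2 A=\sinh^2 h$ by (i): if $\bar a\ge c$, bound $\sinh^2 h\le\sinh^2 c$ and use the decreasing function in (ii) to get $(\bar a/\tanh\bar a-1)\sinh^2 c/\sinh^2\bar a\le c/\tanh c-1$; if $\bar a\le c$, bound $\sinh^2 h\le\sinh^2\bar a$ and use monotonicity of $x/\tanh x$ to get $\bar a/\tanh\bar a-1\le c/\tanh c-1$. Either way $(\star\star)$ holds (the degenerate angles $A\in\{0,\pi\}$ follow by continuity or are checked directly), completing the proof. The comparison reduction and the rescaling are routine; the genuine obstacle is that $(\star\star)$ is \emph{tight} — equality holds as $b\to 0$ with $A=\pi/2$, and as $c\to 0$ — so no lossy estimate can work, and the crucial move is to recognize $\sinh c\,\sin A$ as $\sinh h$ for the altitude-foot distance $h$, which is simultaneously $\le c$ and $\le\bar a$, so that the argument can be closed by splitting on the sign of $\bar a-c$.
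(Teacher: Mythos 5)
The paper does not prove this lemma: it is quoted verbatim as Lemma~6 of \citet{zhang2016first}, so there is no in-paper argument to compare against. Judged on its own merits, your proof is correct and self-contained, and it follows the standard route (Toponogov hinge comparison to the constant-curvature model, rescaling to curvature $-1$, then a one-variable calculus argument for the resulting scalar inequality).

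The main steps all check out. Using $w=\cosh\bar a$ with $w''=w$ and $(w')^2=\sinh^2\bar a-\sinh^2 c\,\sin^2 A$ in the chain rule for $(\operatorname{arccosh})^2$ does give
\[
\frac{d^2}{db^2}\,\bar a(b)^2 \;=\; 2 + 2\Bigl(\frac{\bar a}{\tanh\bar a}-1\Bigr)\frac{\sinh^2 c\,\sin^2 A}{\sinh^2\bar a},
\]
and since the right-hand side of $(\star)$ has matching value and first derivative at $b=0$ and constant second derivative $2c/\tanh c$, the reduction to $(\star\star)$ via integrating the second-derivative inequality twice is sound. The identification $\sinh c\,\sin A=\sinh h$ with $h$ the foot-of-altitude distance is exactly the hyperbolic right-triangle relation, and both $h\le c$ and $h\le\bar a$ hold because $\bar p,\bar r$ lie on the line $\ell$. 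The case split on $\bar a\gtrless c$, paired respectively with the monotonicity of $x\mapsto x/\tanh x$ and of $x\mapsto(x/\tanh x-1)/\sinh^2 x$, closes $(\star\star)$ correctly. Your treatment of $\kappa_{\min}\ge 0$ via the Euclidean model and $x/\tanh x\ge 1$ sidesteps the diameter restriction one would otherwise worry about in the spherical model, which is a clean move, and invoking the hinge form of Toponogov is valid in Alexandrov spaces with curvature bounded below.

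Two minor remarks. First, the claim that both monotonicity facts in (ii) ``reduce after a short computation to $\sinh 2x\ge 2x$'' slightly understates the second one: showing $(x/\tanh x-1)/\sinh^2 x$ decreasing requires a couple more differentiations (or the monotone l'H\^opital rule applied to $(x\cosh x-\sinh x)/\sinh^3 x$), ultimately resting on $\tanh y\le y$ rather than literally $\sinh 2x\ge 2x$; it is still elementary, but worth spelling out if this were a full write-up. Second, $\bar a>0$ must be guaranteed for the chain rule through $\operatorname{arccosh}$ to be applied; this holds whenever $b,c>0$ are not both degenerate with $A=0$, and the remaining degenerate configurations are checked directly as you note.
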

	In the subsequent sections, we adopt the notation $\zeta(\kappa_{\min},c)\eqdef \frac{\sqrt{|\kappa_{\min}|}c}{\tanh(\sqrt{|\kappa_{\min}|}c)}$ for the curvature-dependent quantity defined in inequality~\eqref{eq:distance_bound}. Leveraging Lemma~\ref{lemma:distance_bound}, we can readily establish the following corollary. This corollary unveils a significant relationship between two consecutive updates within an iterative optimization algorithm on a Riemannian manifold with curvature bounded from below.
	\begin{corollary} \label{th:distance_corollary}
		For any Riemannian manifold $\mathcal{M}$ where the sectional curvature is lower bounded by $\kappa_{\min}$ and for any point $x$, $x_s\in\mathcal{M}$, the update $x_{s+1} = \Expmap_{x_s}(-\eta_s g_{x_s})$ adheres to the inequality:
		\begin{align*} \label{eq:distance_corollary}
			\notag	\langle -g_{x_s}, \Expmap_{x_s}^{-1}(x)\rangle& \le \frac{1}{2\eta_s} \left(d^2(x_{s},x) - d^2(x_{s+1},x)\right)
			+ \frac{\zeta(\kappa_{\min},d(x_s,x))\eta_s}{2} \|g_{x_s}\|^2.
		\end{align*}
	\end{corollary}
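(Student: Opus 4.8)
The plan is to apply the trigonometric distance bound of Lemma~\ref{lemma:distance_bound} to the geodesic triangle determined by the three points $x_s$, $x_{s+1}$, and $x$, and then rearrange the resulting inequality. First I would name the sides: set $c = d(x_s, x)$ (the side opposite to $x_{s+1}$ is not the one we want — careful bookkeeping here is the first thing to get right), $b = d(x_s, x_{s+1})$, and $a = d(x_{s+1}, x)$, and let $A$ be the angle at the vertex $x_s$ between the side going to $x$ and the side going to $x_{s+1}$. Since $x_{s+1} = \Expmap_{x_s}(-\eta_s g_{x_s})$, the initial velocity of the geodesic from $x_s$ to $x_{s+1}$ is $-\eta_s g_{x_s}$, so $b = d(x_s, x_{s+1}) = \eta_s \|g_{x_s}\|$, and moreover $\Expmap_{x_s}^{-1}(x_{s+1}) = -\eta_s g_{x_s}$.

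Next I would rewrite the inner product term using the definition of the angle: $bc\cos(A)$ equals $\langle \Expmap_{x_s}^{-1}(x_{s+1}), \Expmap_{x_s}^{-1}(x)\rangle = \langle -\eta_s g_{x_s}, \Expmap_{x_s}^{-1}(x)\rangle = -\eta_s \langle g_{x_s}, \Expmap_{x_s}^{-1}(x)\rangle$. Feeding this into \eqref{eq:distance_bound} gives
\[
d^2(x_{s+1}, x) \;\le\; \zeta(\kappa_{\min}, c)\, b^2 \;+\; c^2 \;+\; 2\eta_s \langle g_{x_s}, \Expmap_{x_s}^{-1}(x)\rangle,
\]
where I used $a^2 = d^2(x_{s+1},x)$, $c^2 = d^2(x_s,x)$, and $-2bc\cos(A) = +2\eta_s\langle g_{x_s}, \Expmap_{x_s}^{-1}(x)\rangle$. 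Now substitute $b^2 = \eta_s^2 \|g_{x_s}\|^2$, move the inner-product term to the left and the distance terms to the right, and divide by $2\eta_s > 0$; this yields exactly
\[
\langle -g_{x_s}, \Expmap_{x_s}^{-1}(x)\rangle \;\le\; \frac{1}{2\eta_s}\left(d^2(x_s,x) - d^2(x_{s+1},x)\right) + \frac{\zeta(\kappa_{\min}, d(x_s,x))\,\eta_s}{2}\,\|g_{x_s}\|^2,
\]
as claimed.

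The main thing to be careful about is the sign and angle bookkeeping in identifying which side of the geodesic triangle plays the role of $a$, $b$, $c$ in Lemma~\ref{lemma:distance_bound}, and in matching the sign conventions of the exponential map and parallel transport so that the cross term comes out with the right sign; there is no genuine analytic obstacle here, since Lemma~\ref{lemma:distance_bound} does all the geometric work. A minor point worth a remark is that the inequality uses $\zeta(\kappa_{\min}, c)$ with the argument $c = d(x_s, x)$ rather than, say, the base $b$; this is exactly how the bound in \eqref{eq:distance_bound} is stated (the coefficient multiplies $b^2$ but depends on $c$), so no extra work is needed. One should also note the monotonicity $\zeta(\kappa_{\min}, c) \ge 1$, which is implicitly what makes this corollary a meaningful relaxation of the Euclidean "law of cosines" identity, though it is not needed for the derivation itself.
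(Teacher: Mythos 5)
Your proof is correct and is precisely the argument the paper leaves implicit (the paper says the corollary follows "readily" from Lemma~\ref{lemma:distance_bound} and gives no explicit derivation). The side/angle assignment $a = d(x_{s+1},x)$, $b = d(x_s,x_{s+1}) = \eta_s\|g_{x_s}\|$, $c = d(x_s,x)$ with $A$ the angle at $x_s$, the identification $bc\cos A = \langle \Expmap_{x_s}^{-1}(x_{s+1}), \Expmap_{x_s}^{-1}(x)\rangle$, and the subsequent rearrangement and division by $2\eta_s$ are exactly what is needed, and the sign bookkeeping is handled correctly.
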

	In our analysis, we employ an additional set of assumptions essential for handling the geometry of Riemannian manifolds. It is worth noting that the majority of practical manifold optimization problems can meet these assumptions.
	\begin{assumption} We assume that
		\label{ass:geom}	
		\begin{enumerate}[(a)]
			\item	$f$ attains its optimum at $x^*\in\mathcal{X}$;
			\item $\mathcal{X}$ is compact, and the diameter of $\mathcal{X}$ is bounded by $D$, that is, $\max_{x,y\in\mathcal{X}} d(x,y) \le D$;
			\item the sectional curvature in $\mathcal{X}$ is \emph{upper} bounded by $\kappa_{\max}$, and within $\mathcal{X}$ the exponential map is invertible;
			\item the sectional curvature in $\mathcal{X}$ is \emph{lower} bounded by $\kappa_{\min}$. 
		\end{enumerate}
	\end{assumption}
	We introduce a fundamental geometric constant with the objective of encapsulating and characterizing the impact and importance associated with the curvature of the manifold.
	\begin{definition}[Curvature-driven manifold term]
		\label{eq:zeta-sigma}
		The term "curvature-driven manifold" refers to the constant described below:
		\begin{equation*} 
			\zeta= \left\{\begin{array}{ll}
				\frac{\sqrt{|\kappa_{\min}|}D}{\tanh(\sqrt{|\kappa_{\min}|}D)}, & \text{if } \kappa_{\min} < 0, \\
				1, & \text{if } \kappa_{\min} \ge 0.
			\end{array}\right.
		\end{equation*}

	\end{definition}

		\section{Riemannian LSVRG}
		\begin{algorithm}[t]
			\caption{Riemannian Loopless Stochastic Variance Reduced Gradient Descent (\algname{R-LSVRG})}\label{alg:RLSVRG}
			\begin{algorithmic}[1]
				\State	\textbf{Parameters:} initial point $x^0\in\mathcal{M},$ $y^0=x^0,$ learning rate $\eta > 0$
				\For{$k = 0, 1, 2, \ldots$}\do\\
				\State Sample $i\in[n]$ uniformly at random
				\State $g^{k} = \nabla f_i(x^k) - \Gamma_{y^k}^{x^k}\left( \nabla f_i(y^k) - \nabla f(y^k)\right)$
				\State $x^{k+1} = \mathrm{Exp}_{x^k}\left(-\eta g^k\right)$
				\State $y^{k+1} = 
				\begin{cases}
					x^k, & \text{with probability }p\\
					y^k, & \text{otherwise.}
				\end{cases}$
				\EndFor
			\end{algorithmic}
		\end{algorithm}
		
		In this section, we analyze the convergence guarantees of \algname{R-LSVRG} for solving the problem \eqref{eq:finite-sum}, where each $f_i$ ($i \in [n]$) is g-smooth and $f$ is strongly g-convex. In this context, we show that \algname{R-LSVRG} exhibits a linear convergence rate.
		\begin{algorithm*}
			\caption{Riemannian  ProbAbilistic Gradient Estimator (\algname{R-PAGE)}}\label{alg:RPAGE}
			\begin{algorithmic}[1]
				\State	\textbf{Parameters:}  $\text { initial point } x^0 \text {, stepsize } \eta>0 \text {, minibatch size } B, b<B \text {, probability } p \in(0,1]$
				\State\label{line:rpage-0-estimator} $g^0=\nabla f_{B}(x^0)$, \qquad \text{where }$\nabla f_{B}$ \text{is minibatch stochastic gradient estimator with size $B$} 
				\For{$k = 0, 1, 2, \ldots$}\do\\
				\State $x^{k+1} = \Expmap_{x^k}\left(-\eta g^k\right)$
				\State\label{line:rpage-estimator} $g^{k+1} = 
				\begin{cases}
					\nabla f_{B}(x^{k+1}), & \text{with probability }p\\
					\Gamma_{x^k}^{x^{k+1}}\left(g^k\right) + \nabla f_{b}(x^{k+1}) - \Gamma_{x^k}^{x^{k+1}}\left( \nabla f_{b}(x^k)\right), & \text{otherwise.}
				\end{cases}$
				\EndFor
			\end{algorithmic}
		\end{algorithm*}
		\begin{theorem}
			\label{thm:lsvrg_strongly_convex}
			Assuming that in \eqref{eq:finite-sum}, each $f_i$ is differentiable and $L$-g-smooth on $\mathcal{X}$ (Assumption~\ref{ass:l-g-smoothness} holds), and $f$ is $\mu$-strongly g-convex on $\mathcal{X}$ (Assumption~\ref{ass:g-strong-convexity} holds). Additionally, let Assumption~\ref{ass:geom} hold. Choose the stepsize $0<\eta \leq \frac{\mu}{16L^2\zeta}.$ Then, the iterates of the Riemannian L-SVRG method (Algorithm~\ref{alg:RLSVRG}) satisfy
			\begin{equation}\label{eq:convergence-rlsvrg-scvx}
				\mathbb{E}\left[\Phi^k \right]\leq \left(\max\left\lbrace 1 - \eta\mu, 1 - \frac{p}{2} \right\rbrace\right)^k\Phi^0,
			\end{equation}
			where $\Phi^k\eqdef d^2\left(x^k,x^{*}\right) + \frac{3\mu^2}{64pL^2\zeta}\mathcal{D}^k$ and  $\mathcal{D}^k\eqdef \norm{\Expmap^{-1}_{y^k}(x^{*})}^2.$
		\end{theorem}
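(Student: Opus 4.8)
The plan is to derive a one-step contraction for the Lyapunov function $\Phi^k$ by bounding the expected progress of the two pieces separately and then combining them. First I would analyze the distance term $d^2(x^{k+1},x^*)$. Since $x^{k+1}=\Expmap_{x^k}(-\eta g^k)$, I would apply Corollary~\ref{th:distance_corollary} with $x_s=x^k$, $x_{s+1}=x^{k+1}$, $g_{x_s}=g^k$, and $x=x^*$ to get
\begin{equation*}
  d^2(x^{k+1},x^*) \le d^2(x^k,x^*) - 2\eta\langle -g^k, \Expmap_{x^k}^{-1}(x^*)\rangle + \zeta \eta^2 \|g^k\|^2,
\end{equation*}
where I use Assumption~\ref{ass:geom} to replace $\zeta(\kappa_{\min},d(x^k,x^*))$ by the global constant $\zeta$ from Definition~\ref{eq:zeta-sigma}. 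Now I take conditional expectation over the random index $i$. The estimator $g^k = \nabla f_i(x^k) - \Gamma_{y^k}^{x^k}(\nabla f_i(y^k) - \nabla f(y^k))$ is unbiased, $\Exp{g^k} = \nabla f(x^k)$, so the cross term becomes $-2\eta\langle -\nabla f(x^k), \Expmap_{x^k}^{-1}(x^*)\rangle$, which by $\mu$-strong g-convexity (Assumption~\ref{ass:g-strong-convexity}, applied at $x=x^k$, $y=x^*$) is bounded by $-2\eta(f(x^k)-f(x^*)) - \eta\mu\, d^2(x^k,x^*)$.

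The next step is to control $\Exp{\|g^k\|^2}$. I would use the standard variance-reduction bound: write $g^k - \nabla f(x^k) = \big(\nabla f_i(x^k) - \Gamma_{y^k}^{x^k}\nabla f_i(y^k)\big) - \big(\nabla f(x^k) - \Gamma_{y^k}^{x^k}\nabla f(y^k)\big)$, use $\Exp{\|X - \Exp X\|^2}\le \Exp{\|X\|^2}$, and then $L$-g-smoothness of each $f_i$ (Assumption~\ref{ass:l-g-smoothness}) together with parallel-transport isometry to get $\Exp{\|\nabla f_i(x^k) - \Gamma_{y^k}^{x^k}\nabla f_i(y^k)\|^2} \le L^2 \|\Expmap_{x^k}^{-1}(y^k)\|^2 = L^2 d^2(x^k,y^k)$. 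This yields $\Exp{\|g^k\|^2} \le 2\|\nabla f(x^k)\|^2 + 2L^2 d^2(x^k,y^k)$, and then $\|\nabla f(x^k)\|^2 \le 2L(f(x^k)-f(x^*))$ via $L$-g-smoothness (using that $f$ attains its minimum at $x^*$ with $\nabla f(x^*)=0$ — or going through $f^{\inf}$). Actually, to make the $f(x^k)-f(x^*)$ terms cancel against the $-2\eta(f(x^k)-f(x^*))$ from strong convexity, I need $\eta$ small enough, which is exactly where the constraint $\eta \le \frac{\mu}{16L^2\zeta}$ enters; the $d^2(x^k,y^k)$ term must then be folded into the $\mathcal{D}^k$ part of the Lyapunov function after relating $d^2(x^k,y^k)$ to $d^2(x^k,x^*)+\mathcal{D}^k$ via a triangle-type inequality or Young's inequality. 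This coupling between the $\|g^k\|^2$ bound and the function-value decrease is the main obstacle: getting the constants to line up so that the surviving coefficient on $d^2(x^k,x^*)$ is at most $1-\eta\mu$.

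Finally I would handle the $\mathcal{D}^k$ term. Taking expectation over the coin flip in line 6 of Algorithm~\ref{alg:RLSVRG}, $y^{k+1}=x^k$ with probability $p$ and $y^{k+1}=y^k$ otherwise, so
\begin{equation*}
  \Exp{\mathcal{D}^{k+1}} = p\, \|\Expmap_{x^k}^{-1}(x^*)\|^2 + (1-p)\, \mathcal{D}^k = p\, d^2(x^k,x^*) + (1-p)\mathcal{D}^k.
\end{equation*}
Multiplying by the weight $\frac{3\mu^2}{64 p L^2\zeta}$ and adding to the distance bound, the $p\, d^2(x^k,x^*)$ contribution adds a small positive multiple of $d^2(x^k,x^*)$ (controllable since the weight carries a $1/p$ that cancels), while the $(1-p)$ factor gives the $1-\frac{p}{2}$ rate on the $\mathcal{D}^k$ coefficient once the cross-contributions from the $2L^2 d^2(x^k,y^k)$ term are absorbed. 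Collecting everything, I would verify that with $\eta \le \frac{\mu}{16L^2\zeta}$ both coefficients are dominated by $\max\{1-\eta\mu,\, 1-\frac{p}{2}\}$, giving $\Exp{\Phi^{k+1}} \le \max\{1-\eta\mu,\,1-\frac p2\}\,\Phi^k$ conditionally, and then unrolling the recursion (tower property) yields \eqref{eq:convergence-rlsvrg-scvx}. The delicate bookkeeping is all in the second paragraph; the rest is routine.
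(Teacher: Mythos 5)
Your overall scaffold --- apply Corollary~\ref{th:distance_corollary}, take expectation over the random index to bring in $\nabla f(x^k)$ and use strong g-convexity, derive the coin-flip recursion $\Exp{\mathcal{D}^{k+1}} = p\,d^2(x^k,x^*) + (1-p)\mathcal{D}^k$, and combine into a weighted Lyapunov function --- is exactly the paper's. Where you diverge is in the second-moment bound on $g^k$, and that divergence is precisely the place you yourself flag as ``the main obstacle.''

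The paper's bound on $\Exp{\|g^k\|^2}$ never produces a $\|\nabla f(x^k)\|^2$ or $(f(x^k)-f^*)$ term. Using $\nabla f(x^*)=0$ as an anchor, it writes $g^k = \big(\nabla f_i(x^k) - \Gamma_{y^k}^{x^k}\nabla f_i(y^k)\big) + \Gamma_{y^k}^{x^k}\big(\nabla f(y^k) - \Gamma_{x^*}^{y^k}\nabla f(x^*)\big)$, applies Young's once, bounds each piece by $L$-g-smoothness (the first by $L^2 d^2(x^k,y^k)$, the second by $L^2\mathcal{D}^k$), and uses the metric triangle inequality $d(x^k,y^k)\le d(x^k,x^*)+d(y^k,x^*)$ plus Young's again to arrive at
\begin{equation*}
\Exp{\|g^k\|^2}\ \le\ 4L^2\, d^2(x^k,x^*)\ +\ 6L^2\,\mathcal{D}^k.
\end{equation*}
Because this is already expressed in $d^2(x^k,x^*)$ and $\mathcal{D}^k$, there is no coupling between the variance term and the strong-convexity inner product to untangle: after also using quadratic growth to dispose of $2\eta(f(x^*)-f(x^k))$, the $d^2(x^k,x^*)$ coefficient is simply $1-2\eta\mu+16L^2\eta^2\zeta$, and the Lyapunov weight $\tfrac{12L^2\eta^2\zeta}{p}$, which equals $\tfrac{3\mu^2}{64pL^2\zeta}$ at $\eta=\tfrac{\mu}{16L^2\zeta}$, falls out at once.

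Your route instead passes through $\Exp{\|g^k-\nabla f(x^k)\|^2}\le L^2 d^2(x^k,y^k)$ and reinserts the function-value deficit via $\|\nabla f(x^k)\|^2\le 2L\big(f(x^k)-f^*\big)$. Two concrete caveats. First, on a bounded geodesically convex $\mathcal{X}$ (Assumption~\ref{ass:geom}(b)), that last inequality is not a free consequence of Assumption~\ref{ass:l-g-smoothness}: the standard derivation evaluates the smoothness upper bound at $\Expmap_{x^k}\!\big(-\tfrac{1}{L}\nabla f(x^k)\big)$, which is not guaranteed to lie in $\mathcal{X}$, so an extra argument (or an extension of $L$-g-smoothness beyond $\mathcal{X}$) is required; the paper's anchored bound sidesteps this entirely. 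Second, even granting that inequality, the chain $\Exp{\|g^k\|^2}\le 2\|\nabla f(x^k)\|^2 + 2L^2 d^2(x^k,y^k)$ followed by $d^2(x^k,y^k)\le 2d^2(x^k,x^*)+2\mathcal{D}^k$ gives a $\mathcal{D}^k$ coefficient of $4L^2$ (not $6L^2$), so the Lyapunov weight needed to swallow it is $\tfrac{8L^2\eta^2\zeta}{p}$ rather than $\tfrac{12L^2\eta^2\zeta}{p}$: the contraction you would establish holds for a $\Phi^k$ with coefficient $\tfrac{\mu^2}{32pL^2\zeta}$, not the stated $\tfrac{3\mu^2}{64pL^2\zeta}$. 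Both issues are fixable with extra work, but the paper's anchored decomposition dissolves them both and makes the constants match by construction.
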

		\begin{corollary}\label{cor:lsvrg_scvx_eps_solution}
			With the assumptions outlined in Theorem~\ref{thm:lsvrg_strongly_convex}, if we set $\eta = \frac{\mu}{16 L^2 \zeta}$, in order to obtain $\mathbb{E}[\Phi^k] \leq \varepsilon \Phi^0$, we find that the iteration complexity of Riemannian L-SVRG method (Algorithm~\ref{alg:RLSVRG}) is given by
			$$
			K = \mathcal{O}\left(\left(\frac{1}{p} + \frac{L^2 \zeta}{\mu^2}\right) \log \frac{1}{\varepsilon}\right).
			$$
		\end{corollary}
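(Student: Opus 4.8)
\textbf{Proof plan for Corollary~\ref{cor:lsvrg_scvx_eps_solution}.}

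The plan is to take Theorem~\ref{thm:lsvrg_strongly_convex} as a black box and simply unwind the geometric rate it provides into an iteration count. With the prescribed choice $\eta = \frac{\mu}{16L^2\zeta}$, the theorem gives $\mathbb{E}[\Phi^k] \le \rho^k \Phi^0$ with $\rho = \max\{1 - \eta\mu,\, 1 - \tfrac{p}{2}\} = \max\{1 - \tfrac{\mu^2}{16L^2\zeta},\, 1 - \tfrac{p}{2}\}$. To force $\mathbb{E}[\Phi^k] \le \varepsilon \Phi^0$ it suffices to require $\rho^k \le \varepsilon$, i.e. $k \ge \frac{\log(1/\varepsilon)}{\log(1/\rho)}$. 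So the whole task reduces to lower-bounding $\log(1/\rho)$, equivalently upper-bounding $\frac{1}{\log(1/\rho)}$, by something of order $\frac{1}{p} + \frac{L^2\zeta}{\mu^2}$.

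First I would use the elementary inequality $\log(1/(1-t)) \ge t$ for $t \in [0,1)$, valid for each of the two terms inside the max (both $\tfrac{\mu^2}{16L^2\zeta}$ and $\tfrac{p}{2}$ lie in $[0,1)$ under the stated assumptions — the former because $\eta \le \tfrac{\mu}{16L^2\zeta}$ is a legitimate stepsize and one always has $\mu \le L$, the latter because $p \in (0,1]$). Since $\rho$ is the maximum of the two factors, $1/\log(1/\rho)$ is the minimum of the two reciprocals $\frac{1}{\log(1/(1-\eta\mu))}$ and $\frac{1}{\log(1/(1-p/2))}$; using $\log(1/(1-t)) \ge t$ on whichever term equals $\rho$ gives
\begin{equation*}
	\frac{1}{\log(1/\rho)} \le \max\left\{ \frac{1}{\eta\mu},\, \frac{2}{p} \right\} = \max\left\{ \frac{16L^2\zeta}{\mu^2},\, \frac{2}{p} \right\} \le \frac{16L^2\zeta}{\mu^2} + \frac{2}{p}.
\end{equation*}
Therefore $k = \left\lceil \left(\tfrac{16L^2\zeta}{\mu^2} + \tfrac{2}{p}\right)\log\tfrac{1}{\varepsilon}\right\rceil$ iterations suffice, which is $\mathcal{O}\left(\left(\tfrac{1}{p} + \tfrac{L^2\zeta}{\mu^2}\right)\log\tfrac{1}{\varepsilon}\right)$, as claimed.

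There is essentially no obstacle here; the only thing to be careful about is that the bound $\log(1/(1-t)) \ge t$ is applied only to the term that actually attains the maximum (applying it to the smaller term would go the wrong way inside a max), and that the two sub-cases are then combined by the trivial bound $\max\{a,b\} \le a + b$ for nonnegative $a,b$. One should also note in passing that the corollary as stated bounds the Lyapunov functional $\Phi^k$, which dominates $d^2(x^k,x^*)$, so the same iteration count controls the squared geodesic distance to the optimum; no extra argument is needed beyond observing $d^2(x^k,x^*) \le \Phi^k$.
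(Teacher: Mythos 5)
Your proposal is correct and is essentially the same argument as the paper's: the paper bounds $(1-\tfrac{\mu^2}{16L^2\zeta})^k \le e^{-k\mu^2/(16L^2\zeta)}$ and $(1-\tfrac{p}{2})^k \le e^{-kp/2}$ and reads off the iteration count, which is exactly your $\log\frac{1}{1-t}\ge t$ step in exponentiated form. Your added care about applying the bound to the term attaining the max, and the closing remark that $d^2(x^k,x^*)\le\Phi^k$, are fine but not needed beyond what the paper does.
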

		
		By examining equation \eqref{eq:convergence-rlsvrg-scvx}, it becomes apparent that the contraction of the Lyapunov function is determined by $\max\{1-\eta\mu, 1- \frac{p}{2}\}$. Due to the constraint $\eta \leq  \frac{\mu}{16L^2\zeta}$, the first term is bounded from below by $1-\frac{\mu^2}{16 L^2 \zeta}$. Consequently, the complexity cannot surpass $\mathcal{O}\left(\frac{\zeta L^2}{\mu^2}\log \frac{1}{\varepsilon}\right)$. Regarding the total complexity, which is measured by the number of stochastic gradient calls, \algname{R-LSVRG}, on average, invokes the stochastic gradient oracle $2+p(n-1)$ times in each iteration. Combining these two complexities yields a total complexity of $\mathcal{O}\left(\left(\frac{1}{p} + n + \frac{\zeta L^2}{\mu^2}+ \frac{pn\zeta L^2}{\mu^2}\right) \log \frac{1}{\varepsilon}\right)$. It is noteworthy that choosing any $p \in \left[ \min \{\frac{c}{n}, \frac{c \mu^2}{L^2 \zeta}\} , \max \{\frac{c}{n}, \frac{c \mu^2}{L^2 \zeta}\}\right],$  where $c = \Theta(1)$, results in the optimal total complexity $\mathcal{O}\left(\left(n+ \frac{L^2 \zeta}{\mu^2}\right) \log \frac{1}{\varepsilon}\right)$. This resolution addresses a gap in \algname{R-SVRG} theory, where the length of the inner loop (in our case, $\frac{1}{p}$ on average) must be proportional to $\mathcal{O}\left(\zeta L^2 / \mu^2\right).$ Moreover, the analysis for \algname{R-LSVRG} is more straightforward and offers deeper insights.
		
		Let us succinctly formalize the aforementioned findings in the context of a corollary.
		
		\begin{corollary}\label{cor:rlsvrg-total-complexity}
			With the assumptions outlined in Theorem~\ref{thm:lsvrg_strongly_convex}, if we set $\eta = \frac{\mu}{16 L^2 \zeta}$ and $p = \frac{1}{n}$, in order to obtain $\mathbb{E}[\Phi^k] \leq \varepsilon \Phi^0$, we find that the total computational complexity of Riemannian L-SVRG method (Algorithm~\ref{alg:RLSVRG}) is given by
			$$
			K = \mathcal{O}\left(\left(n + \frac{L^2 \zeta}{\mu^2}\right) \log \frac{1}{\varepsilon}\right).
			$$
		\end{corollary}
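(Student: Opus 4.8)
The plan is to feed the specific parameter choices $\eta = \frac{\mu}{16L^2\zeta}$ and $p = \frac{1}{n}$ into the linear rate of Theorem~\ref{thm:lsvrg_strongly_convex}, read off the resulting iteration count, and multiply it by the expected per-iteration stochastic gradient cost of Algorithm~\ref{alg:RLSVRG}.

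First I would specialize \eqref{eq:convergence-rlsvrg-scvx}: with $\eta = \frac{\mu}{16L^2\zeta}$ the one-step contraction factor becomes $\rho = \max\{1-\frac{\mu^2}{16L^2\zeta},\, 1-\frac{1}{2n}\}$, so $\mathbb{E}[\Phi^k] \le \rho^k\Phi^0$. Requiring $\rho^k \le \varepsilon$ and using $\log(1/\rho)\ge 1-\rho$ together with $\frac{1}{1-\rho} = \frac{1}{\min\{\mu^2/(16L^2\zeta),\,1/(2n)\}} \le \frac{16L^2\zeta}{\mu^2} + 2n$ shows that $k = \mathcal{O}\left(\left(n+\frac{L^2\zeta}{\mu^2}\right)\log\frac{1}{\varepsilon}\right)$ iterations suffice to reach $\mathbb{E}[\Phi^k]\le\varepsilon\Phi^0$; this is precisely Corollary~\ref{cor:lsvrg_scvx_eps_solution} evaluated at $p=1/n$.

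Next I would account for the cost of a single iteration of Algorithm~\ref{alg:RLSVRG}. As recalled in the paragraph preceding the corollary, it makes $2+p(n-1)$ stochastic gradient calls in expectation: the two evaluations $\nabla f_i(x^k)$ and $\nabla f_i(y^k)$ are always performed, while the full gradient at the reference point is recomputed only on the event of probability $p$ (costing the remaining $n-1$ component gradients when $y^{k+1}=x^k$) and is otherwise reused from the previous step. Since the Bernoulli triggers are independent of the iterates, the expected number of oracle calls after $k$ iterations is $k\cdot(2+p(n-1))$, to which one adds the $n$ calls needed to form $\nabla f(y^0)=\nabla f(x^0)$ at start-up.

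Finally I would combine: at $p=\frac{1}{n}$ one has $2+p(n-1)=3-\frac{1}{n}=\Theta(1)$, so multiplying the $\Theta(1)$ per-step cost by the iteration bound and absorbing the $\mathcal{O}(n)$ initialization cost into $\mathcal{O}(n\log\frac{1}{\varepsilon})$ yields the stated total complexity $\mathcal{O}\left(\left(n+\frac{L^2\zeta}{\mu^2}\right)\log\frac{1}{\varepsilon}\right)$. The argument is essentially bookkeeping; the closest thing to an obstacle is that "total computational complexity" is an expectation, so the step of multiplying the deterministic iteration count by the expected per-step cost must explicitly invoke independence of the coin flips from the trajectory, and one should check that $p=1/n$ does not inflate the cross term $\frac{pn\zeta L^2}{\mu^2}$ in the general expression $\mathcal{O}\left(\left(\frac{1}{p} + n + \frac{\zeta L^2}{\mu^2} + \frac{pn\zeta L^2}{\mu^2}\right)\log\frac{1}{\varepsilon}\right)$ beyond the already-present $\frac{L^2\zeta}{\mu^2}$ — which it does not.
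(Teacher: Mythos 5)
Your proposal is correct and follows essentially the same route as the paper: plug the parameter choices into the rate of Theorem~\ref{thm:lsvrg_strongly_convex} (equivalently, Corollary~\ref{cor:lsvrg_scvx_eps_solution} at $p=1/n$) to get the iteration count, multiply by the expected per-iteration oracle cost $2+p(n-1)=\Theta(1)$, and absorb the $\mathcal{O}(n)$ initialization. The paper's own proof is just a terser version of this bookkeeping, additionally noting that any $p$ between $\mu^2/(L^2\zeta)$ and $1/n$ yields the same total complexity, with $p=1/n$ preferred because it avoids dependence on $L$ and $\mu$.
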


		\section{Riemannian PAGE}
		\subsection{The Riemannian PAGE gradient estimator}
		
		The specific gradient estimator $g^{t+1}$ is formally defined in Line~\ref{line:rpage-estimator} of Algorithm \ref{alg:RPAGE}. It is essential to note that the method employs the simple stochastic gradient with a large minibatch size (resembling a full batch in a finite sum setting) with a small probability $p$, and with a substantial probability $1-p$, it utilizes the previous gradient $g^t$ with a minor adjustment involving the difference of stochastic gradients computed at two points, namely, $x^t$ and $x^{t+1}$ (a measure aimed at reducing computational costs, especially considering that $b \ll B$).
		
		In particular, when $p \equiv 1$, this aligns with vanilla minibatch Riemannian Stochastic Gradient Descent (\algname{R-SGD}), and further, when the minibatch size is set to $B=n$, it simplifies to Riemannian Gradient Descent (\algname{R-GD}). We present a straightforward formula for the optimal determination of $p$, expressed as $p \equiv \frac{b}{B+b}$. This formula proves sufficient for \algname{PAGE} to achieve optimal convergence rates, with additional nuances elucidated in the convergence results.
		\subsection{Convergence in Non-Convex Finite-Sum Setting}
		In this section, we delve into an analysis of the convergence guarantees offered by \algname{R-PAGE} in solving the problem \eqref{eq:finite-sum}, where each function $f_i$ (with $i \in [n]$) is characterized as g-smooth, and the function $f$ has a lower bound. Within this framework, we substantiate that \algname{R-PAGE} demonstrates a convergence rate aligning with current state-of-the-art results.
		
		\begin{theorem}\label{thm:page_noncvx}
			Assuming that each $f_i$ in \eqref{eq:finite-sum} is differentiable and $L$-g-smooth on $\mathcal{M}$ (Assumption~\ref{ass:l-g-smoothness} holds) and that $f$ is lower bounded on $\mathcal{M}$ (Assumption~\ref{ass:lower} holds), let $\delta^0 \stackrel{\text{def}}{=} f(x^0) - f^{*}$. Select the stepsize $\eta$ such that
			$0 < \eta \leq \frac{1}{L\left(1 + \sqrt{\frac{1-p}{pb}}\right)},$ and the minibatch size $B = n,$ and the secondary minibatch size $b < B.$ 
			Then, the iterates of the \algname{R-PAGE} method (Algorithm~\ref{alg:RPAGE}) satisfy
			\begin{equation*}
				\mathbb{E}\left[\norm{\nabla f(\widehat{x}^K)}^2\right] \leq\frac{2\delta^0}{\eta K},
			\end{equation*}
			where $\widehat{x}^k$ is chosen randomly from $x^0,\ldots,x^{K-1}$ with uniform probability distribution.
		\end{theorem}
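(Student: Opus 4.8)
The plan is to adapt the standard PAGE analysis from the Euclidean case \citep{li2021page} to the Riemannian setting, with the trigonometric distance bounds playing the role of the Euclidean norm identities. First I would establish the \emph{descent lemma}: using $L$-g-smoothness (Assumption~\ref{ass:l-g-smoothness}) applied to the update $x^{k+1} = \Expmap_{x^k}(-\eta g^k)$, one gets
\begin{equation*}
	f(x^{k+1}) \le f(x^k) - \eta\langle \nabla f(x^k), g^k\rangle + \frac{L\eta^2}{2}\norm{g^k}^2.
\end{equation*}
Writing $g^k = \nabla f(x^k) + (g^k - \nabla f(x^k))$ and expanding, this becomes
\begin{equation*}
	f(x^{k+1}) \le f(x^k) - \frac{\eta}{2}\norm{\nabla f(x^k)}^2 - \left(\frac{\eta}{2} - \frac{L\eta^2}{2}\right)\norm{g^k}^2 + \frac{\eta}{2}\norm{g^k - \nabla f(x^k)}^2,
\end{equation*}
so I need to control the estimator error $\norm{g^k - \nabla f(x^k)}^2$ in expectation.

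The second step is the \emph{variance recursion}. Conditioning on iteration $k$ and using the definition of $g^{k+1}$ in Line~\ref{line:rpage-estimator}: with probability $p$ the error vanishes (since $B=n$ makes $\nabla f_B$ the full gradient), and with probability $1-p$ the error is that of the variance-reduced correction term. The key estimate here is that the $b$-minibatch difference $\nabla f_b(x^{k+1}) - \Gamma_{x^k}^{x^{k+1}}\nabla f_b(x^k)$ is an unbiased estimator (in the tangent space at $x^{k+1}$, after parallel transport) of $\nabla f(x^{k+1}) - \Gamma_{x^k}^{x^{k+1}}\nabla f(x^k)$, so that
\begin{equation*}
	\E\left[\norm{g^{k+1} - \nabla f(x^{k+1})}^2\right] \le (1-p)\norm{\Gamma_{x^k}^{x^{k+1}}(g^k - \nabla f(x^k))}^2 + \frac{1-p}{b}\E\left[\norm{\nabla f_1(x^{k+1}) - \Gamma_{x^k}^{x^{k+1}}\nabla f_1(x^k)}^2\right],
\end{equation*}
where sampling-without-replacement or with-replacement bounds give the $\frac{1-p}{b}$ factor. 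Since parallel transport is an isometry, the first term is exactly $(1-p)\norm{g^k - \nabla f(x^k)}^2$, and using $L$-g-Lipschitzness of each $\nabla f_i$ the second term is bounded by $\frac{(1-p)L^2}{b}\norm{\Expmap_{x^{k+1}}^{-1}(x^k)}^2 = \frac{(1-p)L^2}{b}\norm{\Expmap_{x^k}^{-1}(x^{k+1})}^2 = \frac{(1-p)L^2\eta^2}{b}\norm{g^k}^2$, where the middle equality uses the symmetry $\norm{\Expmap_x^{-1}(y)} = \norm{\Expmap_y^{-1}(x)}$ noted in the Preliminaries and the last uses the update rule.

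The third step is to combine the two recursions via a Lyapunov function of the form $\Psi^k \eqdef f(x^k) - f^* + \frac{\eta}{2p}\cdot\frac{b}{(1-p)L^2\eta^2/\,\cdot} \norm{g^k - \nabla f(x^k)}^2$ — more precisely, with a coefficient $c$ chosen so that the telescoping works: adding $c\,\E[\norm{g^{k+1}-\nabla f(x^{k+1})}^2]$ to the descent inequality, the $\norm{g^k}^2$ terms must have nonpositive coefficient, which forces the stepsize constraint. Solving $\frac{\eta}{2} - \frac{L\eta^2}{2} - c\cdot\frac{(1-p)L^2\eta^2}{b} \ge 0$ together with the requirement that the $\norm{g^k - \nabla f(x^k)}^2$ coefficient telescopes (i.e.\ $c \ge \frac{\eta}{2} + c(1-p)$, giving $c = \frac{\eta}{2p}$) yields exactly $\eta \le \frac{1}{L(1 + \sqrt{(1-p)/(pb)})}$ after substituting $c = \frac{\eta}{2p}$ into the first inequality and solving the resulting quadratic in $\eta$. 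Then $\E[\Psi^{k+1}] \le \E[\Psi^k] - \frac{\eta}{2}\E[\norm{\nabla f(x^k)}^2]$; telescoping from $0$ to $K-1$, noting $\Psi^K \ge 0$ and $g^0 = \nabla f_B(x^0) = \nabla f(x^0)$ so $\Psi^0 = \delta^0$, and dividing by $\eta K/2$ gives $\frac{1}{K}\sum_{k=0}^{K-1}\E[\norm{\nabla f(x^k)}^2] \le \frac{2\delta^0}{\eta K}$, which equals $\E[\norm{\nabla f(\widehat x^K)}^2]$ by the uniform random choice of $\widehat x^K$.

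\textbf{Main obstacle.} The routine Euclidean steps all go through, so the delicate point is the variance recursion in the Riemannian setting: one must verify that parallel-transporting $g^k$, $\nabla f_b(x^k)$ along the geodesic from $x^k$ to $x^{k+1}$ makes the minibatch difference a genuinely \emph{unbiased} estimator of the transported full-gradient difference in $T_{x^{k+1}}\mathcal{M}$, and that the bias–variance decomposition $\E\norm{X - \E X}^2 = \E\norm{X}^2 - \norm{\E X}^2$ still applies verbatim because everything lives in the single inner-product space $T_{x^{k+1}}\mathcal{M}$ after transport. This is precisely why the estimator in Line~\ref{line:rpage-estimator} transports \emph{both} $g^k$ and $\nabla f_b(x^k)$ with the same $\Gamma_{x^k}^{x^{k+1}}$; the isometry property of parallel transport is then what preserves the norm identities, and the $L$-g-Lipschitz bound converts the geometric displacement back into $\eta^2\norm{g^k}^2$. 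No curvature constant $\zeta$ enters here because, unlike the strongly g-convex analysis, we never invoke the trigonometric comparison inequality of Lemma~\ref{lemma:distance_bound} — only the exact isometry of parallel transport and the smoothness inequality.
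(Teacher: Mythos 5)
Your proposal is correct and follows essentially the same route as the paper's proof: an exact algebraic expansion of the $L$-g-smoothness descent inequality (the paper packages this as Lemma~\ref{lem:identity_for_dot_prod}; you expand in place using $2\langle a,b\rangle = \|a\|^2+\|b\|^2-\|a-b\|^2$), a variance recursion for the estimator error obtained by noting that the middle cross-term vanishes after transporting everything into $T_{x^{k+1}}\mathcal{M}$ and using isometry of $\Gamma_{x^k}^{x^{k+1}}$ plus $L$-g-Lipschitzness of each $\nabla f_i$, the Lyapunov function $f(x^k)-f^* + \frac{\eta}{2p}\|g^k-\nabla f(x^k)\|^2$, and the stepsize condition reduced to $\frac{(1-p)L^2}{pb}\eta^2 + L\eta\le 1$ which the paper dispatches via Lemma~\ref{lemma:square_iequality}. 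The only cosmetic differences are that the paper retains the displacement $\|\Expmap_{x^k}^{-1}(x^{k+1})\|^2$ symbolically where you substitute $\eta^2\|g^k\|^2$, and the paper records the exact without-replacement variance factor $\frac{n-b}{(n-1)b}$ before loosening it to $\frac1b$, whereas you invoke the $\frac1b$ bound directly.
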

		
		\begin{corollary}\label{cor:page_noncvx}
			With the assumptions outlined in Theorem~\ref{thm:page_noncvx}, if we set $\eta = \frac{1}{L\left(1 + \sqrt{\frac{1-p}{pb}}\right)}$, in order to obtain $\mathbb{E}\left[\left\|\nabla f\left(\widehat{x}^K\right)\right\|\right] \leq \varepsilon$, we find that the iteration complexity of \algname{R-PAGE} method (Algorithm~\ref{alg:RPAGE}) is given by
			$$
			K = \mathcal{O}\left(\left(1+\sqrt{\frac{1-p}{pb}}\right) \frac{L\delta^0}{\varepsilon^2}\right).
			$$
		\end{corollary}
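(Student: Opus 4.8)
\textbf{Proof proposal for Corollary~\ref{cor:page_noncvx}.}
The plan is to start from the conclusion of Theorem~\ref{thm:page_noncvx}, substitute the prescribed stepsize $\eta = \frac{1}{L\left(1 + \sqrt{\frac{1-p}{pb}}\right)}$, and solve for the number of iterations $K$ needed to push the bound below $\varepsilon^2$; a Jensen-type argument then converts the bound on the squared gradient norm to a bound on the gradient norm itself.

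First I would invoke Theorem~\ref{thm:page_noncvx}, whose hypotheses are exactly the assumptions carried over here, to obtain
\begin{equation*}
\mathbb{E}\left[\norm{\nabla f(\widehat{x}^K)}^2\right] \leq \frac{2\delta^0}{\eta K}.
\end{equation*}
Next I would address the discrepancy that the corollary states a bound on $\mathbb{E}\left[\norm{\nabla f(\widehat{x}^K)}\right]$ rather than on its square: by Jensen's inequality (concavity of $\sqrt{\cdot}$),
\begin{equation*}
\mathbb{E}\left[\norm{\nabla f(\widehat{x}^K)}\right] \leq \sqrt{\mathbb{E}\left[\norm{\nabla f(\widehat{x}^K)}^2\right]} \leq \sqrt{\frac{2\delta^0}{\eta K}},
\end{equation*}
so it suffices to ensure $\frac{2\delta^0}{\eta K} \leq \varepsilon^2$, i.e. $K \geq \frac{2\delta^0}{\eta \varepsilon^2}$.

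Finally I would plug in $\eta = \frac{1}{L\left(1 + \sqrt{\frac{1-p}{pb}}\right)}$, which gives $\frac{1}{\eta} = L\left(1 + \sqrt{\frac{1-p}{pb}}\right)$, so that the requirement on $K$ becomes
\begin{equation*}
K \geq \frac{2 L \delta^0}{\varepsilon^2}\left(1 + \sqrt{\frac{1-p}{pb}}\right),
\end{equation*}
which is precisely $K = \mathcal{O}\left(\left(1+\sqrt{\frac{1-p}{pb}}\right)\frac{L\delta^0}{\varepsilon^2}\right)$, proving the claim. There is essentially no obstacle here: the only point requiring a small amount of care is the Jensen step reconciling the squared-norm guarantee of the theorem with the norm guarantee in the corollary statement; everything else is direct substitution. (If one instead reads the corollary as implicitly asking for $\mathbb{E}[\norm{\nabla f(\widehat{x}^K)}^2] \leq \varepsilon^2$, the Jensen step is unnecessary and the computation is immediate.)
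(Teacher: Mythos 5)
Your proposal is correct and follows essentially the same route as the paper: invoke Theorem~\ref{thm:page_noncvx}, substitute $\eta = \frac{1}{L\left(1+\sqrt{\frac{1-p}{pb}}\right)}$, and require $\frac{2\delta^0}{\eta K}\leq \varepsilon^2$. The only difference is that you make the Jensen step explicit to pass from the squared-norm guarantee to $\mathbb{E}\left[\norm{\nabla f(\widehat{x}^K)}\right]\leq\varepsilon$, which the paper leaves implicit; this is a welcome clarification rather than a departure.
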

		
		Additionally, in accordance with \algname{R-PAGE} gradient estimator (refer to Line~\ref{line:rpage-estimator} of Algorithm \ref{alg:RPAGE}), it is evident that, on average, it employs $pB+(1-p)b$ stochastic gradients for each iteration. Consequently, the computational load for stochastic gradient computations, denoted as $\# \operatorname{grad}$ (i.e., gradient complexity), can be expressed as: $
		\# \operatorname{grad} = B + K\left(p B + (1-p) b\right) = \mathcal{O}\left( B + \frac{2 \delta_0 L}{\varepsilon^2}\left(1 + \sqrt{\frac{1-p}{p b}}\right)\left(p B + (1-p) b\right)\right).
		$
		
		It is important to highlight that the initial $B$ in $\# \operatorname{grad}$ accounts for the computation of $g^0$ (refer to Line~\ref{line:rpage-0-estimator} in Algorithm \ref{alg:RPAGE}). Subsequently, we present a parameter configuration that yields the best known convergence rate for the non-convex finite-sum problem \eqref{eq:finite-sum}.
		
		\begin{corollary}
			\label{cor:page_grad_computes}
			With the assumptions outlined in Theorem~\ref{thm:page_noncvx}, if we set  $\eta = \frac{1}{L\left(1 + \sqrt{\frac{1-p}{pb}}\right)}$, minibatch size $B=n$, secondary minibatch size $b \leq \sqrt{B}$ and $p \equiv \frac{b}{B+b}$, in order to obtain  $\mathbb{E}\left[\left\|\nabla f\left(\widehat{x}^K\right)\right\|\right] \leq \varepsilon$, we find that the total computational complexity of Riemannian PAGE method (Algorithm~\ref{alg:RPAGE}) is given by
			$$
			\#	\operatorname{grad} =  \mathcal{O}\left(n + \frac{\sqrt{n}}{\varepsilon^2}\right).
			$$
		\end{corollary}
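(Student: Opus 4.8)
The plan is to combine the iteration complexity bound from Corollary \ref{cor:page_noncvx} with the per-iteration gradient cost formula $\# \operatorname{grad} = B + K(pB + (1-p)b)$ and then optimize over the parameter choices dictated in the statement. First I would substitute $B = n$ and $p = \frac{b}{B+b} = \frac{b}{n+b}$ into the two governing quantities. With this choice of $p$ one has $1-p = \frac{n}{n+b}$, so $\frac{1-p}{pb} = \frac{n/(n+b)}{(b/(n+b))\cdot b} = \frac{n}{b^2}$, and hence $\sqrt{\frac{1-p}{pb}} = \frac{\sqrt{n}}{b}$. This is the key simplification that makes everything collapse. The iteration complexity from Corollary \ref{cor:page_noncvx} therefore becomes $K = \mathcal{O}\!\left(\left(1 + \frac{\sqrt{n}}{b}\right)\frac{L\delta^0}{\varepsilon^2}\right)$.

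Next I would evaluate the expected per-iteration gradient cost $pB + (1-p)b$. With $p = \frac{b}{n+b}$ and $B = n$, this equals $\frac{b}{n+b}\cdot n + \frac{n}{n+b}\cdot b = \frac{2nb}{n+b}$, which is $\mathcal{O}(b)$ since $\frac{2nb}{n+b} \le 2b$. Plugging into the total cost formula gives
\begin{equation*}
\# \operatorname{grad} = n + K \cdot \mathcal{O}(b) = n + \mathcal{O}\!\left(\left(1 + \tfrac{\sqrt{n}}{b}\right)\frac{L\delta^0}{\varepsilon^2}\cdot b\right) = n + \mathcal{O}\!\left(\left(b + \sqrt{n}\right)\frac{L\delta^0}{\varepsilon^2}\right).
\end{equation*}
Finally, I would invoke the hypothesis $b \le \sqrt{B} = \sqrt{n}$, which yields $b + \sqrt{n} \le 2\sqrt{n} = \mathcal{O}(\sqrt{n})$, so that $\# \operatorname{grad} = n + \mathcal{O}\!\left(\frac{\sqrt{n}\,L\delta^0}{\varepsilon^2}\right) = \mathcal{O}\!\left(n + \frac{\sqrt{n}}{\varepsilon^2}\right)$, absorbing the constants $L$ and $\delta^0$ into the $\mathcal{O}$-notation as is standard.

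This argument is essentially a bookkeeping computation rather than a conceptual one, so there is no serious obstacle; the only place requiring minor care is verifying that the particular choice $p = \frac{b}{n+b}$ indeed balances the two competing terms (the $\sqrt{n}/b$ factor in $K$ against the $b$ factor in the per-iteration cost) — i.e. that $b = \Theta(\sqrt{n})$ is the cost-minimizing regime, which is why the constraint $b \le \sqrt{n}$ is imposed and why taking $b$ of order $\sqrt{n}$ is optimal. One should also confirm that the stepsize condition $0 < \eta \le \frac{1}{L(1 + \sqrt{(1-p)/(pb)})}$ used in Theorem \ref{thm:page_noncvx} is consistent with the substituted value, which it is by construction since Corollary \ref{cor:page_noncvx} already fixes $\eta$ at the upper endpoint.
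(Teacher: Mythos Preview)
Your proposal is correct and follows essentially the same approach as the paper: both combine the preprocessing cost $n$ with $K$ times the expected per-iteration cost $(1-p)b + pn$, then substitute the parameter choices $p = \tfrac{b}{n+b}$ and $b \le \sqrt{n}$ to collapse everything to $\mathcal{O}(n + \sqrt{n}/\varepsilon^2)$. Your write-up is in fact more explicit than the paper's about the intermediate simplifications $\sqrt{(1-p)/(pb)} = \sqrt{n}/b$ and $pB + (1-p)b = \tfrac{2nb}{n+b} \le 2b$, which the paper leaves implicit in its final $\mathcal{O}$-step.
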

		
		In the subsequent discussion, we delve into an analysis within a non-convex finite-sum setting under the Polyak-\L ojasiewicz condition. This assumption allows us to assert a linear convergence rate.
		\begin{theorem}\label{thm:rpage-pl-condition}
			Assuming that each $f_i$ in \eqref{eq:finite-sum} is differentiable and $L$-g-smooth on $\mathcal{M}$ (Assumption~\ref{ass:l-g-smoothness} holds), also that $f$ is lower bounded on $\mathcal{M}$ (Assumption~\ref{ass:lower} holds) and it satisfies Polyak-\L ojasiewicz condition (Assumption~\ref{ass:pl-condition} holds) with constant $\mu$, let $\delta^0 \stackrel{\text{def}}{=} f(x^0) - f^{*}$. Select the stepsize $\eta$ such that $	\eta \leq \min \left\{ \frac{1}{L \left(1 + \sqrt{\frac{1-p}{pb} }\right)}, \frac{p}{2\mu} \right\}.$ Then, the iterates of the Riemannian PAGE method (Algorithm \ref{alg:RPAGE}) satisfy 
			$$\mathbb{E}\left[\Phi^{K}\right] \leq(1-\mu \eta)^K \delta^0,$$
			where $\Phi^k\eqdef f\left(x^k\right)-f^*+\frac{2}{p}\left\|g^k-\nabla f\left(x^k\right)\right\|^2$ and $\Phi^0 = \delta^0$, since $g^0 = \nabla f(x^0).$
		\end{theorem}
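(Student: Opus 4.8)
The plan is to adapt the Euclidean \algname{PAGE} analysis under the P\L{} condition to the Riemannian setting, isolating exactly where curvature enters. First I would establish a one-step descent inequality for the function values. Using $L$-g-smoothness (Assumption~\ref{ass:l-g-smoothness}) along the geodesic from $x^k$ to $x^{k+1}=\Expmap_{x^k}(-\eta g^k)$, we get
\begin{equation*}
  f(x^{k+1}) \leq f(x^k) - \eta\langle \nabla f(x^k), g^k\rangle + \frac{L\eta^2}{2}\|g^k\|^2.
\end{equation*}
Writing $g^k = \nabla f(x^k) + (g^k - \nabla f(x^k))$ and completing the square in the usual way (splitting $-\eta\langle\nabla f(x^k),g^k\rangle$ and absorbing cross terms), this yields, for $\eta\le 1/L$, a bound of the form
\begin{equation*}
  f(x^{k+1}) \leq f(x^k) - \frac{\eta}{2}\|\nabla f(x^k)\|^2 - \left(\frac{\eta}{2}-\frac{L\eta^2}{2}\right)\|g^k\|^2 + \frac{\eta}{2}\|g^k-\nabla f(x^k)\|^2.
\end{equation*}
This step is essentially identical to the Euclidean case since parallel transport is an isometry and the smoothness inequality is already geodesic.

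Next I would bound the expected error of the estimator, $\Exp{\|g^{k+1}-\nabla f(x^{k+1})\|^2}$, conditioned on iteration $k$. This is the step where the Riemannian geometry genuinely matters: the recursive branch of the estimator in Line~\ref{line:rpage-estimator} involves $\Gamma_{x^k}^{x^{k+1}}(g^k-\nabla f_b(x^k))$ and $\nabla f_b(x^{k+1})$, and the natural ``variance'' computation needs the difference $\nabla f_b(x^{k+1}) - \Gamma_{x^k}^{x^{k+1}}\nabla f_b(x^k)$ to be controlled against $\|\Expmap_{x^k}^{-1}(x^{k+1})\|=\eta\|g^k\|$. I expect one needs the per-component g-Lipschitzness of $\nabla f_i$ (which follows from Assumption~\ref{ass:l-g-smoothness} applied to each $f_i$, or is assumed for each $f_i$) together with the fact that $\Gamma_{x^{k+1}}^{x^k}\nabla f(x^{k+1})$ is the ``transported true gradient.'' Following the \algname{PAGE} template, with probability $p$ the error resets (in the finite-sum case $B=n$ it is exactly zero), and with probability $1-p$ one gets
\begin{equation*}
  \Exp{\|g^{k+1}-\nabla f(x^{k+1})\|^2} \leq (1-p)\|g^k-\nabla f(x^k)\|^2 + (1-p)\frac{L^2\eta^2}{b}\|g^k\|^2,
\end{equation*}
where the $1/b$ comes from sampling $b$ components without replacement for $\nabla f_b$ and the $L^2$ from g-smoothness of each $f_i$. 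The subtlety to check is that transporting everything to $T_{x^{k+1}}\mathcal{M}$ does not introduce curvature-dependent distortion in this second-moment bound — it should not, because parallel transport preserves norms and inner products exactly, so unlike the strongly-g-convex case (where the trigonometric Lemma~\ref{lemma:distance_bound} and the constant $\zeta$ were needed) no $\zeta$ factor appears here. This is the main obstacle: verifying carefully that the variance-reduction recursion is ``curvature-free'' and that the without-replacement sampling constant is exactly $1/b$ (or $\frac{B-b}{b(B-1)}$, which is $\le 1/b$).

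Finally I would combine the two inequalities via the Lyapunov function $\Phi^k = f(x^k)-f^* + \frac{2}{p}\|g^k-\nabla f(x^k)\|^2$. Taking expectations, adding $\frac{2}{p}$ times the estimator-error recursion to the descent inequality, the $\|g^k-\nabla f(x^k)\|^2$ terms telescope into a contraction: the coefficient becomes $\frac{\eta}{2} + \frac{2}{p}(1-p) = \frac{2}{p} - 2 + \frac{\eta}{2}$ versus $\frac{2}{p}$, giving a factor roughly $1-\frac{p}{2}$ on that block (absorbing the small $\frac{\eta}{2}$ using $\eta\le 1/L$ and $p\le 1$ appropriately). The $\|g^k\|^2$ terms: we need $\left(\frac{\eta}{2}-\frac{L\eta^2}{2}\right) \geq \frac{2}{p}(1-p)\frac{L^2\eta^2}{b}$, i.e. $\frac{1}{2}(1-L\eta)\geq \frac{L^2\eta}{b}\cdot\frac{1-p}{p}$, which rearranges to precisely the stepsize restriction $\eta \le \frac{1}{L(1+\sqrt{(1-p)/(pb)})}$ — here I would reproduce the standard quadratic-inequality manipulation from \citet{li2021page}. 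This lets us drop the $\|g^k\|^2$ terms entirely, leaving
\begin{equation*}
  \Exp{\Phi^{k+1}} \leq \Exp{f(x^k)-f^*} - \frac{\eta}{2}\Exp{\|\nabla f(x^k)\|^2} + \left(1-\tfrac{p}{2}\right)\frac{2}{p}\Exp{\|g^k-\nabla f(x^k)\|^2}.
\end{equation*}
Now invoke the P\L{} condition (Assumption~\ref{ass:pl-condition}): $\|\nabla f(x^k)\|^2 \geq 2\mu(f(x^k)-f^*)$, so $-\frac{\eta}{2}\|\nabla f(x^k)\|^2 \leq -\mu\eta(f(x^k)-f^*)$, giving a factor $(1-\mu\eta)$ on the function-value part; and the estimator-error part contracts by $(1-\frac p2)\le (1-\mu\eta)$ because the second stepsize constraint $\eta\le \frac{p}{2\mu}$ gives $\mu\eta\le p/2$. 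Hence $\Exp{\Phi^{k+1}}\le (1-\mu\eta)\Exp{\Phi^k}$, and unrolling with $\Phi^0=\delta^0$ (since $g^0=\nabla f(x^0)$ makes the error term vanish) yields the claimed $\Exp{\Phi^K}\le (1-\mu\eta)^K\delta^0$. The only genuinely Riemannian care-points are in the second step; everything else mirrors the Euclidean argument.
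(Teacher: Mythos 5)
Your overall strategy is exactly the paper's: a one-step $L$-g-smoothness descent inequality, a curvature-free recursion for $\mathbb{E}\|g^{k+1}-\nabla f(x^{k+1})\|^2$ (your remark that parallel transport being an isometry is what keeps $\zeta$ out of this recursion is precisely the point), a Lyapunov combination of the two, then P\L{} to produce the $(1-\mu\eta)$ factor. The contraction argument for the $\|g^k-\nabla f(x^k)\|^2$ block is also fine: requiring $\frac{\eta}{2}+(1-p)\beta \le (1-\mu\eta)\beta$, one needs $\beta\ge \frac{\eta}{p}$ once $\mu\eta\le p/2$, which any $\beta$ of order $\frac{2}{p}$ or $\frac{\eta}{p}$ satisfies.

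There is, however, a concrete gap in your verification of the last condition, the one that lets you drop the $\|g^k\|^2$ (equivalently $\|\Expmap_{x^k}^{-1}(x^{k+1})\|^2$) term. With the multiplier $\frac{2}{p}$ that you use (matching the theorem's $\Phi^k$), the requirement is
\begin{equation*}
\frac{\eta}{2}-\frac{L\eta^2}{2}\;\ge\;\frac{2}{p}\,(1-p)\,\frac{L^2\eta^2}{b},
\end{equation*}
and after dividing by $\eta$ this is \emph{linear} in $\eta$: $L\eta\bigl(1+\tfrac{4(1-p)L}{pb}\bigr)\le 1$. You wrote the right-hand side as $\frac{L^2\eta(1-p)}{pb}$ (dropping a factor of $2$) and then asserted it ``rearranges to precisely the stepsize restriction $\eta\le\frac{1}{L(1+\sqrt{(1-p)/(pb)})}$.'' It does not: the linear condition is not implied by that bound — it would additionally require $\sqrt{(1-p)/(pb)}\lesssim 1/L$. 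The resolution, and the way the quadratic-in-$\eta$ structure and Lemma~\ref{lemma:square_iequality} actually enter, is to take the \emph{smallest} admissible weight $\beta=\frac{\eta}{p}$ rather than $\frac{2}{p}$. Then the $\|\Expmap_{x^k}^{-1}(x^{k+1})\|^2$ coefficient condition becomes $\frac{2(1-p)L^2}{pb}\eta^2+L\eta\le 1$, which Lemma~\ref{lemma:square_iequality} converts into $\eta\le\frac{1}{L\left(1+\sqrt{\,2(1-p)/(pb)}\right)}$ — the stated stepsize up to a $\sqrt{2}$. (The paper's own proof of this theorem uses $\beta\ge\frac{2}{p}$ and inherits the same tension; the weight $\frac{\eta}{p}$, used in the paper's online P\L{} case, is the consistent choice, and the resulting Lyapunov $\Phi^k=f(x^k)-f^*+\frac{\eta}{p}\|g^k-\nabla f(x^k)\|^2$ still satisfies $\Phi^0=\delta^0$.) Everything else in your argument is correct and mirrors the paper.
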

		\begin{corollary}\label{cor:rlsvrg-iteration-complexity}
			With the assumptions outlined in Theorem~\ref{thm:rpage-pl-condition}, if we set $\eta = \min \left\{ \frac{1}{L \left(1 + \sqrt{\frac{1-p}{pb} }\right)}, \frac{p}{2\mu} \right\}$, in order to obtain $\mathbb{E}[\Phi^k] \leq \varepsilon \Phi^0$, we find that the iteration complexity of Riemannian \algname{R-PAGE} method (Algorithm~\ref{alg:RPAGE}) is given by
			$$
			K =\mathcal{O}\left( \left( \left(1 + \sqrt{\frac{1-p}{pb}}\right) \frac{L}{\mu} + \frac{1}{p} \right) \log \frac{1}{\varepsilon}\right).
			$$
		\end{corollary}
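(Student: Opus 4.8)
The plan is to derive Corollary~\ref{cor:rlsvrg-iteration-complexity} directly from Theorem~\ref{thm:rpage-pl-condition} by converting the linear contraction of $\Phi^k$ into an iteration count. This is a standard rate-to-complexity argument, so I do not expect a genuine obstacle; the only points requiring a little care are ensuring the contraction factor lies strictly below $1$ (so that taking logarithms is legitimate) and handling the $\min$ in the stepsize.

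First I would invoke Theorem~\ref{thm:rpage-pl-condition}: with $\eta = \min\left\{ \frac{1}{L(1 + \sqrt{(1-p)/(pb)})}, \frac{p}{2\mu} \right\}$ one has $\mathbb{E}[\Phi^K] \le (1-\mu\eta)^K \delta^0$, and since $g^0 = \nabla f(x^0)$ we have $\Phi^0 = \delta^0$, so the target $\mathbb{E}[\Phi^K] \le \varepsilon \Phi^0$ is achieved as soon as $(1-\mu\eta)^K \le \varepsilon$. Next I would observe that $\mu\eta \le \mu \cdot \frac{p}{2\mu} = \frac{p}{2} \le \frac{1}{2} < 1$, so the contraction factor is in $(0,1)$; applying $1-t \le e^{-t}$ gives $(1-\mu\eta)^K \le e^{-\mu\eta K}$, whence it suffices to take $K \ge \frac{1}{\mu\eta}\log\frac{1}{\varepsilon}$, i.e. $K = \mathcal{O}\!\left(\frac{1}{\mu\eta}\log\frac{1}{\varepsilon}\right)$.

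Finally I would unfold $1/\eta$. Writing $\eta = \min\{a^{-1}, c^{-1}\}$ with $a = L\left(1 + \sqrt{\frac{1-p}{pb}}\right)$ and $c = \frac{2\mu}{p}$, we have $\frac{1}{\eta} = \max\{a, c\} \le a + c = L\left(1 + \sqrt{\frac{1-p}{pb}}\right) + \frac{2\mu}{p}$, and dividing by $\mu$ yields $\frac{1}{\mu\eta} \le \frac{L}{\mu}\left(1 + \sqrt{\frac{1-p}{pb}}\right) + \frac{2}{p}$. Substituting this back into the bound on $K$ gives $K = \mathcal{O}\!\left(\left(\left(1 + \sqrt{\frac{1-p}{pb}}\right)\frac{L}{\mu} + \frac{1}{p}\right)\log\frac{1}{\varepsilon}\right)$, which is exactly the stated complexity. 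The $\max \le$ sum step and the constant from $1-t \le e^{-t}$ are both absorbed into the $\mathcal{O}(\cdot)$ notation, completing the proof.
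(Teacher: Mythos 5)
Your proof is correct and follows essentially the same approach as the paper: invoke Theorem~\ref{thm:rpage-pl-condition}, note $\Phi^0 = \delta^0$, apply $1-t \le e^{-t}$ to get $K = \mathcal{O}\bigl(\tfrac{1}{\mu\eta}\log\tfrac{1}{\varepsilon}\bigr)$, then unfold $1/\eta$. Your treatment is in fact slightly more careful than the paper's, since you explicitly use $\max\{a,c\} \le a+c$ and verify $\mu\eta < 1$, whereas the paper writes $\frac{1}{\mu\eta}$ as though it equals the sum $a+c$ — an overestimate harmless under $\mathcal{O}(\cdot)$ but not literally an equality.
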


		Similarly to general non-convex setting we can estimate the computational load for stochastic gradient computations:
		$
		\# \operatorname{grad} = B + K\left(p B + (1-p) b\right) = \mathcal{O}\left( B + \frac{2 \delta_0 L}{\varepsilon^2}\left(1 + \sqrt{\frac{1-p}{p b}}\right)\left(p B + (1-p) b\right)\right).
		$ 
		Following that, we introduce a parameter setup that produces the most favorable known convergence rate for the Riemannian non-convex finite-sum problem~\eqref{eq:finite-sum}.
		
		\begin{corollary}\label{cor:rpage-pl-total-cxty}
			
			With the assumptions outlined in Theorem~\ref{thm:rpage-pl-condition}, if we set  $\eta =  \min \left\{ \frac{1}{L \left(1 + \sqrt{\frac{1-p}{pb} }\right)}, \frac{p}{2\mu} \right\}$, minibatch size $B=n$, secondary minibatch size $b \leq \sqrt{B}$ and $p \equiv \frac{b}{B+b}$, in order to obtain $\mathbb{E}[\Phi^k] \leq \varepsilon \Phi^0$, we find that the total computational complexity of Riemannian PAGE method (Algorithm~\ref{alg:RPAGE}) is given by
			$$
			\#	\operatorname{grad} = \mathcal{O} \left( \left(n + \frac{L}{\mu}\sqrt{n}\right) \log \frac{1}{\varepsilon} \right).
			$$
		\end{corollary}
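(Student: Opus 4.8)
The plan is to combine the iteration-complexity estimate of Corollary~\ref{cor:rlsvrg-iteration-complexity} (which is itself an immediate consequence of the contraction $\mathbb{E}[\Phi^K]\le(1-\mu\eta)^K\delta^0$ in Theorem~\ref{thm:rpage-pl-condition} together with $\log(1/\varepsilon)/(\mu\eta)$ iterations sufficing) with the per-iteration gradient cost of the \algname{R-PAGE} estimator, and then to substitute the prescribed parameters $B=n$, $b\le\sqrt{B}$, $p\equiv\frac{b}{B+b}$.

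First I would simplify the quantities that enter both the stepsize and $K$ under $p=\frac{b}{n+b}$. A direct computation gives $1-p=\frac{n}{n+b}$, hence $\frac{1-p}{pb}=\frac{n}{b^2}$ and $\sqrt{\frac{1-p}{pb}}=\frac{\sqrt{n}}{b}$. Since $b\le\sqrt{n}$ we have $\frac{\sqrt{n}}{b}\ge 1$, so $1+\sqrt{\frac{1-p}{pb}}=\Theta\!\left(\frac{\sqrt{n}}{b}\right)$, and likewise $\frac{1}{p}=\frac{n+b}{b}=\Theta\!\left(\frac{n}{b}\right)$ because $b\le n$. Plugging these into Corollary~\ref{cor:rlsvrg-iteration-complexity} yields
$$K=\mathcal{O}\!\left(\frac{1}{b}\left(\frac{L\sqrt{n}}{\mu}+n\right)\log\frac{1}{\varepsilon}\right).$$
Next I would compute the expected number of stochastic gradients per iteration, $pB+(1-p)b=\frac{nb}{n+b}+\frac{nb}{n+b}=\frac{2nb}{n+b}$, which is $\Theta(b)$ since $b\le\sqrt{n}$ forces $n+b=\Theta(n)$. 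Substituting into $\#\operatorname{grad}=B+K(pB+(1-p)b)$ gives $\#\operatorname{grad}=n+\mathcal{O}\!\left(\left(\frac{L\sqrt{n}}{\mu}+n\right)\log\frac{1}{\varepsilon}\right)$, and the leading $n$ (the one-time cost of forming $g^0$) is absorbed into the logarithmic term since $\log(1/\varepsilon)\ge 1$, leaving the claimed $\mathcal{O}\!\left(\left(n+\frac{L}{\mu}\sqrt{n}\right)\log\frac{1}{\varepsilon}\right)$.

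There is no genuinely hard step here; the whole argument is careful bookkeeping with $\Theta$-equivalences. The one point that needs attention is the role of the condition $b\le\sqrt{B}$: it is exactly what guarantees that the additive constant $1$ is dominated both inside $1+\sqrt{(1-p)/(pb)}$ and inside $\frac{1}{p}=1+\frac{n}{b}$, and that $n+b=\Theta(n)$, so that the $\Theta(b)$ per-iteration cost precisely cancels the $\frac{1}{b}$ factor in $K$. I would also remark that the bound is stated up to absolute constants, so any choice of $b$ in the range $1\le b\le\sqrt{n}$ gives the same final rate.
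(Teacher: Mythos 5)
Your argument is correct and follows the same route as the paper: plug $p=\frac{b}{n+b}$ into the iteration complexity of Corollary~\ref{cor:rlsvrg-iteration-complexity} (getting $\sqrt{(1-p)/(pb)}=\sqrt{n}/b$ and $1/p=(n+b)/b$), compute the expected per-iteration cost $pB+(1-p)b=\frac{2nb}{n+b}$, and use $b\le\sqrt{n}$ to see that the $1/b$ in $K$ cancels the $\Theta(b)$ per-iteration cost. The only cosmetic difference is that the paper tracks explicit constants (ending with $n+(4\sqrt{n}\kappa+4n)\log\frac{\delta^0}{\varepsilon}$) whereas you use $\Theta$-equivalences throughout; the substance is identical.
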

		
		\subsection{Convergence in Non-Convex Online Setting}
		In this section, our attention is directed towards non-convex online problems, specifically denoted as 
		\begin{equation}\label{eq:online-problem}
			\min_{x\in\mathcal{M}}\left\lbrace f(x) =\mathbb{E}_{\xi\sim\mathcal{D}}\left[\nabla f\left(x, \xi\right)\right]  \right\rbrace.
		\end{equation}
		It is noteworthy to recall that we characterize this online problem \eqref{eq:online-problem} as an extension of the finite-sum problem \eqref{eq:finite-sum} with a substantial or infinite $n$. The consideration of the bounded variance assumption (Assumption \ref{ass:bounded-variance}) and unbiasedness (Assumption \ref{ass:unbiasedness}) are imperative in this online scenario. Analogously, we begin by presenting the primary theorem in this online context, followed by corollaries outlining the optimal convergence outcomes. 	
		
		We define the notation $\nabla f_{B}(x)$ as the average gradient over a subset $I$ of size $B$: $$\nabla f_{B}(x) \eqdef \frac{1}{B}\sum_{i\in I}\nabla f(x, \xi_i), \text{ with } |I|=B.$$

		\begin{assumption}[Bounded variance]\label{ass:bounded-variance}
			The stochastic gradient $\nabla f_{B}(x)$ has bounded variance if there exists $\sigma \geq 0$ such that $\mathbb{E}\|\nabla f(x, \xi_i) - \nabla f(x)\|^2\leq\sigma^2,$ for all $x\in\mathcal{M}.$
		\end{assumption}
		
		\begin{assumption}[Unbiasedness]
			\label{ass:unbiasedness}
			The stochastic gradient estimator is unbiased if $\mathbb{E}\left[\nabla f_{B}(x)\right] = \nabla f(x).$ 
		\end{assumption}
		
		Let us formulate a theorem for the general non-convex online setting.
		\begin{algorithm*}
			\caption{Riemannian MARINA (\algname{R-MARINA)}}\label{alg:R-MARINA}
			\begin{algorithmic}[1]
				\State \textbf{Input:} initial point $x^0\in \mathbb{R}^d$, stepsize $\eta>0$, probability ${p} \in (0, 1],$ 
				\State $g^0 = \nabla f(x^0)$
				\For{$k =0,1,\dots,K-1$}
				\State Broadcast $g^k$ to all workers
				\For{$i=1,\ldots,n$ in parallel}
				\State $x^{k+1} = \mathrm{Exp}_{x^k}\left(-\eta g^k\right)$
				\State $g_i^{k+1} = \begin{cases}
					\nabla f_i(x^{k+1}) & \text{with probability $p$}\\
					\Gamma_{x^k}^{x^{k+1}}g_i^k + \mathcal{Q}_i\left(\nabla f_i(x^{k+1}) - \Gamma_{x^k}^{x^{k+1}}\nabla f_i(x^{k})\right) & \text{otherwise}
				\end{cases}$
				\EndFor
				\State Receive $g_i^{k+1}$ from all workers
				\State $g^{k+1} = \frac{1}{n}\sum_{i=1}^{n}g_i^{k+1}$
				\EndFor
			\end{algorithmic}
		\end{algorithm*}
		\begin{theorem}\label{thm:rpage-noncvx-online}
			Suppose that Assumption~\ref{ass:bounded-variance} and Assumption~\ref{ass:unbiasedness} hold for the stochastic gradient $\nabla f_{B}(x)$ and suppose that, for any fixed $\xi_i,$ $f(x,\xi_i)$ is geodesically $L$-smooth on $\mathcal{M}$ (Assumption \ref{ass:l-g-smoothness} holds), also that $f$ is lower bounded on $\mathcal{M}$ (Assumption~\ref{ass:lower} holds). Let $\delta^0 \stackrel{\text{def}}{=} f(x^0) - f^{*}$. Select the stepsize $\eta$ such that $		\eta \leq \frac{1}{L \left( 1 + \sqrt{\frac{1-p}{pb}} \right)}.$ Then, the iterates of the Riemannian PAGE method (Algorithm \ref{alg:RPAGE}) satisfy 
			$$
			\mathbb{E}\|\nabla f(\widehat{x}^K)\|^2 \leq \frac{2\mathbb{E}[\Phi^0]}{\eta K} + \frac{\sigma^2}{B}.
			$$
			where $\widehat{x}^{K}$ is chosen uniformly at random from $x^0, \dots, x^{K-1}$ and $ \Phi^0 = f\left(x^0\right)-f^*+\frac{\eta \sigma^2}{2 p B}$
		\end{theorem}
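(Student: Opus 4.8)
\emph{Approach.} The plan is to run a standard one-step Lyapunov descent argument, exactly as in the Euclidean \algname{PAGE} analysis, but with every inner product and norm living in the appropriate tangent space and with parallel transport inserted wherever two tangent vectors at different points are compared. The natural potential is
\[
  \Phi^k \eqdef f(x^k) - f^* + \frac{\eta}{2p}\norm{g^k - \nabla f(x^k)}^2,
\]
whose expectation at $k=0$ reduces to the stated $\Phi^0$ because $g^0 = \nabla f_B(x^0)$ and Assumption~\ref{ass:bounded-variance}, Assumption~\ref{ass:unbiasedness} give $\mathbb{E}\norm{g^0-\nabla f(x^0)}^2 \le \sigma^2/B$. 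Throughout, $\mathbb{E}_k[\cdot]$ denotes conditional expectation given all randomness through iteration $k$.

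\emph{Descent step.} Since $x^{k+1} = \Expmap_{x^k}(-\eta g^k)$, we have $\Expmap_{x^k}^{-1}(x^{k+1}) = -\eta g^k$, so the second inequality of Assumption~\ref{ass:l-g-smoothness} gives $f(x^{k+1}) \le f(x^k) - \eta\langle \nabla f(x^k), g^k\rangle + \tfrac{L\eta^2}{2}\norm{g^k}^2$. Applying the polarization identity $-\langle a,b\rangle = \tfrac12(\norm{a-b}^2 - \norm{a}^2 - \norm{b}^2)$ inside $T_{x^k}\mathcal{M}$ yields
\[
  f(x^{k+1}) \le f(x^k) - \tfrac{\eta}{2}\norm{\nabla f(x^k)}^2 - \tfrac{\eta}{2}(1 - L\eta)\norm{g^k}^2 + \tfrac{\eta}{2}\norm{g^k - \nabla f(x^k)}^2 .
\]

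\emph{Estimator-error recursion (the crux).} The heart of the proof is the one-step bound
\[
  \mathbb{E}_k\norm{g^{k+1} - \nabla f(x^{k+1})}^2 \le (1-p)\norm{g^k - \nabla f(x^k)}^2 + \frac{(1-p)L^2\eta^2}{b}\norm{g^k}^2 + p\,\frac{\sigma^2}{B}.
\]
With probability $p$ we have $g^{k+1} = \nabla f_B(x^{k+1})$, so the error is at most $\sigma^2/B$ by Assumption~\ref{ass:bounded-variance}. With probability $1-p$, after adding and subtracting $\Gamma_{x^k}^{x^{k+1}}\nabla f(x^k)$ one writes
\begin{align*}
  g^{k+1} - \nabla f(x^{k+1}) &= \Gamma_{x^k}^{x^{k+1}}\big(g^k - \nabla f(x^k)\big) \\
  &\quad + \Big[\big(\nabla f_b(x^{k+1}) - \Gamma_{x^k}^{x^{k+1}}\nabla f_b(x^k)\big) - \big(\nabla f(x^{k+1}) - \Gamma_{x^k}^{x^{k+1}}\nabla f(x^k)\big)\Big].
\end{align*}
Since parallel transport is linear and norm-preserving, the first term contributes exactly $\norm{g^k - \nabla f(x^k)}^2$; by unbiasedness (and linearity of $\Gamma$) the bracketed term has zero conditional mean, so the cross term vanishes, and its second moment is at most $\tfrac1b$ times the per-sample second moment $\mathbb{E}_\xi\norm{\nabla f(x^{k+1},\xi) - \Gamma_{x^k}^{x^{k+1}}\nabla f(x^k,\xi)}^2 \le L^2 d^2(x^k,x^{k+1}) = L^2\eta^2\norm{g^k}^2$, using the g-Lipschitz form $\norm{g_x - \Gamma_y^x g_y} \le L\norm{\Expmap_x^{-1}(y)}$ of Assumption~\ref{ass:l-g-smoothness} for each fixed $\xi$ together with $d(x^k,x^{k+1}) = \eta\norm{g^k}$. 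Weighting the two cases by $p$ and $1-p$ gives the recursion. This is the only place the manifold geometry enters substantively, and I expect it to be the main obstacle: one must keep the parallel-transport terms under control — linearity to kill the cross term, isometry to preserve norms — after which the argument is purely Euclidean-style bookkeeping.

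\emph{Combine and telescope.} Taking $\mathbb{E}_k[\Phi^{k+1}]$ and substituting the two displays above, the $\norm{g^k - \nabla f(x^k)}^2$ coefficient collapses to $\tfrac{\eta}{2} + \tfrac{\eta(1-p)}{2p} = \tfrac{\eta}{2p}$ (exactly the potential term), the $\norm{g^k}^2$ coefficient becomes $-\tfrac{\eta}{2}\big(1 - L\eta - \tfrac{(1-p)L^2\eta^2}{pb}\big)$, and a residual $\tfrac{\eta\sigma^2}{2B}$ appears. Writing $\eta = \tfrac{1}{L(1+c)}$ with $c = \sqrt{(1-p)/(pb)}$ shows the bracket equals $\tfrac{c}{(1+c)^2}\ge 0$, and since it is decreasing in $\eta$ the stepsize condition $\eta \le \tfrac{1}{L(1+\sqrt{(1-p)/(pb)})}$ keeps it nonnegative; dropping that nonpositive term gives
\[
  \mathbb{E}_k[\Phi^{k+1}] \le \Phi^k - \tfrac{\eta}{2}\norm{\nabla f(x^k)}^2 + \tfrac{\eta\sigma^2}{2B}.
\]
Taking full expectation, telescoping over $k = 0,\dots,K-1$, using $\Phi^K \ge 0$ (Assumption~\ref{ass:lower}), dividing by $\eta K/2$, and identifying $\tfrac1K\sum_{k=0}^{K-1}\mathbb{E}\norm{\nabla f(x^k)}^2$ with $\mathbb{E}\norm{\nabla f(\widehat{x}^K)}^2$ for the uniformly sampled $\widehat{x}^K$ yields the claimed bound.
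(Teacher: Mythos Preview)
Your proposal is correct and follows essentially the same route as the paper: the same Lyapunov function $\Phi^k = f(x^k)-f^*+\tfrac{\eta}{2p}\|g^k-\nabla f(x^k)\|^2$, the same descent inequality from $L$-g-smoothness (your polarization identity is exactly the paper's Lemma~\ref{lem:identity_for_dot_prod}), the same estimator-error recursion (the paper's Lemma~\ref{lemma:recusion-online}, proved with the identical add/subtract $\Gamma_{x^k}^{x^{k+1}}\nabla f(x^k)$ trick and isometry of parallel transport), and the same telescoping conclusion. The only cosmetic difference is that you carry $\|g^k\|^2$ where the paper carries $\|\Expmap_{x^k}^{-1}(x^{k+1})\|^2$, which are equal since $\Expmap_{x^k}^{-1}(x^{k+1})=-\eta g^k$.
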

		
		\begin{corollary}\label{cor:rpage-iteration-cxty-online}
			With the assumptions outlined in Theorem~\ref{thm:rpage-noncvx-online}, if we set $\eta =  \frac{1}{L \left(1 + \sqrt{\frac{1-p}{pb} }\right)}$, in order to obtain $\mathbb{E}[\Phi^k] \leq \varepsilon \Phi^0$, we find that the iteration complexity of Riemannian R-PAGE method (Algorithm~\ref{alg:RPAGE}) is given by
			$$
			K =\mathcal{O}\left(  \frac{\delta^0L}{\varepsilon^2} \left( 1 + \sqrt{\frac{1-p}{pb}} \right)  + \frac{1}{p}\right).
			$$
		\end{corollary}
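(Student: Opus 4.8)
The statement is a direct consequence of Theorem~\ref{thm:rpage-noncvx-online}, so the plan is simply to unroll its guarantee
\begin{equation*}
	\mathbb{E}\|\nabla f(\widehat{x}^K)\|^2 \;\le\; \frac{2\mathbb{E}[\Phi^0]}{\eta K} + \frac{\sigma^2}{B},
	\qquad \Phi^0 = \delta^0 + \frac{\eta\sigma^2}{2pB},
\end{equation*}
and solve for $K$. (Here $\mathbb{E}[\Phi^0]=\Phi^0$, since $\Phi^0$ is deterministic.) Since $\mathbb{E}\|\nabla f(\widehat{x}^K)\|\le\big(\mathbb{E}\|\nabla f(\widehat{x}^K)\|^2\big)^{1/2}$ by Jensen's inequality, it suffices to make the right-hand side at most $\varepsilon^2$, which I would do by forcing each of the two terms to be at most $\varepsilon^2/2$.

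First I would pick the primary minibatch size $B$ large enough that the irreducible variance floor is controlled, i.e. $\frac{\sigma^2}{B}\le\frac{\varepsilon^2}{2}$, equivalently $B\ge\frac{2\sigma^2}{\varepsilon^2}$; for small $\varepsilon$ this is also consistent with the standing requirement $b<B$. It then remains to ensure $\frac{2\Phi^0}{\eta K}\le\frac{\varepsilon^2}{2}$, i.e. $K\ge\frac{4\Phi^0}{\eta\varepsilon^2}$. Next I would plug the chosen $B$ into the expression for $\Phi^0$: this gives $\frac{\eta\sigma^2}{2pB}\le\frac{\eta\varepsilon^2}{4p}$, hence $\Phi^0\le\delta^0+\frac{\eta\varepsilon^2}{4p}$, and therefore
\begin{equation*}
	K \;\ge\; \frac{4\delta^0}{\eta\varepsilon^2} + \frac{1}{p}.
\end{equation*}
Finally, substituting the prescribed stepsize $\eta=\frac{1}{L(1+\sqrt{(1-p)/(pb)})}$ turns $\frac{4\delta^0}{\eta\varepsilon^2}$ into $\frac{4\delta^0 L}{\varepsilon^2}\big(1+\sqrt{(1-p)/(pb)}\big)$, and absorbing absolute constants into $\mathcal{O}(\cdot)$ yields exactly the claimed bound.

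There is no genuine obstacle here — the argument is bookkeeping — but the one point worth care is the provenance of the additive $\frac{1}{p}$ term: it does not appear in the ``raw'' iteration count $\frac{4\delta^0}{\eta\varepsilon^2}$ and only emerges after the variance-dependent summand $\frac{\eta\sigma^2}{2pB}$ of the Lyapunov offset $\Phi^0$ is bounded using the choice $B\ge 2\sigma^2/\varepsilon^2$ made in the first step. One should also double-check that the stepsize constraint of Theorem~\ref{thm:rpage-noncvx-online} is met with equality (it is, by the hypothesis on $\eta$), so that no extra $L$-dependent factor is introduced along the way.
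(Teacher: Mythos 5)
Your proof is correct and follows essentially the same route as the paper's: both start from the bound $\mathbb{E}\|\nabla f(\widehat{x}^K)\|^2 \le \frac{2\Phi^0}{\eta K} + \frac{\sigma^2}{B}$, use Assumption~\ref{ass:bounded-variance} to write $\Phi^0 \le \delta^0 + \frac{\eta\sigma^2}{2pB}$, choose $B \ge 2\sigma^2/\varepsilon^2$ to kill the variance floor, and then solve for $K$, with the $\frac{1}{p}$ term arising exactly as you say from the $\frac{\eta\sigma^2}{2pB}$ piece of the Lyapunov offset. The only cosmetic discrepancy is a wording slip (``and therefore $K \ge \ldots$'' should be ``it therefore suffices that $K \ge \ldots$'', since $\Phi^0 \le \delta^0 + \frac{\eta\varepsilon^2}{4p}$ gives a \emph{sufficient} lower bound on $K$, not a derived one), but the intended logic matches the paper's.
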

		We present a parameter configuration that yields the most favorable convergence rate known for the Riemannian non-convex online problem \eqref{eq:online-problem}.
		\begin{corollary}
			\label{cor:rpage-noncvx-online-optimality}
			With the assumptions outlined in Theorem~\ref{thm:rpage-noncvx-online}, if we set $\eta =  \frac{1}{L \left(1 + \sqrt{\frac{1-p}{pb} }\right)}$, minibatch size $B = \left\lceil\frac{2\sigma^2}{\varepsilon^2}\right\rceil,$ secondary minibatch size $b \leq \sqrt{B}$ and probability $p = \frac{b}{B+b},$ in order to obtain $\mathbb{E}[\Phi^k] \leq \varepsilon \Phi^0$, we find that the total computational complexity of Riemannian PAGE method of Riemannian R-PAGE method (Algorithm~\ref{alg:RPAGE}) is given by
			$$
			\#grad =  \mathcal{O}\left(\frac{\sigma^2}{\varepsilon^2} + \frac{\sigma}{\varepsilon^3}\right).
			$$
		\end{corollary}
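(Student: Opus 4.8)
The plan is purely to substitute the prescribed parameters into the iteration-complexity bound of Corollary~\ref{cor:rpage-iteration-cxty-online} (which follows from Theorem~\ref{thm:rpage-noncvx-online}) and into the generic gradient-count identity $\#\operatorname{grad} = B + K\bigl(pB + (1-p)b\bigr)$, then simplify. No new analytic machinery is needed; the content is in tracking how the chosen $B$, $b$, and $p$ interact.

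First I would record the algebraic identities forced by $p = \frac{b}{B+b}$: namely $1-p = \frac{B}{B+b}$ and $pb = \frac{b^2}{B+b}$, so that $\frac{1-p}{pb} = \frac{B}{b^2}$ and hence $1 + \sqrt{\tfrac{1-p}{pb}} = 1 + \tfrac{\sqrt B}{b}$. Since $b \le \sqrt B$, this quantity is $\Theta(\sqrt B / b)$, and the stepsize satisfies $\tfrac{1}{\eta} = L\bigl(1 + \tfrac{\sqrt B}{b}\bigr) = \Theta\bigl(\tfrac{L\sqrt B}{b}\bigr)$. Likewise $\tfrac{1}{p} = \tfrac{B+b}{b} = \Theta(B/b)$, using $b \le B$. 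Finally, the expected number of stochastic gradients per iteration is $pB + (1-p)b = \tfrac{bB}{B+b} + \tfrac{Bb}{B+b} = \tfrac{2Bb}{B+b} = \Theta(b)$.

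Next I would control the Lyapunov initial value. We have $\Phi^0 = \delta^0 + \tfrac{\eta\sigma^2}{2pB}$, and using $pB = \tfrac{bB}{B+b} = \Theta(b)$ together with $\eta = \Theta\bigl(b/(L\sqrt B)\bigr)$ gives $\tfrac{\eta\sigma^2}{2pB} = \Theta\bigl(\tfrac{\sigma^2}{L\sqrt B}\bigr)$. With $B = \lceil 2\sigma^2/\varepsilon^2\rceil = \Theta(\sigma^2/\varepsilon^2)$ this extra term is $\Theta(\sigma\varepsilon/L)$, of lower order than $\delta^0$, so $\mathbb{E}[\Phi^0] = \mathcal{O}(\delta^0)$ up to absolute constants. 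Moreover this choice of $B$ makes the irreducible bias term satisfy $\tfrac{\sigma^2}{B} \le \tfrac{\varepsilon^2}{2}$, so Theorem~\ref{thm:rpage-noncvx-online} yields $\mathbb{E}\|\nabla f(\widehat x^K)\|^2 \le \varepsilon^2$ as soon as $K \ge \tfrac{4\,\mathbb{E}[\Phi^0]}{\eta\varepsilon^2}$; combined with Corollary~\ref{cor:rpage-iteration-cxty-online} this gives $K = \mathcal{O}\bigl(\tfrac{\delta^0 L\sqrt B}{\varepsilon^2 b} + \tfrac{B}{b}\bigr)$.

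Finally I would assemble $\#\operatorname{grad} = B + K\bigl(pB + (1-p)b\bigr)$. Multiplying the bound on $K$ by $pB + (1-p)b = \Theta(b)$, the factors of $b$ cancel and the $\tfrac{B}{b}$ term contributes $\Theta(B)$, giving $\#\operatorname{grad} = \mathcal{O}\bigl(B + \tfrac{\delta^0 L \sqrt B}{\varepsilon^2}\bigr)$. Substituting $B = \Theta(\sigma^2/\varepsilon^2)$ (the regime $\sigma = \Omega(\varepsilon)$ in which the statement is meaningful) and hence $\sqrt B = \Theta(\sigma/\varepsilon)$, and absorbing $\delta^0 L$ into the constant as is standard, yields $\#\operatorname{grad} = \mathcal{O}\bigl(\tfrac{\sigma^2}{\varepsilon^2} + \tfrac{\sigma}{\varepsilon^3}\bigr)$, which is the claim. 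The only step requiring any care is verifying that the $\tfrac{\eta\sigma^2}{2pB}$ summand in $\Phi^0$ stays of lower order so that $\mathbb{E}[\Phi^0] = \mathcal{O}(\delta^0)$ may be used; everything else is bookkeeping.
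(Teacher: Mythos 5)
Your proposal is correct and follows essentially the same route as the paper: substitute $p=\frac{b}{B+b}$ into the iteration bound of Corollary~\ref{cor:rpage-iteration-cxty-online} so that $1+\sqrt{\tfrac{1-p}{pb}}=1+\tfrac{\sqrt{B}}{b}$, multiply by the per-iteration cost $pB+(1-p)b=\tfrac{2Bb}{B+b}\le 2b$, and use $b\le\sqrt{B}$ and $B=\Theta(\sigma^2/\varepsilon^2)$ to reach $\mathcal{O}\bigl(\tfrac{\sigma^2}{\varepsilon^2}+\tfrac{\sigma}{\varepsilon^3}\bigr)$. The only cosmetic difference is that you re-examine the $\tfrac{\eta\sigma^2}{2pB}$ term in $\Phi^0$ directly, whereas the paper has already absorbed it into the $\tfrac{1}{p}$ summand of $K$ in the proof of Corollary~\ref{cor:rpage-iteration-cxty-online}; both treatments are equivalent.
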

		We examine an analysis in a non-convex online scenario, considering the Polyak-Łojasiewicz condition. Due to space constraints, we are relocating this discussion to the appendix.

		\begin{figure*}[ht!]
			\centering
			
			\includegraphics[width=0.48\textwidth]{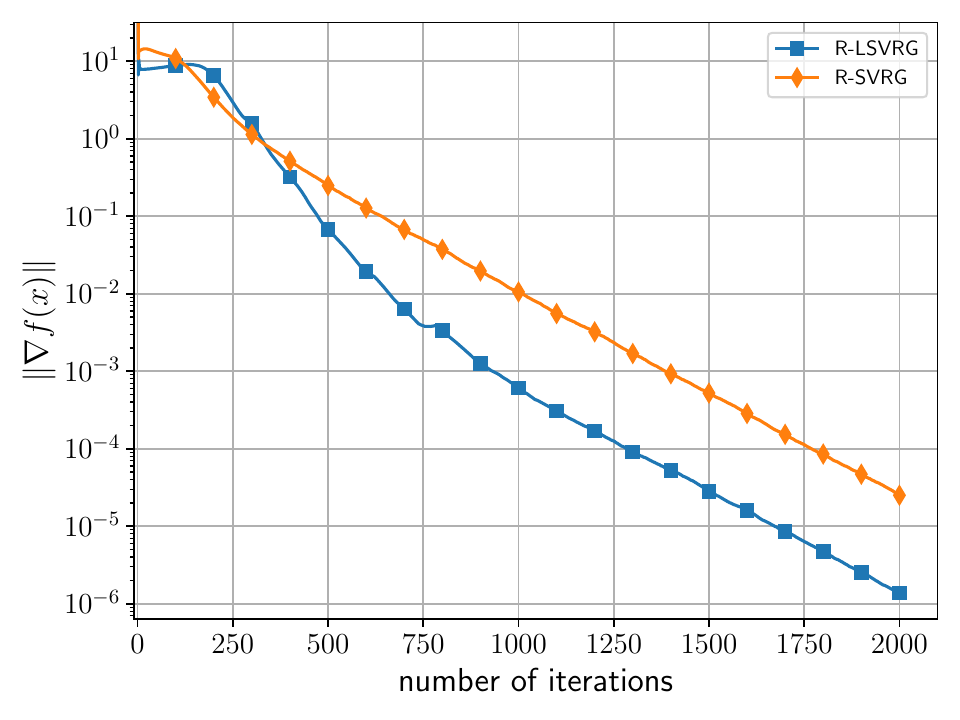}
			\includegraphics[width=0.48\textwidth]{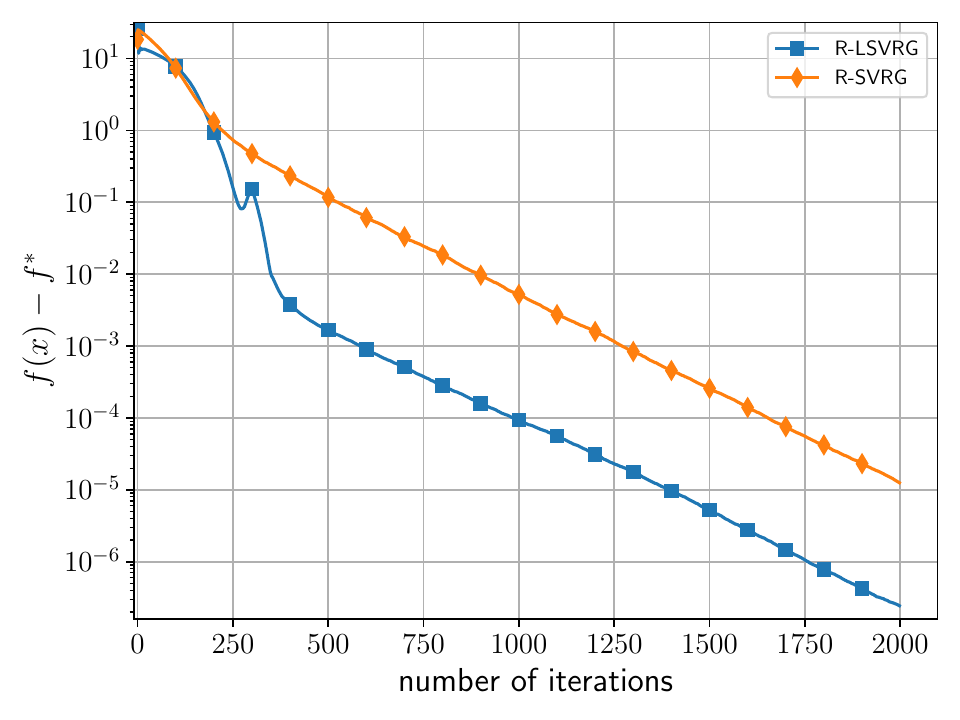}
			\caption{\small{Comparison of the \algname{R-LSVRG} and \algname{R-SVRG} methods: Left panel illustrates convergence in terms of gradient norm, while the right panel depicts convergence in terms of function values.}}
			\label{fig:exp1_th_step sizes}
		\end{figure*}
		\section{Riemannian MARINA}
		
		In this section, we consider the finite-sum problem \eqref{eq:finite-sum} as a distributed optimization problem, where each function $f_i(x)$ is stored or located on different workers/machines that collaborate to achieve a common objective. Instead of centralizing the optimization process, it is parallel, with each component handling a portion of the computation. This approach is often employed in large-scale systems, such as distributed computing networks, parallel processing, or decentralized machine learning algorithms. The goal is to enhance efficiency, scalability, and the ability to handle complex tasks by leveraging the computational resources of multiple entities.
		
		We outline another algorithm employed in our study: Riemannian \algname{MARINA} (refer to Algorithm \ref{alg:R-MARINA}). During each iteration of \algname{R-MARINA}, each worker i randomly selects between sending the dense vector $\nabla f_i(x^{k+1}) $ with a probability of $p$ or sending the compressed gradient difference $ \mathcal{Q}_i\left(\nabla f_i(x^{k+1}) - \Gamma_{x^k}^{x^{k+1}}\nabla f_i(x^{k})\right) $ with a probability of $1-p$. In the former scenario, the server simply averages the vectors received from workers to obtain $g^{k+1}=\nabla f\left(x^{k+1}\right)$. Conversely, in the latter case, the server averages the compressed differences from all workers and adds the result to $g^k$, yielding $g^{k+1}$. 
		
		Next, we delineate an extensive category of unbiased compression operators that adhere to a specific variance constraint.
		\begin{definition}\label{def:quantization}
			We say that a stochastic mapping $Q: T_x\mathcal{M} \rightarrow T_x\mathcal{M},$ for all $x\in\mathcal{M},$ is a unbiased compression operator/compressor with conic variance if there exists $\omega \geq 0$ such that for any $x \in \mathcal{M},$ any $y\in T_x\mathcal{M},$ we have
			\begin{equation*}
				\mathbb{E}\left[Q(y)\right] = y, \quad 	\mathbb{E}\left[\|Q(y) - y\|^2\right] \leq \omega \|y\|^2.
			\end{equation*}
			For the given compressor $Q(y),$ we define the expected density as $\rho_Q = \sup_{x\in\mathcal{M}, y \in T_x\mathcal{M}} 	\mathbb{E}\left[\|Q(y)\|_0\right],$ where $ \|z\|_0$ is the number of non-zero components of $z \in T_x\mathcal{M}.$
		\end{definition}
		Notice that the expected density is well-defined for any compression operator since $\|Q(z)\|_0\leq d,$ where $d$ is the local dimension of the manifold. Next, we present the optimal communication complexity for the Riemannian distributed non-convex setting.

		\begin{theorem}\label{thm:marina_noncvx}
			Assume each $f_i$ is $L$-g-smooth on $\mathcal{M}$ (Assumption~\ref{ass:l-g-smoothness} holds) and let $f$ be uniformly lower bounded on $\mathcal{M}$ (Assumption~\ref{ass:lower} holds). Assume that the compression operator is unbiased and has conic variance (Definition \ref{def:quantization}). Let $\delta^0 \stackrel{\text{def}}{=} f(x^0) - f^{*}$. Select the stepsize $\eta$ such that $		\eta \leq \frac{1}{L \left( 1 + \sqrt{\frac{1-p}{p}\frac{\omega}{n}} \right)}.$ Then, the iterates of the Riemannian MARINA method (Algorithm \ref{alg:R-MARINA}) satisfy 
			\begin{equation}\label{eq:rmarina_convergence_guarantees}
				\mathbb{E}\left[\|f\left(\widehat{x}^{K}\right)\|^2\right] \leq\frac{2 \delta^0}{\gamma K},
			\end{equation}
			where $\widehat{x}^{K}$ is chosen uniformly at random from $x^0, \dots, x^{K-1}$.
		\end{theorem}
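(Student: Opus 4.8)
The plan is to run the standard \algname{MARINA}/\algname{PAGE} descent-plus-Lyapunov argument, with the only manifold-specific ingredients being (i) $\Expmap_{x^k}^{-1}(x^{k+1}) = -\eta g^k$ exactly, so the g-smoothness descent inequality of Assumption~\ref{ass:l-g-smoothness} applies directly, and (ii) parallel transport $\Gamma_{x^k}^{x^{k+1}}$ is a linear isometry, which is what lets the gradient-difference accumulator telescope across tangent spaces. No curvature constant and no part of Assumption~\ref{ass:geom} is needed.

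\textbf{Step 1: one-step descent.} Since each $f_i$ is $L$-g-smooth, so is $f$; applying the descent inequality with $y = x^{k+1} = \Expmap_{x^k}(-\eta g^k)$ and then the polarization identity $-\langle a, b\rangle = \tfrac12(\|a-b\|^2 - \|a\|^2 - \|b\|^2)$ with $a=\nabla f(x^k)$, $b=g^k$ yields $f(x^{k+1}) \le f(x^k) - \tfrac{\eta}{2}\|\nabla f(x^k)\|^2 + \tfrac{\eta}{2}\|g^k - \nabla f(x^k)\|^2 - \tfrac{\eta}{2}(1-L\eta)\|g^k\|^2$.

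\textbf{Step 2: variance recursion.} Write $G^k \eqdef \|g^k - \nabla f(x^k)\|^2$ and condition on the history $\mathcal{F}^k$ through $g^k$ (so $x^{k+1}$ and $\nabla f(x^{k+1})$ are already determined). On the common coin's full-gradient branch (probability $p$) one has $g^{k+1} = \nabla f(x^{k+1})$, hence $G^{k+1}=0$; on the compressed branch (probability $1-p$), using $\nabla f(x^{k+1}) = \tfrac1n\sum_i \nabla f_i(x^{k+1})$ and linearity of $\Gamma_{x^k}^{x^{k+1}}$ one gets the decomposition $g^{k+1} - \nabla f(x^{k+1}) = \Gamma_{x^k}^{x^{k+1}}(g^k - \nabla f(x^k)) + \tfrac1n\sum_{i=1}^n(\mathcal{Q}_i(\Delta_i) - \Delta_i)$ with $\Delta_i = \nabla f_i(x^{k+1}) - \Gamma_{x^k}^{x^{k+1}}\nabla f_i(x^k) \in T_{x^{k+1}}\mathcal{M}$. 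Unbiasedness and mutual independence of the $\mathcal{Q}_i$, together with norm-preservation of $\Gamma_{x^k}^{x^{k+1}}$, give $\mathbb{E}[G^{k+1}\mid\mathcal{F}^k,\text{compressed}] = G^k + \tfrac1{n^2}\sum_i\mathbb{E}\|\mathcal{Q}_i(\Delta_i) - \Delta_i\|^2 \le G^k + \tfrac{\omega}{n^2}\sum_i\|\Delta_i\|^2$, and $\|\Delta_i\| \le L\|\Expmap_{x^{k+1}}^{-1}(x^k)\| = L\|\Expmap_{x^k}^{-1}(x^{k+1})\| = L\eta\|g^k\|$ by g-smoothness of $f_i$ and symmetry of geodesic distance. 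Hence $\mathbb{E}[G^{k+1}\mid\mathcal{F}^k] \le (1-p)\bigl(G^k + \tfrac{\omega L^2\eta^2}{n}\|g^k\|^2\bigr)$.

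\textbf{Step 3: Lyapunov function and telescoping.} Put $\Phi^k \eqdef f(x^k) - f^* + \tfrac{\eta}{2p}G^k$. Adding $\tfrac{\eta}{2p}$ times the Step~2 bound to Step~1, the $G^k$-terms collapse (since $1 + \tfrac{1-p}{p} = \tfrac1p$) and $\mathbb{E}[\Phi^{k+1}\mid\mathcal{F}^k] \le \Phi^k - \tfrac{\eta}{2}\|\nabla f(x^k)\|^2 - \tfrac{\eta}{2}\bigl(1 - L\eta - \tfrac{(1-p)\omega L^2\eta^2}{pn}\bigr)\|g^k\|^2$. Writing $s \eqdef \sqrt{\tfrac{1-p}{p}\tfrac{\omega}{n}}$, the last bracket equals $1 - L\eta - s^2 L^2\eta^2$, which is $\ge 0$ precisely for $L\eta \le \tfrac{2}{1+\sqrt{1+4s^2}}$, and since $\tfrac{2}{1+\sqrt{1+4s^2}} \ge \tfrac{1}{1+s}$, the stepsize hypothesis $\eta \le \tfrac{1}{L(1+s)}$ makes this term nonpositive, so it is dropped. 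Taking total expectation, telescoping $k=0,\dots,K-1$, using $\mathbb{E}[\Phi^K]\ge 0$ and $\Phi^0 = \delta^0$ (because $g^0 = \nabla f(x^0)$ forces $G^0=0$), gives $\tfrac{\eta}{2}\sum_{k=0}^{K-1}\mathbb{E}\|\nabla f(x^k)\|^2 \le \delta^0$; dividing by $\tfrac{\eta K}{2}$ and using that $\widehat{x}^K$ is uniform on $\{x^0,\dots,x^{K-1}\}$ gives $\mathbb{E}\|\nabla f(\widehat{x}^K)\|^2 \le \tfrac{2\delta^0}{\eta K}$, i.e.\ \eqref{eq:rmarina_convergence_guarantees} (with $\gamma=\eta$).

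\textbf{Main obstacle.} The substantive Riemannian point is Step~2: in Euclidean \algname{MARINA} the accumulator $g^{k+1}$ and all the $\nabla f_i$'s live in one space, whereas here each $g^{k+1}$ lives in $T_{x^{k+1}}\mathcal{M}$, so every cancellation leading to the clean recursion for $G^{k+1}$ must be justified by the linearity and isometry of $\Gamma_{x^k}^{x^{k+1}}$, and the bound $\|\Delta_i\| \le L\eta\|g^k\|$ must invoke g-smoothness at the correct base point combined with $\|\Expmap_{x^{k+1}}^{-1}(x^k)\| = \|\Expmap_{x^k}^{-1}(x^{k+1})\|$. Everything else is a verbatim transcription of the Euclidean argument, and, exactly as for the $B=n$ finite-sum case of \algname{R-PAGE}, the curvature never enters.
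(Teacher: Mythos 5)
Your proof is correct and follows essentially the same route as the paper's: the same g-smoothness descent step (your polarization identity is the paper's Lemma~\ref{lem:identity_for_dot_prod}), the same conditional variance decomposition of $g^{k+1}-\nabla f(x^{k+1})$ via unbiasedness, independence, isometry of $\Gamma_{x^k}^{x^{k+1}}$, and $L$-g-smoothness of the $f_i$, the same Lyapunov function $\Phi^k = f(x^k)-f^*+\tfrac{\eta}{2p}\|g^k-\nabla f(x^k)\|^2$, and the same telescoping; you justify the stepsize restriction by the quadratic-root bound $\tfrac{2}{1+\sqrt{1+4s^2}}\ge\tfrac{1}{1+s}$ where the paper invokes its Lemma~\ref{lemma:square_iequality}, but these are equivalent.
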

		
		\begin{corollary}\label{cor:rmarina_noncvx}
			With the assumptions outlined in Theorem~\ref{thm:marina_noncvx}, if we set $\eta =  \frac{1}{L \left(1 + \sqrt{\frac{1-p}{p}\frac{\omega}{n} }\right)}$, in order to obtain 	$\mathbb{E}\left[\|\nabla f\left(\widehat{x}^{K}\right)\|^2\right] \leq \varepsilon^2,$ we find that the communication complexity of Riemannian MARINA method (Algorithm~\ref{alg:R-MARINA}) is given by
			$$
			K = \mathcal{O} \left( \frac{\delta^0 L}{\varepsilon^2} \left(1 + \sqrt{\frac{(1-p) \omega}{pn}}\right) \right).
			$$
		\end{corollary}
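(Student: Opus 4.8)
The plan is to obtain the stated communication complexity directly from the convergence guarantee of Theorem~\ref{thm:marina_noncvx} by substituting the prescribed stepsize and solving for $K$; no new recursion or descent argument is needed. Recall that Theorem~\ref{thm:marina_noncvx} asserts, under the stated assumptions and for any stepsize satisfying $\eta \le \frac{1}{L(1+\sqrt{\frac{1-p}{p}\frac{\omega}{n}})}$, that
\begin{equation*}
\mathbb{E}\left[\|\nabla f(\widehat{x}^{K})\|^2\right] \le \frac{2\delta^0}{\eta K},
\end{equation*}
where the quantity written as $\gamma$ in the theorem statement is the stepsize $\eta$.

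First I would take the stepsize to be its largest admissible value, $\eta = \frac{1}{L\left(1+\sqrt{\frac{1-p}{p}\frac{\omega}{n}}\right)}$, which is permitted since it meets the constraint of Theorem~\ref{thm:marina_noncvx} with equality and yields the sharpest available rate. Then $\frac{1}{\eta} = L\left(1+\sqrt{\frac{1-p}{p}\frac{\omega}{n}}\right)$, and inserting this into the bound above gives
\begin{equation*}
\mathbb{E}\left[\|\nabla f(\widehat{x}^{K})\|^2\right] \le \frac{2\delta^0 L}{K}\left(1+\sqrt{\frac{1-p}{p}\frac{\omega}{n}}\right) = \frac{2\delta^0 L}{K}\left(1+\sqrt{\frac{(1-p)\omega}{pn}}\right),
\end{equation*}
using $\frac{1-p}{p}\cdot\frac{\omega}{n} = \frac{(1-p)\omega}{pn}$.

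Finally, requiring the right-hand side to be at most $\varepsilon^2$ and solving for $K$ yields $K \ge \frac{2\delta^0 L}{\varepsilon^2}\left(1+\sqrt{\frac{(1-p)\omega}{pn}}\right)$, so that choosing
\begin{equation*}
K = \mathcal{O}\left(\frac{\delta^0 L}{\varepsilon^2}\left(1+\sqrt{\frac{(1-p)\omega}{pn}}\right)\right)
\end{equation*}
iterations suffices, which is exactly the claimed bound. The argument is entirely routine; the only points requiring a little care are the bookkeeping between the theorem's stepsize-dependent rate and the corollary's explicit rate (identifying $\gamma$ with $\eta$ and simplifying the radical), and noting that any integer rounding of $K$ is absorbed by the $\mathcal{O}(\cdot)$. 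There is no substantive obstacle.
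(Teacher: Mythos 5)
Your proposal is correct and follows exactly the paper's route: the paper's proof of this corollary is a one-line appeal to the bound \eqref{eq:rmarina_convergence_guarantees} with the maximal admissible stepsize (the $\gamma$ there is indeed the stepsize $\eta$), and you have simply spelled out that substitution and solved for $K$.
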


		\begin{corollary}\label{cor:rmarina_noncvx_total}
			With the assumptions outlined in Theorem~\ref{thm:marina_noncvx}, if we set $\eta =  \frac{1}{L \left(1 + \sqrt{\frac{1-p}{p}\frac{\omega}{n} }\right)}$, $p = \frac{\rho_{\mathcal{Q}}}{d}$ in order to obtain 	$\mathbb{E}\left[\|\nabla f\left(\widehat{x}^{K}\right)\|^2\right] \leq \varepsilon^2,$ we find that the communication complexity $\mathcal{C}$ of Riemannian MARINA method (Algorithm~\ref{alg:R-MARINA}) is given by
			$$
			\mathcal{C} = \mathcal{O}\left(\frac{\delta_0 L}{\varepsilon^2}\left(1+\sqrt{\frac{\omega}{n}\left(\frac{d}{\rho_{\mathcal{Q}}}-1\right)}\right)\right),
			$$
			where $\rho_{Q}$ is the expected density of the quantization (see Def.~\ref{def:quantization}) and the expected total communication cost per worker is $\mathcal{O}\left(d+\rho_{\mathcal{Q}} K\right)$.
		\end{corollary}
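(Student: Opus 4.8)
The plan is to feed the prescribed parameter choice $p=\rho_{\mathcal{Q}}/d$ into the communication-round bound already established in Corollary~\ref{cor:rmarina_noncvx}, and then, separately, to count the coordinates each worker uploads per round under this choice. No fresh analysis is needed: Theorem~\ref{thm:marina_noncvx} carries all the convergence content, and what remains is deterministic algebra together with one linearity-of-expectation argument.

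First I would check that $p=\rho_{\mathcal{Q}}/d$ is admissible. Since $\|Q(z)\|_0\le d$ for every $z\in T_x\mathcal{M}$ (the local dimension of the manifold), Definition~\ref{def:quantization} forces $\rho_{\mathcal{Q}}\le d$, hence $p\in(0,1]$, so Theorem~\ref{thm:marina_noncvx} and Corollary~\ref{cor:rmarina_noncvx} apply with $\eta=\frac{1}{L\left(1+\sqrt{\frac{1-p}{p}\frac{\omega}{n}}\right)}$. Next I would substitute: the only nontrivial step is $\frac{1-p}{p}=\frac{d-\rho_{\mathcal{Q}}}{\rho_{\mathcal{Q}}}=\frac{d}{\rho_{\mathcal{Q}}}-1$, so that $\sqrt{\frac{(1-p)\omega}{pn}}=\sqrt{\frac{\omega}{n}\left(\frac{d}{\rho_{\mathcal{Q}}}-1\right)}$; inserting this into the bound $K=\mathcal{O}\left(\frac{\delta^0 L}{\varepsilon^2}\left(1+\sqrt{\frac{(1-p)\omega}{pn}}\right)\right)$ of Corollary~\ref{cor:rmarina_noncvx} gives exactly the claimed expression for $\mathcal{C}$ (the number of communication rounds).

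Finally I would bound the per-worker communication. In the update of $g_i^{k+1}$ in Algorithm~\ref{alg:R-MARINA}, a worker uploads the dense vector $\nabla f_i(x^{k+1})$ (at most $d$ nonzero entries) with probability $p$ and the compressed difference $\mathcal{Q}_i\left(\nabla f_i(x^{k+1})-\Gamma_{x^k}^{x^{k+1}}\nabla f_i(x^k)\right)$ (expected number of nonzeros at most $\rho_{\mathcal{Q}}$, by Definition~\ref{def:quantization}) with probability $1-p$. By linearity of expectation the expected upload in a round is at most $pd+(1-p)\rho_{\mathcal{Q}}$, which for $p=\rho_{\mathcal{Q}}/d$ equals $\rho_{\mathcal{Q}}+\left(1-\frac{\rho_{\mathcal{Q}}}{d}\right)\rho_{\mathcal{Q}}\le 2\rho_{\mathcal{Q}}$. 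Adding the one-time cost of $d$ coordinates for forming $g^0=\nabla f(x^0)$ (each worker sends $\nabla f_i(x^0)$) and summing over the $K$ rounds yields an expected per-worker communication of $\mathcal{O}(d+\rho_{\mathcal{Q}} K)$; substituting the value of $K$ from the previous paragraph completes the corollary.

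The hard part is not really hard: the statement is a routine specialization of Theorem~\ref{thm:marina_noncvx}. The only places that demand attention are (i) confirming $\rho_{\mathcal{Q}}\le d$ so that the probability $p$ is well-defined, and (ii) not dropping the additive $d$ coming from the initial full-gradient synchronization, which is the term that keeps the bound meaningful in the large-$\varepsilon$ regime.
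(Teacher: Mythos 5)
Your proof is correct and takes essentially the same route as the paper: substitute $p=\rho_{\mathcal{Q}}/d$ into the round-complexity bound of Corollary~\ref{cor:rmarina_noncvx}, then charge $pd+(1-p)\rho_{\mathcal{Q}}\le 2\rho_{\mathcal{Q}}$ expected coordinates per round plus the one-time $d$ for the initial full gradient. You are slightly more explicit than the paper in verifying $p\in(0,1]$ and in spelling out the $pd+(1-p)\rho_{\mathcal{Q}}\le 2\rho_{\mathcal{Q}}$ simplification, but the argument is the same.
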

		This signifies that, to the best of our knowledge, we have attained the initial findings for the distributed Riemannian non-convex setting with gradient compression. Our results are consistent with the optimal communication complexity observed in the Euclidean case.
		\section{Experiments}
		In this section, we utilize our examination of \algname{R-LSVRG} to address the problem of fast computation of eigenvectors - a fundamental challenge currently under active investigation in the big-data context \citep{garber2015fast, jin2015robust, shamir2015stochastic}. Specifically, we have the following problem. 
		$$\min _{x^{\top} x=1}-x^{\top}\left(\sum_{i=1}^n z_i z_i^{\top}\right) x \eqdef -x^{\top} A x=f(x),$$
		In our experiments, we employ the Pymanopt toolbox for Riemannian optimization in Python \citep{JMLR:v17:16-177}. We conduct a comparative analysis between the \algname{R-LSVRG} method (Algorithm \ref{alg:RLSVRG}) and the loop version, which is Riemannian \algname{SVRG} method. To ensure a fair comparison, we set identical values for the step size, denoted as $\eta$, and the parameter for the inner loop, fixed at $n$ (for \algname{R-LVSRG}, we utilize $p=1/n$). The results demonstrate a significant improvement of \algname{R-LSVRG} over \algname{R-SVRG}.
		
		\section{Conclusion}
		In this study, we examined the loopless variant of the well-known Riemannian \algname{SVRG} method and demonstrated that replacing the double loop structure with a coin flip mechanism enhances the method both theoretically and empirically. Motivated by this observation, we extended the PAGE method to Riemannian non-convex settings. Our results align with the current state-of-the-art theoretical guarantees, yet they indicate that the analysis can be significantly simplified and made more accessible. Leveraging the \algname{R-PAGE} method as a foundation, we also obtained, to the best of our knowledge, the initial findings for the distributed Riemannian setting with communication gradient compression through the application of the \algname{R-MARINA} method. Nevertheless, our analysis does not encompass acceleration mechanisms, a topic deserving a dedicated paper. This aspect is deferred to future work, presenting an avenue of high interest for the scientific community.

		\bibliography{bibliography}

		\newpage
		\appendix
		\section{Extended Related Work}
		
		\subsection{Riemannian Optimization}
		
		References predating this point can be located in works of \citet{udriste1997optimization, udriste2013convex}. Stochastic Riemannian optimization has been explored in prior work \citet{liu2003optimal}, albeit with a focus on asymptotic convergence analysis and an absence of specific convergence rates. \citet{boumal2019global} conducted an analysis of the iteration complexity of Riemannian trust-region methods, whereas \citet{bento2017iteration} delved into the non-asymptotic convergence of Riemannian gradient, subgradient, and proximal point methods. \citet{tripuraneni2018averaging} investigated aspects beyond variance reduction to expedite the convergence of first-order optimization methods on Riemannian manifolds. In the context of Federated Learning, the Riemannian optimization methods are applicable, as discussed by \citet{Li2022FederatedLO}. Accelerating methods are also widely analysied in Riemannian setting \citep{pmlr-v167-martinez-rubio22a, pmlr-v195-martinez-rubio23a}

		\subsection{Variance Reduction}
		Reducing variance in stochastic optimization involves employing various techniques. Control variates, a common method used in Monte Carlo simulations, are among the widely adopted variance reduction techniques \citep{haghighat2003monte, rubinstein2016simulation}. There has been a notable upswing in interest recently regarding variance-reduced methods for solving finite-sum problems in linear spaces \citep{j2015variance, hanzely2018sega, malinovsky2022federated, hanzely2020variance, shang2018vr, malinovsky2022variance}. Furthermore, there is extensive analysis of acceleration techniques incorporating Nesterov momentum \citep{nesterov1983method} with variance reduction \citep{allen2018katyusha, allen2018katyusha_1, qian2021svrg, xu2021katyusha, condat2023tamuna, grudzien2023improving}. In the context of sampling without replacement (random reshuffling), researchers have also investigated variance reduction \citep{ying2020variance, huang2021improved, malinovsky2023random}. In the realm of non-convex optimization, researchers actively explore a multitude of strategies and methodologies for mitigating variance, a crucial aspect in the optimization process \citep{cutkosky2019momentum, liu2017variance, kavis2022adaptive}.
		\subsection{Communication Compression}
		In a distributed setting, various compression schemes are extensively studied to enhance the communication efficiency of methods \citep{Seide20141bitSG, terngrad, ivkin2019communication, tang2021communication, sattler2019sparse}. The investigation and incorporation of variance reduction mechanisms into compression and acceleration techniques represent an area of extensive research. Researchers are actively exploring diverse strategies and methodologies aimed at refining compression \citep{sadiev2022federated} and acceleration techniques \citep{li2020acceleration, yang2021fastsgd}. In the context of biased compression, the error feedback mechanism \citep{karimireddy2019error} is extensively analyzed and has been demonstrated to be effectively combined with acceleration \citep{qian2021error}. In a non-convex setting, variance reduction mechanisms are explored within the \algname{MARINA} method, as discussed by \citet{gorbunov2021marina}. Additionally, several enhancements derived from the \algname{MARINA} method are presented in several works \citep{szlendak2021permutation, PanferovCorQuant, tyurin2022dasha}. In the context of Byzantine robustness, the \algname{MARINA} method has been thoroughly analyzed \citep{gorbunov2022variance, malinovsky2023byzantine, rammal2023communication}.

		\clearpage

		\section{R-LSVRG: Strongly Convex Case}
		\noindent\textbf{Theorem~\ref{thm:lsvrg_strongly_convex}.} \textit{Assuming that in \eqref{eq:finite-sum}, each $f_i$ is differentiable and $L$-g-smooth on $\mathcal{X}$ (Assumption~\ref{ass:l-g-smoothness} holds), and $f$ is $\mu$-strongly g-convex on $\mathcal{X}$ (Assumption~\ref{ass:g-strong-convexity} holds). Additionally, let Assumption~\ref{ass:geom} hold. Choose the stepsize $0<\eta \leq \frac{\mu}{16L^2\zeta}.$ Then, the iterates of the Riemannian L-SVRG method (Algorithm~\ref{alg:RLSVRG}) satisfy
			\begin{equation*}
				\mathbb{E}\left[\Phi^k \right]\leq \left(\max\left\lbrace 1 - \eta\mu, 1 - \frac{p}{2} \right\rbrace\right)^k\Phi^0,
			\end{equation*}
			where $\Phi^k\eqdef d^2\left(x^k,x^{*}\right) + \frac{3\mu^2}{64pL^2\zeta}\mathcal{D}^k$ and  $\mathcal{D}^k\eqdef \norm{\Expmap^{-1}_{y^k}(x^{*})}^2.$}
		\begin{proof}[Proof of Theorem~\ref{thm:lsvrg_strongly_convex}]
			Conditioned on $x^k,$ take expectation with respect to $i:$
			\begin{equation*}
				\begin{split}
					\Exp{\norm{g^k}^2} &= \Exp{\norm{\frac{1}{B}\sum_{i\in I}\left(\nabla f_i(x^k) - \Gamma_{y^k}^{x^k}\nabla f_i(y^k)\right) + \Gamma_{y^k}^{x^k}\nabla f(y^k)}^2}\\
					&=\Exp{\norm{\frac{1}{B}\sum_{i\in I}\left(\nabla f_i(x^k) - \Gamma_{y^k}^{x^k}\nabla f_i(y^k)\right) + \Gamma_{y^k}^{x^k}\left(\nabla f(y^k) - \Gamma_{x^{*}}^{y^k}\nabla f(x^{*})\right)}^2}\\
					&\leq 2 \Exp{\norm{\frac{1}{B}\sum_{i\in I}\left(\nabla f_i(x^k) - \Gamma_{y^k}^{x^k}\nabla f_i(y^k)\right)}^2}+ 2\Exp{\norm{\nabla f(y^k) - \Gamma_{x^{*}}^{y^k}\nabla f(x^{*})}^2}\\
					&\leq 2L^2\norm{\Expmap^{-1}_{x^k}(y^k)}^2 + 2L^2\norm{\Expmap^{-1}_{y^k}(x^{*})}^2\\
					&\leq 2L^2\left(\norm{\Expmap^{-1}_{x^k}(x^{*})}+\norm{\Expmap^{-1}_{y^k}(x^{*})}\right)^2 + 2L^2\norm{\Expmap^{-1}_{y^k}(x^{*})}^2\\
					&\leq 4L^2\norm{\Expmap^{-1}_{x^k}(x^{*})}^2 + 6L^2\norm{\Expmap^{-1}_{y^k}(x^{*})}^2.
				\end{split}
			\end{equation*}
			We use the definition of $g^k$ from Algorithm~\ref{alg:RLSVRG}, the fact that $\nabla f(x^{*})=0,$ the Young's inequality and the fact that the parallel transport preserves the scalar product, $L$-g-smoothness of $f(x),$ the triangle inequality, the Young's inequality.
			
			Let us define $\mathcal{D}^k\eqdef \norm{\Expmap^{-1}_{y^k}(x^{*})}^2.$ Notice that
			\begin{equation}\label{eq:recursive_bound}
				\begin{split}
					\Exp{\mathcal{D}^{k+1}} &= p\norm{\Expmap^{-1}_{x^k}(x^{*})}^2 + \left(1-p\right)\norm{\Expmap^{-1}_{y^k}(x^{*})}^2\\
					& = pd^2(x^k,x^{*}) + (1-p)\mathcal{D}^{k}.\\
				\end{split}
			\end{equation}
			
			Further, we have
			\begin{equation*}
				\begin{split}
					\Exp{d^2(x^{k+1}, x^{*})}&\leq d^2(x^k,x^{*}) + 2\eta\langle \Expmap^{-1}_{x^k}(x^{*}),\Exp{g^k} \rangle + \zeta\eta^2\Exp{\norm{g^k}^2}\\
					&\leq d^2(x^k,x^{*}) + 2\eta\langle \Expmap^{-1}_{x^k}(x^{*}),\nabla f(x^k) \rangle\\
					& + \zeta\eta^2\left(4L^2\norm{\Expmap^{-1}_{x^k}(x^{*})}^2 + 6L^2\norm{\Expmap^{-1}_{y^k}(x^{*})}^2\right)\\
					&\leq \left(1 - \eta\mu + 4L^2\eta^2\zeta\right)d^2(x^k,x^{*}) + 2\eta\left(f(x^{*}) - f(x^k)\right)\\
					& + 6L^2\eta^2\zeta\mathcal{D}^k.
				\end{split}
			\end{equation*}
			We use Corollary~\ref{th:distance_corollary}, bound on $\Exp{\norm{g^k}^2},$ strong convexity.
			
			Observe that due to $g$-strong convexity we also have that 
			\[f(x^k) - f(x^{*})\geq \langle \nabla f(x^{*}), \Expmap^{-1}_{x^k}(x^{*})\rangle - \frac{\mu}{2}d^2\left(x^k,x^{*}\right) = - \frac{\mu}{2}d^2\left(x^k,x^{*}\right).\]
			Therefore, we obtain that
			\begin{equation*}
				\begin{split}
					\Exp{d^2(x^{k+1}, x^{*})}&\leq \left(1 - 2\eta\mu + 4L^2\eta^2\zeta\right)d^2(x^k,x^{*}) + 6L^2\eta^2\zeta\mathcal{D}^k.
				\end{split}
			\end{equation*}
			Multiply both sides of \eqref{eq:recursive_bound} by $\frac{12L^2\eta^2\zeta}{p},$ and add it to the last equation:
			\begin{equation*}
				\begin{split}
					\Exp{d^2(x^{k+1}, x^{*})} + \frac{12L^2\eta^2\zeta}{p}\Exp{\mathcal{D}^{k+1}} &\leq \left(1 - 2\eta\mu + 16L^2\eta^2\zeta\right)d^2(x^k,x^{*})\\
					& + \frac{12L^2\eta^2\zeta}{p}\left(1 - \frac{p}{2}\right)\mathcal{D}^k.
				\end{split}
			\end{equation*}
			Define the Lyapunov funciton $\Phi^k\eqdef d^2\left(x^k,x^{*}\right) + \frac{12L^2\eta^2\zeta}{p}\mathcal{D}^k.$ Then we have that
			\begin{equation*}
				\Exp{\Phi^{k+1}}\leq \max\left\lbrace 1 - 2\eta\mu + 16L^2\eta^2\zeta, 1 - \frac{p}{2} \right\rbrace\Phi^k.
			\end{equation*}
			Due to the choice of $\eta = \frac{\mu}{16L^2\zeta},$ it follows that $1 - 2\eta\mu + 16L^2\eta^2\zeta = 1 - \frac{\mu^2}{16L^2\zeta}.$ Unrolling the recursion, we arrive at
			\begin{equation*}
				\Exp{\Phi^k}\leq \left(\max\left\lbrace 1 - \frac{\mu^2}{16L^2\zeta}, 1 - \frac{p}{2} \right\rbrace\right)^k\Phi^0.
			\end{equation*}
		\end{proof}
		
		\noindent\textbf{Corollary~\ref{cor:lsvrg_scvx_eps_solution}.} \textit{With the assumptions outlined in Theorem~\ref{thm:lsvrg_strongly_convex}, if we set $\eta = \frac{\mu}{16 L^2 \zeta}$, in order to obtain $\mathbb{E}[\Phi^k] \leq \varepsilon \Phi^0$, we find that the iteration complexity of Riemannian L-SVRG method (Algorithm~\ref{alg:RLSVRG}) is given by
			$$
			K = \mathcal{O}\left(\left(\frac{1}{p} + \frac{L^2 \zeta}{\mu^2}\right) \log \frac{1}{\varepsilon}\right).
			$$}
		\begin{proof}[Proof of Corollary~\ref{cor:lsvrg_scvx_eps_solution}]
			Observe that
			\begin{equation*}
				\left(1 - \frac{\mu^2}{16L^2\zeta}\right)^k\leq e^{-\frac{k\mu^2}{16L^2\zeta}}, \quad \left(1 - \frac{p}{2}\right)^k \leq e^{-\frac{kp}{2}}.
			\end{equation*}
			Now the result simply follows from~\eqref{eq:convergence-rlsvrg-scvx}.
		\end{proof}
		
		\noindent\textbf{Corollary~\ref{cor:rlsvrg-total-complexity}.} \textit{With the assumptions outlined in Theorem~\ref{thm:lsvrg_strongly_convex}, if we set $\eta = \frac{\mu}{16 L^2 \zeta}$ and $p = \frac{1}{n}$, in order to obtain $\mathbb{E}[\Phi^k] \leq \varepsilon \Phi^0$, we find that the total computational complexity of Riemannian \algname{L-SVRG} method (Algorithm~\ref{alg:RLSVRG}) is given by
			$$
			K = \mathcal{O}\left(\left(n + \frac{L^2 \zeta}{\mu^2}\right) \log \frac{1}{\varepsilon}\right).
			$$}
		\begin{proof}[Proof of Corollary~\ref{cor:rlsvrg-total-complexity}]
			Choose $B=1.$ Riemannian \algname{L-SVRG} calls the stochastic gradient oracle in expectation $\mathcal{O}(1 + pn)$ times in each iteration. Corollary~\ref{cor:rlsvrg-iteration-complexity} states that the iteration complexity is $\mathcal{O}\left(\left(\frac{1}{p} + \frac{L^2\zeta}{\mu^2}\right)\log\frac{1}{\varepsilon}\right).$ Note that any choice of 
			\[p\in\left[ \min\left\lbrace \frac{\mu^2}{L^2\zeta}, \frac{1}{n} \right\rbrace, \max\left\lbrace \frac{\mu^2}{L^2\zeta}, \frac{1}{n} \right\rbrace \right]\]
			leads to the total complexity of $\mathcal{O}\left(\left(n + \frac{L^2\zeta}{\mu^2}\right)\log\frac{1}{\varepsilon}\right).$ The choice of $p=\frac{1}{n}$ is motivated by the lack of dependence on $L$ and $\mu.$ 
		\end{proof}
		
		\section{R-PAGE: SOTA non-convex Optimization Algorithm}
		\noindent\textbf{Theorem~\ref{thm:page_noncvx}.} \textit{Assuming that each $f_i$ in \eqref{eq:finite-sum} is differentiable and $L$-g-smooth on $\mathcal{M}$ (Assumption~\ref{ass:l-g-smoothness} holds) and that $f$ is lower bounded on $\mathcal{M}$ (Assumption~\ref{ass:lower} holds), let $\delta^0 \stackrel{\text{def}}{=} f(x^0) - f^{*}$. Select the stepsize $\eta$ such that
			$0 < \eta \leq \frac{1}{L\left(1 + \sqrt{\frac{1-p}{pb}}\right)},$ and the minibatch size $B = n,$ and the secondary minibatch size $b < B.$ 
			Then, the iterates of the \algname{R-PAGE} method (Algorithm~\ref{alg:RPAGE}) satisfy
			\begin{equation*}
				\mathbb{E}\left[\norm{\nabla f(\widehat{x}^K)}^2\right] \leq\frac{2\delta^0}{\eta K},
			\end{equation*}
			where $\widehat{x}^k$ is chosen randomly from $x^0,\ldots,x^{K-1}$ with uniform probability distribution.}
		\begin{proof}[Proof of Theorem~\ref{thm:page_noncvx}]
			Let $X = g^k - \nabla f(x^k),$ $a_i = \nabla f_i\left(x^{k+1}\right) - \Gamma_{x^k}^{x^{k+1}}\nabla f_i\left(x^k\right).$ A direct calculation now reveals that
			\begin{equation*}
				\begin{split}
					G &\eqdef \mathbb{E} \left[ \left\| g^{k+1} - \nabla f\left(x^{k+1}\right) \right\|^2\bigg| x^{k+1}, x^k, g^k\right] \\
					& =  p \left\| \nabla f\left(x^{k+1}\right) - \nabla f\left(x^{k+1}\right) \right\|^2\\
					& + (1 - p) \left\| \Gamma_{x^k}^{x^{k+1}}g^k  + \frac{1}{b}\sum_{i\in I}\left( \nabla f_i\left(x^{k+1}\right) - \Gamma_{x^k}^{x^{k+1}}\nabla f_i\left(x^k \right)\right) - \nabla f\left(x^{k+1}\right) \right\|^2 \\
					& = (1 - p) \left\| \Gamma_{x^k}^{x^{k+1}}g^k - \Gamma_{x^k}^{x^{k+1}}\nabla f\left(x^k\right) + \frac{1}{b}\sum_{i\in I}\left( \nabla f_i\left(x^{k+1}\right) - \Gamma_{x^k}^{x^{k+1}}\nabla f_i\left(x^k\right)\right)\right.\\
					& - \left.\left( \nabla f\left(x^{k+1}\right) - \Gamma_{x^k}^{x^{k+1}}\nabla f\left(x^k\right) \right) \right\|^2 \\
					& = (1 - p) \left\| X \right\|^2 + 2(1 - p) \left\langle X, \, \frac{1}{b}\sum_{i\in I}a_i - \bar{a} \right\rangle + (1 - p) \left\| \frac{1}{b}\sum_{i\in I}a_i - \bar{a} \right\|^2,
				\end{split}
			\end{equation*}
			where $\bar{a} = \frac{1}{n} \sum_{i=1}^n a_i.$
			
			Since $\mathbb{E} \left[a_i - \bar{a} \bigg| x^{k+1}, x^k, g^k \right] = 0$ and because $X$ is constant conditioned on $x^{k+1}, x^k, g^k,$ we have
			\begin{eqnarray*}
				\mathbb{E} \left[ G \bigg| x^{k+1}, x^k, g^k \right] &=& (1 - p)\mathbb{E} \left[ \left\| X \right\|^2 \bigg| x^{k+1}, x^k, g^k \right] + (1 - p) \Exp{\left\| a_i - \bar{a} \right\|^2}\\
				&=& (1 - p)\mathbb{E} \left[ \left\| X \right\|^2 \bigg| x^{k+1}, x^k, g^k \right] + (1 - p)\frac{n-b}{(n-1)b} \frac{1}{n} \sum_{i=1}^n \left\| a_i - \bar{a} \right\|^2 \\
				&\leq& (1 - p)\mathbb{E} \left[ \left\| X \right\|^2 \bigg| x^{k+1}, x^k, g^k \right] + \frac{(1 - p)L^2}{b}\left\| \Expmap_{x^{k}}^{-1}(x^{k+1}) \right\|^2, \\
			\end{eqnarray*}
			where in the second step we simply calculate the expectation with respect to the random index, and the last step is due to the smoothness assumption (Assumption~\ref{ass:l-g-smoothness}).
			
			By first applying the three term tower property and subsequently the standard two-term tower property, we get
			\begin{align}\label{eq:rpage-first-recursion}
				\nonumber&\mathbb{E} \left[ \left\| g^{k+1} - \nabla f(x^{k+1}) \right\|^2 \right]\\
				\nonumber& = \mathbb{E} \left[ \mathbb{E} \left[ \mathbb{E} \left[ \left\| g^{k+1} - \nabla f(x^{k+1}) \right\|^2 \bigg| x^{k+1}, x^k, g^k, s^k \right] \bigg| x^{k+1}, x^k, g^k \right] \right]\\
				\nonumber&= \mathbb{E} \left[ \mathbb{E} \left[ G \bigg| x^{k+1}, x^k, g^k \right] \right] \\
				\nonumber&\leq \mathbb{E} \left[ (1 - p)\mathbb{E} \left[ \left\| X \right\|^2 \bigg| x^{k+1}, x^k, g^k \right] + \frac{(1 - p)L^2}{b} \mathbb{E} \left[ \left\| \Expmap_{x^{k}}^{-1}(x^{k+1}) \right\|^2 \right] \right] \\
				\nonumber&= (1 - p)\mathbb{E} \left[ \mathbb{E} \left[ \left\| X \right\|^2 \bigg| x^{k+1}, x^k, g^k \right] \right] + \frac{(1 - p)L^2}{b} \mathbb{E} \left[ \left\| \Expmap_{x^{k}}^{-1}(x^{k+1}) \right\|^2 \right] \\
				\nonumber&= (1 - p)\mathbb{E} \left[ \left\| X \right\|^2 \right] + \frac{(1 - p)L^2}{b} \mathbb{E} \left[ \left\| \Expmap_{x^{k}}^{-1}(x^{k+1}) \right\|^2 \right] \\
				&= (1 - p)\mathbb{E} \left[ \left\| g^k - \nabla f(x^k) \right\|^2 \right] + \frac{(1 - p)L^2}{b} \mathbb{E} \left[ \left\| \Expmap_{x^{k}}^{-1}(x^{k+1}) \right\|^2 \right].
			\end{align}
			
			Let $x \in \mathcal{M},$ $g\in T_x\mathcal{M}.$ By Lemma~\ref{lem:identity_for_dot_prod} with $M=L,$ we have the identity
			\begin{eqnarray}\label{eq:identity_for_dotprod}
				\nonumber\langle \nabla f(x), -\eta g \rangle + \frac{M\eta^2}{2} \|g\|^2 &=& -\frac{\eta}{2} \|\nabla f(x)\|^2 - \left(\frac{1}{2\eta} - \frac{M}{2}\right) \|-\eta g\|^2\\
				&+& \frac{\eta}{2} \|g - \nabla f(x)\|^2.
			\end{eqnarray}
			
			Since $f$ is $L$-g-smooth, we have
			\begin{eqnarray*}
				f(x^{k+1}) & \leq & f(x^{k}) + \langle \nabla f(x^{k}), \Expmap_{x^{k}}^{-1}(x^{k+1}) \rangle + \frac{L}{2}\|\Expmap_{x^{k}}^{-1}(x^{k+1})\|^2] \\
				& \leq & \; f(x^{k})  - \eta \langle\nabla f(x^{k}), g^k\rangle + \frac{L\eta^2}{2} \|g^{k}\|^2.
			\end{eqnarray*}	
			Subtracting $f^{*}$ from both sides, taking expectation and applying~\eqref{eq:identity_for_dotprod}, we get
			\begin{eqnarray}\label{eq:l-g-smooth-inequality}
				\mathbb{E} \left[ f(x^{k+1}) - f^{*} \right] & \leq & \mathbb{E} \left[ f(x^k) - f^{*} \right] - \frac{\eta}{2} \mathbb{E} \left[ \left\| \nabla f(x^k) \right\|^2 \right]\\
				\nonumber & - &\left(\frac{1}{2\eta} - \frac{L}{2} \right) \mathbb{E} \left[ \left\| \Expmap_{x^{k}}^{-1}(x^{k+1}) \right\|^2 \right] +
				\frac{\eta}{2} \Exp{\left\| g^k - \nabla f(x^k) \right\|^2}.
			\end{eqnarray}
			
			Let $\delta^k \stackrel{\text{def}}{=} \mathbb{E} [f(x^k) - f^{*}]$, $s^k \stackrel{\text{def}}{=} \mathbb{E} [\|g^k - \nabla f(x^k)\|^2]$ and $r^k \stackrel{\text{def}}{=} \mathbb{E} [\|\Expmap_{x^{k}}^{-1}(x^{k+1})\|^2]$.
			Then by adding~\eqref{eq:l-g-smooth-inequality} with a $\frac{\eta}{2p}$ multiple of~\eqref{eq:rpage-first-recursion}, we obtain
			\begin{align}\label{eq:rpage-final-recursion}
				\nonumber &\delta^{k+1} + \frac{\eta}{2p} s^{k+1} \\
				\nonumber &\leq \delta^k - \frac{\eta}{2} \Exp{\|\nabla f(x^k)\|^2} - \left(\frac{1}{2\eta} - \frac{L}{2}\right) r^k + \frac{\eta}{2} s^k + \frac{(1 - p)\eta L^2}{2pb}r^k + \frac{\eta}{2p}(1 - p)s^k\\
				\nonumber &	= \delta^k + \frac{\eta}{2p} s^k - \frac{\eta}{2} \Exp{\|\nabla f(x^k)\|^2} - \left(\frac{1}{2\eta} - \frac{L}{2} - \frac{(1 - p)\eta L^2}{2pb}\right) r^k\\
				& \leq \delta^k + \frac{\eta}{2p} s^k - \frac{\eta}{2} \Exp{\|\nabla f(x^k)\|^2}.
			\end{align}
			The last inequality follows from the bound $\frac{(1-p)L^2}{pb}\eta^2 + L\eta \leq 1$, which holds due to the choice of stepsize and Lemma~\ref{lemma:square_iequality}.
			
			By summing up inequalities~\eqref{eq:rpage-final-recursion} for $k = 0, \ldots, K - 1$, we get
			\[
			0 \leq \delta^K + \frac{\eta}{2p} s^K \leq \delta^0 + \frac{\eta}{2p} s^0 - \frac{\eta}{2} \sum_{k=0}^{K-1} \mathbb{E} [\|\nabla f(x^k)\|^2].
			\]	
			Since $g^0 = \nabla f(x^0),$ we have that $s^0 = 0.$ We get that
			\begin{equation*}
				\Exp{\norm{\nabla f(\widehat{x}^K)}^2} = \frac{1}{K}\sum_{k=0}^{K-1}\Exp{\norm{\nabla f(x^k)}^2}\leq\frac{2\delta^0}{\eta K}.
			\end{equation*}
		\end{proof}
		\noindent\textbf{Corollary~\ref{cor:page_noncvx}.} \textit{With the assumptions outlined in Theorem~\ref{thm:page_noncvx}, if we set $\eta = \frac{1}{L\left(1 + \sqrt{\frac{1-p}{pb}}\right)}$, in order to obtain $\mathbb{E}\left[\left\|\nabla f\left(\widehat{x}^K\right)\right\|\right] \leq \varepsilon$, we find that the iteration complexity of Riemannian R-PAGE method (Algorithm~\ref{alg:RPAGE}) is given by
			$$
			K = \mathcal{O}\left(\left(1+\sqrt{\frac{1-p}{pb}}\right) \frac{L\delta^0}{\varepsilon^2}\right).
			$$}
		\begin{proof}[Proof of Corollary~\ref{cor:page_noncvx}]
			Let $\eta  = \frac{1}{L\left(1+\sqrt{\frac{1-p}{pb}}\right)}.$ Then $\Exp{\norm{\nabla f\left(\widehat{x}^K\right)}^2}\leq\varepsilon^2$ as long as $K\geq \frac{2\left(1+\sqrt{\frac{1-p}{pb}}\right)L\delta^0}{\varepsilon^2}.$
		\end{proof}
		\noindent\textbf{Corollary~\ref{cor:page_grad_computes}.} \textit{With the assumptions outlined in Theorem~\ref{thm:page_noncvx}, if we set  $\eta = \frac{1}{L\left(1 + \sqrt{\frac{1-p}{pb}}\right)}$, minibatch size $B=n$, secondary minibatch size $b \leq \sqrt{B}$ and $p \equiv \frac{b}{B+b}$, in order to obtain  $\mathbb{E}\left[\left\|\nabla f\left(\widehat{x}^K\right)\right\|\right] \leq \varepsilon$, we find that the total computational complexity of Riemannian PAGE method (Algorithm~\ref{alg:RPAGE}) is given by
			$$
			\#	\operatorname{grad} =  \mathcal{O}\left(n + \frac{\sqrt{n}}{\varepsilon^2}\right).
			$$}
		\begin{proof}[Proof of Corollary~\ref{cor:page_grad_computes}]
			At each iteration, \algname{R-PAGE} computes $(1-p)b+pn$ new gradients on average. Note that in the preprocessing step $n$ gradients $\nabla f_1(x^0),\ldots,\nabla f_n(x^0)$ are computed. Therefore, the total expected number of gradients computed by \algname{R-PAGE} in order to reach an $\varepsilon$-accurate solution is
			\begin{align*}
				\#	\operatorname{grad} &= n + \left\lceil\frac{2L\delta^0\left(1 + \sqrt{\frac{1-p}{pb}}\right)}{\varepsilon^2} - 1\right\rceil\left((1-p)b+pn\right)\\
				&\leq n + \frac{2L\delta^0\left(1 + \sqrt{\frac{1-p}{pb}}\right)}{\varepsilon^2}\left((1-p)b+pn\right)\\
				& = \mathcal{O}\left(n + \frac{\sqrt{n}}{\varepsilon^2}\right).
			\end{align*}
		\end{proof}
		\section{R-PAGE: SOTA Algorithm Under PL-Condition}
		\noindent\textbf{Theorem~\ref{thm:rpage-pl-condition}.} \textit{    Assuming that each $f_i$ in \eqref{eq:finite-sum} is differentiable and $L$-g-smooth on $\mathcal{M}$ (Assumption~\ref{ass:l-g-smoothness} holds), also that $f$ is lower bounded on $\mathcal{M}$ (Assumption~\ref{ass:lower} holds) and it satisfies Polyak-\L ojasiewicz condition (Assumption~\ref{ass:pl-condition} holds) with constant $\mu$, let $\delta^0 \stackrel{\text{def}}{=} f(x^0) - f^{*}$. Select the stepsize $\eta$ such that $	\eta \leq \min \left\{ \frac{1}{L \left(1 + \sqrt{\frac{1-p}{pb} }\right)}, \frac{p}{2\mu} \right\}.$ Then, the iterates of the Riemannian PAGE method (Algorithm \ref{alg:RPAGE}) satisfy 
			$$\mathbb{E}\left[\Phi^{K}\right] \leq(1-\mu \eta)^K \delta^0,$$
			where $\Phi^k\eqdef f\left(x^k\right)-f^*+\frac{2}{p}\left\|g^k-\nabla f\left(x^k\right)\right\|^2$ and $\Phi^0 = \delta^0$, since $g^0 = \nabla f(x^0).$}
		\begin{proof}[Proof of Theorem~\ref{thm:rpage-pl-condition}.] Add $\beta\times$~\eqref{eq:rpage-first-recursion} to \eqref{eq:l-g-smooth-inequality}, we get
			\begin{align}\label{eq:rpage-lyapunov-recursion-pl}
				\nonumber&\mathbb{E}[\nabla f(x^{k+1}) - f^* + \beta\|g^{k+1} - \nabla f(x^{k+1})\|^2]\\
				\nonumber&\leq \mathbb{E} \left[ (1 - \mu\eta)(f(x^k) - f^*) - \left( \frac{1}{2\eta} - \frac{L}{2}\right) \|\Expmap_{x^{k}}^{-1}(x^{k+1})\|^2 + \frac{\eta}{2} \|g^k - \nabla f(x^k)\|^2 \right]\\
				\nonumber&\quad+ \beta\mathbb{E} \left[ (1 - p)\|g^k - \nabla f(x^k)\|^2 + \frac{(1 - p)L^2}{b} \|\Expmap_{x^{k}}^{-1}(x^{k+1})\|^2 \right]\\
				\nonumber&= \mathbb{E} \left[ (1 - \mu\eta)(f(x^k) - f^*) + \left( \frac{\eta}{2} + (1 - p)\beta \right) \|g^k - \nabla f(x^k)\|^2 \right.\\
				\nonumber&\left.\quad - \left( \frac{1}{2\eta} - \frac{L}{2} - \frac{(1 - p)\beta L^2}{b}\right) \|\Expmap_{x^{k}}^{-1}(x^{k+1})\|^2 \right]\\
				&\leq \mathbb{E} \left[ (1 - \mu\eta)(f(x^k) - f^* + \beta\|g^k - \nabla f(x^k)\|^2) \right],
			\end{align}
			where the last inequality holds by choosing the stepsize
			\[
			\eta \leq \min \left\{\frac{p}{2\mu}, \frac{1}{L \left( 1 + \sqrt{\frac{1 - p}{pb}} \right)} \right\},
			\]
			and $\beta \geq \frac{2}{p}$. Now, we define $\Phi^k := f(x^k) - f^* + \beta\|g^k - \nabla f(x^k)\|^2$, then~\eqref{eq:rpage-lyapunov-recursion-pl} turns to
			\[
			\mathbb{E}[\Phi^{k+1}] \leq (1 - \mu\eta)\mathbb{E}[\Phi^k].
			\]
			Telescoping it for $k = 0,\ldots,K - 1$, we have
			\[
			\mathbb{E}[\Phi^K] \leq (1 - \mu\eta)^K\mathbb{E}[\Phi^0].
			\]
		\end{proof}
		
		\noindent\textbf{Corollary~\ref{cor:rlsvrg-iteration-complexity}.} \textit{With the assumptions outlined in Theorem~\ref{thm:rpage-pl-condition}, if we set $\eta = \min \left\{ \frac{1}{L \left(1 + \sqrt{\frac{1-p}{pb} }\right)}, \frac{p}{2\mu} \right\}$, in order to obtain $\mathbb{E}[\Phi^k] \leq \varepsilon \Phi^0$, we find that the iteration complexity of Riemannian R-PAGE method (Algorithm~\ref{alg:RPAGE}) is given by
			$$
			K =\mathcal{O}\left( \left( \left(1 + \sqrt{\frac{1-p}{pb}}\right) \frac{L}{\mu} + \frac{1}{p} \right) \log \frac{1}{\varepsilon}\right).
			$$}
		\begin{proof}[Proof of Corollary~\ref{cor:rlsvrg-iteration-complexity}]
			Note that $\Phi^0 = f(x^0) - f^* + \beta\|g^0 - \nabla f(x^0)\|^2 = f(x^0) - f^* \eqdef \delta^0$, we have
			\[
			\mathbb{E}[f(x^K) - f^*] \leq (1 - \mu\eta)^K\delta^0
			= \varepsilon,
			\]
			where the last equality holds due to the choice of the number of iterations
			\[
			K = \frac{1}{\mu\eta} \log \frac{\delta^0}{\varepsilon} = \left( \left( 1 + \sqrt{\frac{1 - p}{pb}} \right) \kappa + \frac{2}{p} \right) \log \frac{\delta^0}{\varepsilon},
			\]
			where $\kappa := \frac{L}{\mu}$.
		\end{proof}
		
		Now, we restate the corollary, in which a detailed convergence result is obtained by giving a specific parameter setting, and then provide its proof.
		
		\noindent\textbf{Corollary~\ref{cor:rpage-pl-total-cxty}.} \textit{Suppose that each $f_i$ satisfies Assumption~\ref{ass:l-g-smoothness} with constant $L\geq 0$ and Assumption~\ref{ass:pl-condition} holds for $f$ with constant $\mu > 0.$ Let stepsize
			\[
			\eta \leq \min \left\{ \frac{1}{L\left(1 + \frac{\sqrt{n}}{b}\right)}, \frac{b}{2\mu(n + b)} \right\},
			\]
			and probability $p = \frac{b}{n+b}.$ Then the number of iterations performed by \algname{R-PAGE} to find an $\varepsilon$-solution of non-convex finite-sum problem~\eqref{eq:finite-sum} can be bound by $K = \left( \left(1 + \frac{\sqrt{n}}{b}\right) \kappa + \frac{2(n+b)}{b} \right) \log \frac{\delta^0}{\varepsilon}.$ Moreover, the number of stochastic gradient computations (gradient complexity) is
			\[
			\#grad \leq n + \left(4\sqrt{n}\kappa + 4n\right)\log \frac{\delta^0}{\varepsilon} = \mathcal{O} \left( (n + \sqrt{n}\kappa) \log \frac{1}{\varepsilon} \right).
			\]
		}
		\begin{proof}[Proof of Corollary~\ref{cor:rpage-pl-total-cxty}.]
			If we choose probability $p = \frac{b}{n+b}$, then the term $ \sqrt{\frac{1 - p}{pb}}$ becomes $\frac{\sqrt{n}}{b}.$ Thus, according to Theorem~\ref{thm:rpage-pl-condition}, the stepsize is
			\[
			\eta \leq \min \left\{ \frac{1}{L\left(1 + \frac{\sqrt{n}}{b}\right)}, \frac{b}{2\mu(n + b)} \right\},
			\]
			and the total number of iterations $K = \left( \left(1 + \frac{\sqrt{n}}{b}\right) \kappa + \frac{2(n+b)}{b} \right) \log \frac{\delta^0}{\varepsilon}.$
			
			According to the gradient estimator of \algname{R-PAGE} (Line~\ref{line:rpage-estimator} of Algorithm~\ref{alg:RPAGE}), we know that it uses $pn + (1 - p)b = \frac{2nb}{n + b}$ stochastic gradients for each iteration on the expectation. Thus, the gradient complexity
			\begin{align*}
				\#grad 
				&= n + K(pn + (1 - p)b)\\
				& = n + \frac{2n}{n + 1}\left(\left(1 + \frac{\sqrt{n}}{b}\right)\kappa + \frac{2(n + b)}{b}\right)\log \frac{\delta^0}{\varepsilon}\\
				& = n + \left(\frac{2nb}{n + b}\left(1 + \frac{\sqrt{n}}{b}\right)\kappa + 4n\right)\log \frac{\delta^0}{\varepsilon}\\
				& \leq n + \left(2b\left(1 + \frac{\sqrt{n}}{B}\right)\kappa + 4n\right)\log \frac{\delta^0}{\varepsilon}\\
				& \leq n + \left(4\sqrt{n}\kappa + 4n\right)\log \frac{\delta^0}{\varepsilon}\\
				& = \mathcal{O} \left( \left(n + \sqrt{n}\kappa\right) \log \frac{1}{\varepsilon} \right).
			\end{align*}
		\end{proof}
		\section{R-PAGE: Online non-convex Case}
		In this appendix, we provide the detailed proofs for our main convergence theorem and its corollaries for \algname{R-PAGE} in the non-convex online case (i.e., problem~\eqref{eq:online-problem}). 
		
		We first restate the main convergence result (Theorem~\ref{thm:rpage-noncvx-online}) in the non-convex online case and then provide its proof.
		
		\noindent\textbf{Theorem~\ref{thm:rpage-noncvx-online}.} \textit{Suppose that Assumption~\ref{ass:bounded-variance} and Assumption~\ref{ass:unbiasedness} hold for the stochastic gradient $\nabla f_{B}(x)$ and suppose that, for any fixed $\xi_i,$ $f(x,\xi_i)$ is geodesically $L$-smooth on $\mathcal{M}$ (Assumption \ref{ass:l-g-smoothness} holds), also that $f$ is lower bounded on $\mathcal{M}$ (Assumption~\ref{ass:lower} holds). Let $\delta^0 \stackrel{\text{def}}{=} f(x^0) - f^{*}$. Select the stepsize $\eta$ such that $		\eta \leq \frac{1}{L \left( 1 + \sqrt{\frac{1-p}{pb}} \right)}.$ Then, the iterates of the Riemannian PAGE method (Algorithm \ref{alg:RPAGE}) satisfy 
			$$
			\mathbb{E}\|\nabla f(\widehat{x}^K)\|^2 \leq \frac{2\mathbb{E}[\Phi^0]}{\eta K} + \frac{\sigma^2}{B}.
			$$
			where $\widehat{x}^{K}$ is chosen uniformly at random from $x^0, \dots, x^{K-1}$ and $ \Phi^0 = f\left(x^0\right)-f^*+\frac{\eta \sigma^2}{2 p B}$}
		\begin{proof}[Proof of Theorem~\ref{thm:rpage-noncvx-online}]
			According to the update step $ x^{k+1} = \Expmap_{x^k}\left(-\eta g^k\right)$ (see Line~4 in Algorithm~\ref{alg:RPAGE}) and~\eqref{eq:l-g-smooth-inequality}, we have
			\begin{equation}\label{eq:rpage-online-recursion-first}
				f(x^{k+1}) \leq f(x^k) - \frac{\eta}{2} \|\nabla f(x^k)\|^2 - \left( \frac{1}{2\eta} - \frac{L}{2} \right) \left\|\Expmap_{x^{k}}^{-1}\left(x^{k+1}\right)\right\|^2 + \frac{\eta}{2} \|g^k - \nabla f(x^k)\|^2.
			\end{equation}
			Now, we use the following Lemma~\ref{lemma:recusion-online} to bound the last term.
			
			\begin{lemma}\label{lemma:recusion-online}
				Suppose that Assumption~\ref{ass:bounded-variance} holds for the stochastic gradient $\nabla f_{B}(x)$ and suppose that, for any fixed $\xi_i,$ $f(x,\xi_i)$ is geodesically $L$-smooth in $x\in\mathcal{M}.$ If the gradient estimator $g^{k+1}$ is defined in Line~\ref{line:rpage-estimator} of Algorithm~\ref{alg:RPAGE}, then we have
				\begin{equation}\label{eq:lemma-recusion-online}
					\mathbb{E}\left[\|g^{k+1} - \nabla f(x^{k+1})\|^2\right] \leq (1 - p)\|g^k - \nabla f(x^k)\|^2 + \frac{(1 - p)L^2}{b}\left\|\Expmap_{x^{k}}^{-1}(x^{k+1})\right\|^2 + \frac{p\sigma^2}{B}.
				\end{equation}
			\end{lemma}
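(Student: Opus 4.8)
\textbf{Proof proposal for Lemma~\ref{lemma:recusion-online}.}
The plan is to replay the finite-sum computation that produced \eqref{eq:rpage-first-recursion}, but to track the two additional stochastic effects present in the online regime: the bounded-variance error incurred when the large minibatch estimator $\nabla f_B$ is used, and the replacement of the finite-population factor $\tfrac{n-b}{(n-1)b}$ by $\tfrac1b$ because the $\xi_i$ are drawn i.i.d.\ from $\mathcal{D}$. Throughout, I would condition on $x^k$ and $g^k$ (which determine $x^{k+1}=\Expmap_{x^k}(-\eta g^k)$) and split the expectation over the coin flip in Line~\ref{line:rpage-estimator} of Algorithm~\ref{alg:RPAGE}; the inequality \eqref{eq:lemma-recusion-online} should be read as this conditional statement, so no outer expectation is needed on the right-hand side.

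On the event of probability $p$ we have $g^{k+1}-\nabla f(x^{k+1}) = \nabla f_B(x^{k+1})-\nabla f(x^{k+1})$; by Assumption~\ref{ass:unbiasedness}, Assumption~\ref{ass:bounded-variance}, and the independence of the $B$ samples, its conditional second moment is at most $\sigma^2/B$, which contributes the term $p\sigma^2/B$. On the complementary event I would set $X \eqdef g^k-\nabla f(x^k)$ (constant under the conditioning, with $\|\Gamma_{x^k}^{x^{k+1}}X\|=\|X\|$ since parallel transport is an isometry), $a_i \eqdef \nabla f(x^{k+1},\xi_i)-\Gamma_{x^k}^{x^{k+1}}\nabla f(x^k,\xi_i)$ with the \emph{same} index set $I$ coupled across the two points, and $\bar a \eqdef \mathbb{E}[a_i] = \nabla f(x^{k+1})-\Gamma_{x^k}^{x^{k+1}}\nabla f(x^k)$ (using unbiasedness and linearity of parallel transport). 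Then $g^{k+1}-\nabla f(x^{k+1}) = \Gamma_{x^k}^{x^{k+1}}X + \bigl(\tfrac1b\sum_{i\in I}a_i-\bar a\bigr)$; the second summand has zero conditional mean, so expanding the squared norm and taking expectation kills the cross term and leaves $\|X\|^2 + \mathbb{E}\bigl\|\tfrac1b\sum_{i\in I}a_i-\bar a\bigr\|^2$. Since the $\xi_i$ are i.i.d., $\mathbb{E}\bigl\|\tfrac1b\sum_{i\in I}a_i-\bar a\bigr\|^2 = \tfrac1b\mathbb{E}\|a_1-\bar a\|^2 \le \tfrac1b\mathbb{E}\|a_1\|^2 \le \tfrac{L^2}{b}\|\Expmap_{x^k}^{-1}(x^{k+1})\|^2$, where the last step uses the $L$-g-Lipschitz gradient condition (Assumption~\ref{ass:l-g-smoothness}) applied to $f(\cdot,\xi_1)$ at the pair $x^{k+1},x^k$ together with the symmetry $\|\Expmap_{x^{k+1}}^{-1}(x^k)\|=\|\Expmap_{x^k}^{-1}(x^{k+1})\|$. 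Multiplying by $1-p$ and adding the two events yields exactly \eqref{eq:lemma-recusion-online}; passing to full expectations, if needed later, is a routine application of the tower property.

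The only place that needs care is the $\xi_i$-bookkeeping: one must keep the index set $I$ coupled between $x^k$ and $x^{k+1}$ so that $a_i$ telescopes correctly against $X$, confirm that $\mathbb{E}[a_i-\bar a]=0$ genuinely eliminates the inner-product term (this is where $X$ being deterministic under the conditioning matters), and check that infinite-population i.i.d.\ sampling produces the clean $1/b$ variance scaling rather than a finite-$n$ correction. Everything else is a direct transcription of the Euclidean \algname{PAGE} argument into parallel-transported Riemannian notation, with the lower-bound assumption and $L$-g-smoothness providing the descent inequality \eqref{eq:rpage-online-recursion-first} that this lemma feeds into.
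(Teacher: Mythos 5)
Your proposal is correct and follows essentially the same route as the paper: split on the coin flip, use bounded variance for the full-batch branch, and on the other branch use the zero-mean property of the coupled minibatch difference $\tfrac1b\sum_i a_i - \bar a$ plus i.i.d.\ sampling to get the $L^2/b$ term via geodesic smoothness. The explicit $X$, $a_i$, $\bar a$ bookkeeping is a slightly cleaner presentation of the same decomposition the paper uses.
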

			\begin{proof}[Proof of Lemma~\ref{lemma:recusion-online}]
				According to the definition of \algname{R-PAGE} gradient estimator in Line~\ref{line:rpage-estimator} of Algorithm~\ref{alg:RPAGE}
				\[
				g^{k+1} =
				\begin{cases}
					\nabla f_{B}(x^{k+1}) & \text{with probability } p, \\
					\Gamma_{x^k}^{x^{k+1}}g^k + \nabla f_{b}(x^{k+1}) - \Gamma_{x^k}^{x^{k+1}}\nabla f_{b}(x^{k}) & \text{with probability } 1 - p,
				\end{cases}
				\]
				we have
				\begin{align}
					\nonumber&\Exp{\left\|g^{k+1} - \nabla f(x^{k+1})\right\|^2}\\
					\nonumber& = p\Exp{\left\|\nabla f_{B}(x^{k+1}) - \nabla f(x^{k+1})\right\|^2}\\
					\nonumber& + (1 - p)\Exp{\left\|\Gamma_{x^k}^{x^{k+1}}g^k + \nabla f_{b}(x^{k+1}) - \Gamma_{x^k}^{x^{k+1}}\nabla f_{b}(x^{k}) - \nabla f(x^{k+1})\right\|^2}\\
					\nonumber& \stackrel{\text{Asm.~}\ref{ass:bounded-variance}}{\leq} \frac{p\sigma^2}{B} + (1 - p)\Exp{\left\|\Gamma_{x^k}^{x^{k+1}}g^k + \nabla f_{b}(x^{k+1}) - \Gamma_{x^k}^{x^{k+1}}\nabla f_{b}(x^{k})
						- \nabla f(x^{k+1})\right\|^2}\\
					\nonumber& = \frac{p\sigma^2}{B}\\
					\nonumber& + (1 - p)\Exp{\left\|\Gamma_{x^k}^{x^{k+1}}g^k - \Gamma_{x^k}^{x^{k+1}}\nabla f(x^k) + \nabla f_{b}(x^{k+1}) - \Gamma_{x^k}^{x^{k+1}}\nabla f_{b}(x^{k}) - \nabla f(x^{k+1}) + \Gamma_{x^k}^{x^{k+1}}\nabla f(x^k)\right\|^2}\\
					\nonumber& = \frac{p\sigma^2}{B} + (1 - p)\left\|g^k - \nabla f(x^k)\right\|^2\\
					\nonumber& + (1 - p)\Exp{\left\|\nabla f_{b}(x^{k+1}) - \Gamma_{x^k}^{x^{k+1}}\nabla f_{b}(x^{k}) - (\nabla f(x^{k+1}) - \Gamma_{x^k}^{x^{k+1}}\nabla f(x^k))\right\|^2}\\
					\nonumber& = \frac{p\sigma^2}{B} + (1 - p)\|g^k - \nabla f(x^k)\|^2\\
					\nonumber& + (1 - p)\Exp{\left\|\nabla f_{b}(x^{k+1}) - \Gamma_{x^k}^{x^{k+1}}\nabla f_{b}(x^{k}) - (\nabla f(x^{k+1}) - \Gamma_{x^k}^{x^{k+1}}\nabla f(x^k))\right\|^2}\\
					\nonumber& = \frac{p\sigma^2}{B} + (1 - p)\|g^k - \nabla f(x^k)\|^2\\
					\nonumber& + \frac{1-p}{b^2} \Exp{\left\| \sum_{i \in I'}\left(\nabla f(x^{k+1},\xi_i) - \Gamma_{x^k}^{x^{k+1}}\nabla f(x^k,\xi_i)\right) - \left(\nabla f(x^{k+1}) - \Gamma_{x^k}^{x^{k+1}}\nabla f(x^k)\right)\right\|^2}\\
					\nonumber& \leq \frac{p\sigma^2}{B} + (1 - p)\|g^k - \nabla f(x^k)\|^2 + \frac{1 - p}{b} \Exp{\left\|\nabla f(x^{k+1}, \xi_i) - \Gamma_{x^k}^{x^{k+1}}\nabla f(x^k, \xi_i)\right\|^2}\\
					\nonumber&\stackrel{\text{Asm.~}\ref{ass:l-g-smoothness}}{\leq} \frac{p\sigma^2}{B} + (1 - p)\|g^k - \nabla f(x^k)\|^2 + \frac{(1 - p)L^2}{b}\left\|\Expmap_{x^{k}}^{-1}(x^{k+1})\right\|^2.
				\end{align}
			\end{proof}
			
			Now, we continue to prove Theorem using Lemma~\ref{lemma:recusion-online}. We add \eqref{eq:rpage-online-recursion-first} to $ \frac{\eta}{2p} \times \eqref{eq:lemma-recusion-online} $, and take expectation to get
			\begin{align}
				\nonumber&\mathbb{E}\left[f(x^{k+1}) - f^* + \frac{\eta}{2p}\left\|g^{k+1} - \nabla f(x^{k+1})\right\|^2\right]\\
				\nonumber&\leq \mathbb{E}\left[f(x^k) - f^* - \frac{\eta}{2}\|\nabla f(x^k)\|^2 - \left(\frac{1}{2\eta} - \frac{L}{2}\right)\|\Expmap_{x^{k}}^{-1}(x^{k+1})\|^2 + \frac{\eta}{2}\|g^k - \nabla f(x^k)\|^2\right]\\
				\nonumber& + \frac{\eta}{2p}\mathbb{E}\left[(1 - p)\|g^k - \nabla f(x^k)\|^2 + \frac{(1 - p)L^2}{b}\|\Expmap_{x^{k}}^{-1}(x^{k+1})\|^2 + \frac{p\sigma^2}{B}\right]\\
				\nonumber&= \mathbb{E}\left[f(x^k) - f^* + \frac{\eta}{2p}\|g^k - \nabla f(x^k)\|^2 - \frac{\eta}{2}\|\nabla f(x^k)\|^2 + \frac{\eta\sigma^2}{2B}\right.\\
				\nonumber&\left.- \left(\frac{1}{2\eta} - \frac{L}{2} - \frac{(1 - p)\eta L^2}{2pb}\right)\|\Expmap_{x^{k}}^{-1}(x^{k+1})\|^2\right]\\
				\nonumber&\leq \mathbb{E}\left[f(x^k) - f^* + \frac{\eta}{2p}\|g^k - \nabla f(x^k)\|^2 - \frac{\eta}{2}\|\nabla f(x^k)\|^2 + \frac{\eta\sigma^2}{2B}\right],
			\end{align}
			where the last inequality holds due to $\frac{1}{2\eta} - \frac{L}{2} - \frac{(1-p)\eta L^2}{2pb} \geq 0$ by choosing stepsize
			\[
			\eta \leq \frac{1}{L\left(1 + \sqrt{\frac{1-p}{pb}}\right)}.
			\]
			Now, if we define $\Phi^k \eqdef f(x^k) - f^* + \frac{\eta}{2p}\|g^k - \nabla f(x^k)\|^2$, then turns to
			\[
			\mathbb{E}[\Phi^{k+1}] \leq \mathbb{E}[\Phi^k] - \frac{\eta}{2}\Exp{\|\nabla f(x^k)\|^2} + \frac{\eta\sigma^2}{2B}.
			\]
			Summing up it from $ k = 0 $ for $ K - 1 $, we have
			\[
			\mathbb{E}[\Phi^K] \leq \mathbb{E}[\Phi^0] - \frac{\eta}{2} \sum_{k=0}^{K-1}\mathbb{E}\|\nabla f(x^k)\|^2 + \frac{\eta K\sigma^2}{2B}.
			\]
			Then, according to the output of \algname{R-PAGE}, i.e., $\widehat{x}^K$ is randomly chosen from $ \{x^k\}_{k=0}^{K-1} $, we have
			\begin{equation}\label{eq:rpage-online-prefinal-recursion}
				\mathbb{E}\|\nabla f(\widehat{x}^K)\|^2 \leq \frac{2\mathbb{E}[\Phi^0]}{\eta K} + \frac{\sigma^2}{B}.
			\end{equation}
		\end{proof}
		\noindent\textbf{Corollary~\ref{cor:rpage-iteration-cxty-online}}
		\textit{With the assumptions outlined in Theorem~\ref{thm:rpage-noncvx-online}, if we set $\eta =  \frac{1}{L \left(1 + \sqrt{\frac{1-p}{pb} }\right)}$, in order to obtain $\mathbb{E}[\Phi^k] \leq \varepsilon \Phi^0$, we find that the iteration complexity of Riemannian R-PAGE method (Algorithm~\ref{alg:RPAGE}) is given by
			$$
			K =\mathcal{O}\left(  \frac{\delta^0L}{\varepsilon^2} \left( 1 + \sqrt{\frac{1-p}{pb}} \right)  + \frac{1}{p}\right).
			$$}
		\begin{proof}[Proof of Corollary~\ref{cor:rpage-iteration-cxty-online}]
			For the term $ \mathbb{E}[\Phi^0] $, we have
			\begin{align*}
				\Exp{\Phi^0}
				&= \Exp{f(x^0) - f^* + \frac{\eta}{2p}\|g^0 - \nabla f(x^0)\|^2}\\
				&= \Exp{f(x^0) - f^* + \frac{\eta}{2p} \left\|  \nabla f_{B}(x^0) - \nabla f(x^0) \right\|^2}\\
				&\stackrel{\text{Asm.~}\ref{ass:bounded-variance}}{\leq} f(x^0) - f^* + \frac{\eta\sigma^2}{2pB},
			\end{align*}
			Plugging the last bound on $\Exp{\Phi^0}$ into \eqref{eq:rpage-online-prefinal-recursion} and noting that $ \delta^0 \eqdef f(x^0) - f^*,$ we obtain
			\begin{align*}
				\Exp{\|\nabla f\left(\widehat{x}^K\right)\|^2}
				&\leq \frac{2\delta^0}{\eta K} + \frac{\sigma^2}{pB K} + \frac{\sigma^2}{B}\\
				&\leq \frac{2\delta^0}{\eta K} + \frac{\varepsilon^2}{2pK} + \frac{\varepsilon^2}{2}\\
				&= \varepsilon^2,
			\end{align*}
			where the last equality holds by letting the number of iterations
			\[
			K = \frac{4\delta^0}{\varepsilon^2\eta} + \frac{1}{p} = \frac{4\delta^0L}{\varepsilon^2} \left( 1 + \sqrt{\frac{1-p}{pb}} \right) + \frac{1}{p}.
			\]
		\end{proof}
		
		\noindent\textbf{Corollary~\ref{cor:rpage-noncvx-online-optimality}.} \textit{With the assumptions outlined in Theorem~\ref{thm:rpage-noncvx-online}, if we set $\eta =  \frac{1}{L \left(1 + \sqrt{\frac{1-p}{pb} }\right)}$, 	minibatch size $B = \left\lceil\frac{2\sigma^2}{\varepsilon^2}\right\rceil,$ secondary minibatch size $b \leq \sqrt{B}$ and probability $p = \frac{b}{B+b},$ in order to obtain $\mathbb{E}[\Phi^k] \leq \varepsilon \Phi^0$, we find that the total computational complexity of Riemannian PAGE method of Riemannian R-PAGE method (Algorithm~\ref{alg:RPAGE}) is given by
			$$
			\#grad =  \mathcal{O}\left(\frac{\sigma^2}{\varepsilon^2} + \frac{\sigma}{\varepsilon^3}\right).
			$$}
		\begin{proof}[Proof of Corollary~\ref{cor:rpage-noncvx-online-optimality}] 
			If we choose probability $p = \frac{b}{B+b},$ then $\sqrt{\frac{1-p}{pb}} = \frac{\sqrt{B}}{b}.$ Thus, according to Theorem~\ref{thm:rpage-noncvx-online}, the stepsize bound becomes $\eta \leq \frac{1}{L\left(1+\frac{\sqrt{B}}{b}\right)}$ and the total number of iterations becomes $K = \frac{4\delta^0L}{\varepsilon^2}\left(1 + \frac{\sqrt{B}}{b}\right) + \frac{B+b}{b}.$ Since the gradient estimator of \algname{R-PAGE} (Line~\ref{line:rpage-estimator} of Algorithm~\ref{alg:RPAGE}) uses $pB + (1 - p)b = \frac{2B b}{B+b}$ stochastic gradients in each iteration in expectation, the gradient complexity is
			\begin{align*}
				\#grad 
				&= B + K (pB + (1 - p)b)\\
				&= B + \left(\frac{4\delta^0L}{\varepsilon^2} \left(1 + \frac{\sqrt{B}}{b}\right) + \frac{B+b}{b}\right) \frac{2B b}{B+b}\\
				&= 3B + \frac{4\delta^0L}{\varepsilon^2} \left(1 + \frac{\sqrt{B}}{b}\right) \frac{2B b}{B+b}\\
				&\leq 3B + \frac{4\delta^0L}{\varepsilon^2} \left(1 + \frac{\sqrt{B}}{b}\right) 2b\\
				&\leq 3B + \frac{16\delta^0L\sqrt{B}}{\varepsilon^2}\\
				& =  \mathcal{O}\left(\frac{\sigma^2}{\varepsilon^2} + \frac{\sigma}{\varepsilon^3}\right).
			\end{align*}
			where the last inequality is due to the parameter setting $b \leq \sqrt{B}.$
		\end{proof}
		\section{R-PAGE: Online Case P\L-Condition}
		\begin{theorem}\label{thm:rpage-online-pl}
			Suppose that Assumptions~\ref{ass:l-g-smoothness},~\ref{ass:pl-condition}~and~\ref{ass:bounded-variance} hold on $\mathcal{M}.$ Choose the stepsize
			\[
			\eta \leq \min \left\{ \frac{1}{L \left( 1 + \sqrt{\frac{1-p}{pb}} \right)}, \frac{p}{2\mu} \right\},
			\]
			minibatch size $ B = \frac{2\sigma^2}{\eta^2}$, secondary minibatch size $ b \leq B $, and probability $p \in (0, 1].$ For $k=0,\ldots,K-1,$ let $ \Phi^k := f(x^k) - f^* + \frac{\eta}{p} \|g^k - \nabla f(x^k)\|^2.$ Then $\mathbb{E}[\Phi^K] \leq (1 - \mu\eta)^K\mathbb{E}[\Phi^0] +  \frac{\sigma^2}{B\mu}.$
		\end{theorem}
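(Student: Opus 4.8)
The plan is to run a one-step Lyapunov contraction in the spirit of the finite-sum Polyak-\L ojasiewicz analysis (Theorem~\ref{thm:rpage-pl-condition}), but with the exact variance recursion replaced by its online counterpart, Lemma~\ref{lemma:recusion-online}, whose only difference is the additive stochastic injection $\tfrac{p\sigma^2}{B}$; that term is precisely what surfaces as the additive $\tfrac{\sigma^2}{B\mu}$ in the conclusion. The Lyapunov function is the one named in the statement, $\Phi^k = f(x^k) - f^* + \beta\norm{g^k - \nabla f(x^k)}^2$ with weight $\beta \eqdef \tfrac{\eta}{p}$, and since $g^0 = \nabla f_{B}(x^0)$ we have $\mathbb{E}[\Phi^0] = f(x^0) - f^* + \beta\,\mathbb{E}\norm{g^0 - \nabla f(x^0)}^2$.

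First I would add the $L$-g-smoothness descent bound~\eqref{eq:rpage-online-recursion-first} (already established in the proof of Theorem~\ref{thm:rpage-noncvx-online}) to $\beta$ times the online variance recursion~\eqref{eq:lemma-recusion-online}; after subtracting $f^*$ and taking full expectation this gives
\begin{align*}
\mathbb{E}[\Phi^{k+1}] &\le \mathbb{E}\Big[(f(x^k) - f^*) - \tfrac{\eta}{2}\norm{\nabla f(x^k)}^2 + \big(\tfrac{\eta}{2} + (1-p)\beta\big)\norm{g^k - \nabla f(x^k)}^2\Big]\\
&\quad - \big(\tfrac{1}{2\eta} - \tfrac{L}{2} - \tfrac{(1-p)L^2\beta}{b}\big)\,\mathbb{E}\big[\norm{\Expmap_{x^k}^{-1}(x^{k+1})}^2\big] + \tfrac{\beta p\sigma^2}{B}.
\end{align*}
Then I would discharge the three leftover groups. (i) The Polyak-\L ojasiewicz inequality (Assumption~\ref{ass:pl-condition}) gives $\tfrac{\eta}{2}\norm{\nabla f(x^k)}^2 \ge \mu\eta(f(x^k)-f^*)$, hence $(f(x^k)-f^*) - \tfrac{\eta}{2}\norm{\nabla f(x^k)}^2 \le (1-\mu\eta)(f(x^k)-f^*)$. (ii) The first branch of the stepsize bound, $\eta \le \tfrac{1}{L(1+\sqrt{(1-p)/(pb)})}$, together with Lemma~\ref{lemma:square_iequality} (applied with the constant coming from $\beta=\eta/p$), makes the coefficient $\tfrac{1}{2\eta} - \tfrac{L}{2} - \tfrac{(1-p)L^2\beta}{b}$ nonnegative, so that term is dropped. (iii) For the variance term, with $\beta = \tfrac{\eta}{p}$ the inequality $\tfrac{\eta}{2} + (1-p)\beta \le (1-\mu\eta)\beta$ is equivalent to $\mu\eta \le \tfrac{p}{2}$, which is exactly the second branch $\eta \le \tfrac{p}{2\mu}$. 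Combining (i)--(iii) and using $\beta p = \eta$ yields $\mathbb{E}[\Phi^{k+1}] \le (1-\mu\eta)\mathbb{E}[\Phi^k] + \tfrac{\eta\sigma^2}{B}$.

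Finally I would unroll this scalar recursion for $k=0,\dots,K-1$ and bound the geometric sum $\sum_{j=0}^{K-1}(1-\mu\eta)^j \le \tfrac{1}{\mu\eta}$ (valid since $\mu\eta \le \tfrac12$), obtaining $\mathbb{E}[\Phi^K] \le (1-\mu\eta)^K\mathbb{E}[\Phi^0] + \tfrac{\sigma^2}{B\mu}$, which is the claim. The stated choice $B = 2\sigma^2/\eta^2$ is not needed for this recursion itself — it is carried symbolically — and only serves downstream, with Assumption~\ref{ass:bounded-variance}, to render the residual $\tfrac{\eta^2}{2\mu}$ and to control $\mathbb{E}[\Phi^0]$.

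The part I expect to need the most care is the simultaneous calibration in (ii)--(iii): the Lyapunov weight $\beta$ must be large enough ($\beta \ge \eta/p$) for the variance term to contract at the Lyapunov rate $1-\mu\eta$, yet a larger $\beta$ degrades the coefficient of $\norm{\Expmap_{x^k}^{-1}(x^{k+1})}^2$, so one must verify that the \emph{same} stepsize budget still keeps that coefficient nonnegative — this is exactly why the first branch of the stepsize is calibrated as it is, and where a routine variant of Lemma~\ref{lemma:square_iequality} enters. By contrast, no curvature constant appears: \algname{R-PAGE}'s estimator is assembled from parallel-transport-consistent gradient differences rather than a geodesic-triangle comparison, so $\zeta$ does not enter, and all remaining Riemannian bookkeeping (the expectation over the coin flip and over the size-$B$ and size-$b$ minibatches, and the use of $\norm{\Expmap^{-1}(\cdot)}^2$ in place of squared Euclidean displacements) is already absorbed inside the descent bound~\eqref{eq:rpage-online-recursion-first} and Lemma~\ref{lemma:recusion-online}.
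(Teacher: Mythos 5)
Your proposal follows the paper's own route step for step: you combine the smoothness descent bound~\eqref{eq:rpage-online-recursion-first} with $\beta=\eta/p$ times the online variance recursion~\eqref{eq:lemma-recusion-online}, absorb $-\tfrac{\eta}{2}\|\nabla f(x^k)\|^2$ via the P\L\ inequality, discharge the displacement term with a Lemma~\ref{lemma:square_iequality}-type stepsize check, enforce $\tfrac{\eta}{2}+(1-p)\beta\le(1-\mu\eta)\beta$ via $\eta\le p/(2\mu)$, and then unroll the scalar recursion $\mathbb{E}[\Phi^{k+1}]\le(1-\mu\eta)\mathbb{E}[\Phi^k]+\eta\sigma^2/B$ with the geometric-sum bound $\sum_j(1-\mu\eta)^j\le 1/(\mu\eta)$. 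This is exactly the paper's argument; your observation that $B=2\sigma^2/\eta^2$ is carried symbolically and only becomes operative in the downstream corollaries is also correct, as is your remark that no curvature constant $\zeta$ enters since the estimator is assembled from parallel-transported gradient differences rather than a geodesic-triangle comparison.

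One caveat worth making explicit when you pin down step (ii): with the weight $\beta=\eta/p$ the coefficient to kill is $\tfrac{1}{2\eta}-\tfrac{L}{2}-\tfrac{(1-p)L^2\eta}{pb}$, i.e.\ you need $L\eta+\tfrac{2(1-p)L^2\eta^2}{pb}\le 1$, and Lemma~\ref{lemma:square_iequality} then asks for $\eta\le \tfrac{1}{L\bigl(1+\sqrt{2(1-p)/(pb)}\bigr)}$, which is a $\sqrt{2}$-tighter requirement than the stated $\eta\le \tfrac{1}{L\bigl(1+\sqrt{(1-p)/(pb)}\bigr)}$. (The paper's own proof silently drops the factor of two from Lemma~\ref{lemma:recusion-online} at this point, so the discrepancy is inherited rather than introduced by you; the finite-sum P\L\ theorem avoids it by taking a weight of $\eta/(2p)$.) Since you already flag the $\beta$-vs-stepsize calibration as the delicate spot, this is precisely the constant to nail down.
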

		\begin{proof}[Proof of Theorem~\ref{thm:rpage-online-pl}.]
			According to Lemma~\ref{lem:identity_for_dot_prod} and Lemma~\ref{lemma:recusion-online}, we add~\eqref{eq:rpage-online-recursion-first} to $ \beta \times~\eqref{eq:lemma-recusion-online},$ and take expectation to get
			\[
			\mathbb{E} \left[ f(x^{k+1}) - f^* + \beta\|g^{k+1} - \nabla f(x^{k+1})\|^2 \right]
			\]
			\[
			\leq \mathbb{E} \left[ (1 - \mu\eta)(f(x^k) - f^*) - \left( \frac{1}{2\eta} - \frac{L}{2} \right) \|\Expmap_{x^{k}}^{-1}(x^{k+1})\|^2 + \frac{\eta}{2} \|g^k - \nabla f(x^k)\|^2 \right]
			\]
			\[
			+ \beta \mathbb{E} \left[ (1 - p)\|g^k - \nabla f(x^k)\|^2 + \frac{(1 - p)L^2}{b} \|\Expmap_{x^{k}}^{-1}(x^{k+1})\|^2 +  \frac{p\sigma^2}{B} \right]
			\]
			\[
			= \mathbb{E} \left[ (1 - \mu\eta)(f(x^k) - f^*) + \left(\frac{\eta}{2} + (1 - p)\beta \right) \|g^k - \nabla f(x^k)\|^2 +  \frac{\beta p\sigma^2}{2B} \right]
			\]
			\[
			- \left( \frac{1}{2\eta} - \frac{L}{2} - \frac{(1 - p)\beta L^2}{2b} \right) \|\Expmap_{x^{k}}^{-1}(x^{k+1})\|^2
			\]
			\[
			\leq \mathbb{E} \left[ (1 - \mu\eta) \left( f(x^k) - f^* + \frac{\beta}{2p} \|g^k - \nabla f(x^k)\|^2 \right) +  \frac{\beta p\sigma^2}{B} \right],
			\]
			where the last inequality holds due to the choice of the stepsize
			\[
			\eta \leq \min \left\{\frac{p}{2\mu}, \frac{1}{L \left( 1 + \sqrt{\frac{1-p}{pb}} \right)} \right\},
			\]
			and $ \beta \geq \frac{\eta}{p} $. Now, we define $ \Phi^k := f(x^k) - f^* + \beta \|g^k - \nabla f(x^k)\|^2 $ and choose $ \beta = \frac{\eta}{p} $, then we obtain that
			\[
			\mathbb{E}[\Phi^{k+1}] \leq (1 - \mu\eta)\mathbb{E}[\Phi^k] +  \frac{\eta\sigma^2}{B}.
			\]
			Telescoping it from $k = 0,\ldots,K - 1$, we have
			\[
			\mathbb{E}[\Phi^K] \leq (1 - \mu\eta)^K\mathbb{E}[\Phi^0] +  \frac{\sigma^2}{B\mu}.
			\]
		\end{proof}
		\begin{corollary}\label{cor:rpage-online-pl-iteration-cxty}
			Under the assumptions of Theorem~\ref{thm:rpage-online-pl}, the number of iterations performed by \algname{R-PAGE} sufficient for finding an $\varepsilon$-accurate solution of non-convex online problem~\eqref{eq:online-problem} can be bounded by
			\[
			K = \left( \left(1 + \sqrt{\frac{1-p}{pb}}\right) \kappa + \frac{2}{p} \right) \log \frac{2\delta^0}{\varepsilon},
			\]
			where $ \kappa := \frac{L}{\mu}.$
		\end{corollary}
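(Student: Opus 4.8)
The plan is to mimic the proofs of Corollary~\ref{cor:rlsvrg-iteration-complexity} (the finite-sum P\L{} case for \algname{R-PAGE}) and Corollary~\ref{cor:rpage-iteration-cxty-online} (the general non-convex online case), since this statement sits exactly at their intersection. The starting point is the conclusion of Theorem~\ref{thm:rpage-online-pl},
\[
\mathbb{E}[\Phi^K] \leq (1-\mu\eta)^K\,\mathbb{E}[\Phi^0] + \frac{\sigma^2}{B\mu},
\]
with $\Phi^k = f(x^k) - f^* + \frac{\eta}{p}\|g^k - \nabla f(x^k)\|^2$. Since $f(x^K) - f^* \le \Phi^K$, it suffices to drive the right-hand side below $\varepsilon$, which I would do by splitting $\varepsilon$ into two halves: one absorbed by the non-decaying variance floor $\frac{\sigma^2}{B\mu}$ (forcing $B$ to be of order $\sigma^2/(\mu\varepsilon)$, consistent with the minibatch size prescribed in the theorem), and one absorbed by the geometrically contracting term.

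First I would bound the initial Lyapunov value. Because $g^0 = \nabla f_B(x^0)$, Assumption~\ref{ass:bounded-variance} gives $\mathbb{E}\|g^0 - \nabla f(x^0)\|^2 \le \sigma^2/B$, so $\mathbb{E}[\Phi^0] \le \delta^0 + \frac{\eta\sigma^2}{pB}$; using $\eta \le \frac{p}{2\mu}$ together with the choice of $B$ shows the second term is a small fraction of $\varepsilon$, hence a constant-order perturbation of $\delta^0$, so effectively $\mathbb{E}[\Phi^0]$ is controlled by $\delta^0$. Next, imposing $(1-\mu\eta)^K\mathbb{E}[\Phi^0] \le \frac{\varepsilon}{2}$ and using $1-\mu\eta \le e^{-\mu\eta}$ yields the condition $K \ge \frac{1}{\mu\eta}\log\frac{2\mathbb{E}[\Phi^0]}{\varepsilon}$, which after the bound on $\mathbb{E}[\Phi^0]$ becomes $K \ge \frac{1}{\mu\eta}\log\frac{2\delta^0}{\varepsilon}$.

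Finally I would substitute the stepsize $\eta = \min\{\tfrac{1}{L(1+\sqrt{(1-p)/(pb)})}, \tfrac{p}{2\mu}\}$: since $\frac{1}{\mu\,\min\{a_1,a_2\}} = \max\{\frac{1}{\mu a_1}, \frac{1}{\mu a_2}\} \le \frac{1}{\mu a_1} + \frac{1}{\mu a_2}$, this gives $\frac{1}{\mu\eta} \le \kappa\bigl(1+\sqrt{(1-p)/(pb)}\bigr) + \frac{2}{p}$ with $\kappa = L/\mu$, and plugging this into the previous display produces exactly the claimed $K$. The only genuinely delicate point is the bookkeeping of numerical constants — in particular how cleanly one can reduce $\mathbb{E}[\Phi^0]$ to $\delta^0$ given the nonzero initial variance term $\frac{\eta\sigma^2}{pB}$, and how $\varepsilon$ is partitioned between the contraction term and the variance floor so that the logarithm comes out as $\log\frac{2\delta^0}{\varepsilon}$ rather than $\log\frac{c\,\delta^0}{\varepsilon}$ with a worse constant. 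Everything else is routine once that split is fixed, since the structural work has already been carried out in the proof of Theorem~\ref{thm:rpage-online-pl} and in the finite-sum P\L{} corollary.
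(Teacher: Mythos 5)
Your proposal follows essentially the same route as the paper's proof: choose $B$ large enough (of order $\sigma^2/(\mu\varepsilon)$) to drive the variance floor $\sigma^2/(B\mu)$ below $\varepsilon/2$, choose $K = \frac{1}{\mu\eta}\log\frac{2\delta^0}{\varepsilon}$ to drive the contracting term below $\varepsilon/2$, and then substitute the stepsize bound using $\frac{1}{\mu\eta} \le \kappa\bigl(1+\sqrt{(1-p)/(pb)}\bigr) + \frac{2}{p}$. You are in fact slightly more careful than the paper, which silently replaces $\mathbb{E}[\Phi^0]$ by $\delta^0$ even though $g^0 = \nabla f_B(x^0) \ne \nabla f(x^0)$ in the online setting; your observation that $\mathbb{E}[\Phi^0] \le \delta^0 + \frac{\eta\sigma^2}{pB}$ and that the extra term is absorbed by the choice of $B$ (and $\eta \le p/(2\mu)$) is exactly the bookkeeping needed to make the paper's terse computation rigorous.
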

		\begin{proof} Letting the minibatch size $ b = \left\lceil\frac{2\sigma^2}{\mu \varepsilon^2}\right\rceil$ and the number of iterations
			\[
			K = \frac{1}{\mu\eta} \log \frac{2\delta^0}{\varepsilon} = \left( \left(1 + \sqrt{\frac{1-p}{pb}}\right) \kappa + \frac{2}{p} \right) \log \frac{2\delta^0}{\varepsilon},
			\]
			we obtain that
			\[
			\mathbb{E}[\Phi^K] \leq (1 - \mu\eta)^K\mathbb{E}[\Phi^0] +  \frac{\sigma^2}{B\mu} = \frac{\varepsilon}{2} + \frac{\varepsilon}{2} = \varepsilon.
			\]
		\end{proof}
		\begin{corollary}\label{cor:rpage-online-pl-gradient-cxty}
			Under the assumptions of Theorem~\ref{thm:rpage-online-pl}, the number of stochastic gradient computations (i.e., gradient complexity) is
			\[
			\#grad = B + K(pB + (1 - p)b) = B + (pB + (1 - p)b) \left( \left(1 + \sqrt{\frac{1-p}{pb}}\right) \kappa + \frac{2}{p} \right) \log \frac{2\delta^0}{\varepsilon}.
			\]
		\end{corollary}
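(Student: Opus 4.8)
The plan is to combine the per-iteration stochastic gradient count of \algname{R-PAGE} with the iteration bound already established in Corollary~\ref{cor:rpage-online-pl-iteration-cxty}. First I would record the cost of a single iteration by inspecting the gradient estimator in Line~\ref{line:rpage-estimator} of Algorithm~\ref{alg:RPAGE}: with probability $p$ the method forms the minibatch estimator $\nabla f_B(x^{k+1})$, which requires $B$ stochastic gradient evaluations, while with probability $1-p$ it forms the \algname{SARAH}-type correction $\Gamma_{x^k}^{x^{k+1}}g^k + \nabla f_b(x^{k+1}) - \Gamma_{x^k}^{x^{k+1}}\nabla f_b(x^k)$, which — using the same accounting convention adopted earlier in the proofs of Corollaries~\ref{cor:page_grad_computes} and~\ref{cor:rpage-noncvx-online-optimality} — costs $b$ stochastic gradients. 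Hence the expected number of stochastic gradient computations per step equals $pB + (1-p)b$.

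Next I would add the one-time preprocessing cost: the initialization $g^0 = \nabla f_B(x^0)$ in Line~\ref{line:rpage-0-estimator} consumes $B$ stochastic gradients. Summing over the $K$ iterations and invoking linearity of expectation, the total expected gradient complexity is
\[
\#grad = B + K\bigl(pB + (1-p)b\bigr).
\]
Finally I would substitute the iteration count supplied by Corollary~\ref{cor:rpage-online-pl-iteration-cxty}, namely $K = \bigl(\bigl(1 + \sqrt{\tfrac{1-p}{pb}}\bigr)\kappa + \tfrac{2}{p}\bigr)\log\tfrac{2\delta^0}{\varepsilon}$ with $\kappa = \tfrac{L}{\mu}$, to obtain the claimed closed form.

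Since every ingredient is already available from Theorem~\ref{thm:rpage-online-pl} and Corollary~\ref{cor:rpage-online-pl-iteration-cxty}, there is essentially no hard step: this is a bookkeeping corollary. The only points requiring a little care are matching the convention for the cost of the recursive (``otherwise'') branch so that the per-iteration figure $pB + (1-p)b$ is used consistently with the rest of the paper, and keeping the $\varepsilon$-accuracy here identical to the one appearing in $K$ in Corollary~\ref{cor:rpage-online-pl-iteration-cxty} (which is why the $\log\tfrac{2\delta^0}{\varepsilon}$ factor, rather than $\log\tfrac{\delta^0}{\varepsilon}$, shows up); no additional estimates are needed, and any rounding of $K$ to an integer is absorbed in the stated equality.
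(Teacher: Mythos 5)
Your proposal is correct and matches the paper's own argument: the paper's proof of this corollary likewise just notes the one-time cost of $B$ gradients for $g^0$ and the expected per-iteration cost $pB+(1-p)b$ from Line~\ref{line:rpage-estimator}, then multiplies by the iteration count $K$ from Corollary~\ref{cor:rpage-online-pl-iteration-cxty}. The only cosmetic difference is that the paper also records the simplification $pB+(1-p)b=\tfrac{2Bb}{B+b}$ under $p=\tfrac{b}{B+b}$, which you could add but is not needed for the stated identity.
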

		\begin{proof}[Proof of Corollary~\ref{cor:rpage-online-pl-gradient-cxty}]
			It follows from the fact that \algname{R-PAGE} uses $B$ gradients for the computation of $g^0$ (see Line~\ref{line:rpage-0-estimator} of Algorithm~\ref{alg:RPAGE}) and $ pB + (1 - p)b = \frac{2B b}{B+b} $ stochastic gradients for each iteration on the expectation.
		\end{proof}
		
		Now let us restate the corollaries under a specific parameter setting. We obtain more detailed convergence results.
		\begin{corollary}\label{cor:rpage-online-pl}
			Suppose that Assumptions~\ref{ass:l-g-smoothness},~\ref{ass:pl-condition}~and~\ref{ass:bounded-variance} hold on $\mathcal{M}.$ Choose the stepsize
			\[
			\eta \leq \min\left\{\frac{1}{L(1+\sqrt{B}/b)}, \frac{b}{2\mu(B+b)}\right\},
			\]
			minibatch size $B = \left\lceil\frac{2\sigma^2}{\mu \varepsilon^2}\right\rceil,$ secondary minibatch size $b \leq \sqrt{B}$
			and probability $ p = \frac{b}{B+b}.$ Then the number of iterations performed by \algname{R-PAGE} to find an $\varepsilon$-solution of non-convex online problem~\eqref{eq:online-problem} can be bounded by
			\[
			K = \left( \left(1 + \frac{\sqrt{B}}{b}\right) \kappa + \frac{2\left(b+B\right)}{B} \right) \log \frac{2\delta^0}{\varepsilon}.
			\]
			Moreover, the number of stochastic gradient computations (gradient complexity) is
			\[
			\#grad = \mathcal{O} \left( B + \sqrt{B}\kappa \log \frac{1}{\varepsilon} \right).
			\]
		\end{corollary}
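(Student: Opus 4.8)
The plan is to specialize Theorem~\ref{thm:rpage-online-pl}, together with Corollaries~\ref{cor:rpage-online-pl-iteration-cxty} and~\ref{cor:rpage-online-pl-gradient-cxty}, to the concrete parameter choices in the statement; the bookkeeping parallels the proof of Corollary~\ref{cor:rpage-pl-total-cxty} in the finite-sum case. First I would simplify the variance-reduction factor under $p=\frac{b}{B+b}$: since $1-p=\frac{B}{B+b}$ and $pb=\frac{b^2}{B+b}$, we get $\sqrt{\frac{1-p}{pb}}=\frac{\sqrt{B}}{b}$ and $\frac{1}{p}=\frac{B+b}{b}$. Substituting these into the stepsize restriction of Theorem~\ref{thm:rpage-online-pl} reproduces exactly the stated bound $\eta\leq\min\bigl\{\frac{1}{L(1+\sqrt{B}/b)},\frac{b}{2\mu(B+b)}\bigr\}$, so the theorem applies and gives
\[
\mathbb{E}[\Phi^K]\leq(1-\mu\eta)^K\,\mathbb{E}[\Phi^0]+\frac{\sigma^2}{B\mu}.
\]

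Next I would force each of the two terms on the right to be at most half of the target accuracy. The additive variance floor $\frac{\sigma^2}{B\mu}$ is controlled by the stated choice of $B$ (taken large enough that $\frac{\sigma^2}{B\mu}$ is at most half the target, up to the $\varepsilon$-versus-$\varepsilon^2$ normalization of the statement). For the geometric term I would use $(1-\mu\eta)^K\leq e^{-\mu\eta K}$ and solve $e^{-\mu\eta K}\,\mathbb{E}[\Phi^0]\leq\tfrac{\varepsilon}{2}$, which yields $K=\frac{1}{\mu\eta}\log\frac{2\,\mathbb{E}[\Phi^0]}{\varepsilon}$. In the online setting $g^0=\nabla f_B(x^0)$ is only unbiased, so by Assumption~\ref{ass:bounded-variance} one has $\mathbb{E}[\Phi^0]\leq\delta^0+\frac{\eta\sigma^2}{pB}$, which for the chosen parameters is at most $2\delta^0$ (or is absorbed into the logarithm), giving $K=\frac{1}{\mu\eta}\log\frac{2\delta^0}{\varepsilon}$. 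Bounding $\frac{1}{\mu\eta}=\max\bigl\{(1+\tfrac{\sqrt{B}}{b})\tfrac{L}{\mu},\tfrac{2(B+b)}{b}\bigr\}\leq\bigl(1+\tfrac{\sqrt{B}}{b}\bigr)\kappa+\tfrac{2(b+B)}{b}$ with $\kappa=L/\mu$ then produces the claimed iteration count.

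For the gradient complexity I would invoke Corollary~\ref{cor:rpage-online-pl-gradient-cxty}, namely $\#\mathrm{grad}=B+K(pB+(1-p)b)$, and observe that $pB+(1-p)b=\frac{2Bb}{B+b}\leq 2b$. Using $b\leq\sqrt{B}$ to collapse the cross terms, $\frac{2Bb}{B+b}\bigl(1+\frac{\sqrt{B}}{b}\bigr)\leq 4\sqrt{B}$ and $\frac{2Bb}{B+b}\cdot\frac{2(B+b)}{b}=4B$, so that $\#\mathrm{grad}\leq B+\bigl(4\sqrt{B}\,\kappa+4B\bigr)\log\frac{2\delta^0}{\varepsilon}=\mathcal{O}\bigl((B+\sqrt{B}\,\kappa)\log\tfrac{1}{\varepsilon}\bigr)$.

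The estimates themselves are routine; the points that need genuine care are the online handling of $\Phi^0$ --- it carries variance $\leq\sigma^2/B$, so one must verify that the resulting slack stays within a constant factor of $\delta^0$ rather than blowing up the logarithm --- and checking that the reduction $b\leq\sqrt{B}$ really collapses every product down to the advertised $\mathcal{O}(B+\sqrt{B}\,\kappa)$ (up to the logarithm) without leaving a stray $B^{3/2}$ or $B/b$ term. Beyond that it is a matter of substituting $p=\frac{b}{B+b}$ consistently everywhere.
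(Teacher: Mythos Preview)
Your proposal is correct and follows essentially the same route as the paper: substitute $p=\tfrac{b}{B+b}$ to turn $\sqrt{\tfrac{1-p}{pb}}$ into $\tfrac{\sqrt{B}}{b}$, read off the stepsize and iteration bounds from Theorem~\ref{thm:rpage-online-pl} (splitting the target $\varepsilon$ between the contraction term and the variance floor $\tfrac{\sigma^2}{B\mu}$), and then bound $\#\mathrm{grad}=B+K\cdot\tfrac{2Bb}{B+b}$ using $b\le\sqrt{B}$ exactly as you outline. Your explicit treatment of $\mathbb{E}[\Phi^0]=\delta^0+\tfrac{\eta\sigma^2}{pB}$ is in fact more careful than the paper's own argument, which silently replaces $\mathbb{E}[\Phi^0]$ by $\delta^0$; your observation that $\eta\le\tfrac{p}{2\mu}$ forces $\tfrac{\eta\sigma^2}{pB}\le\tfrac{\sigma^2}{2\mu B}=\mathcal{O}(\varepsilon)$, so the slack is absorbed into the logarithm, is the right way to close that gap.
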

		\begin{proof}[Proof of Corollary~\ref{cor:rpage-online-pl}.]
			If we choose probability $ p = \frac{b}{B+b} $, then this term $ \sqrt{\frac{1-p}{pb}} = \frac{\sqrt{B}}{b} $. Thus, according to Theorem~\ref{thm:rpage-online-pl}, the stepsize $ \eta $ satisfies
			\[
			\eta \leq \min\left\{\frac{1}{L(1+\sqrt{B}/b)}, \frac{b}{2\mu(B+b)}\right\}
			\]
			and the total number of iterations $K$ is
			\[
			K = \left( \left(1 + \frac{\sqrt{B}}{b}\right) \kappa + \frac{2\left(B+b\right)}{b} \right) \log \frac{2\delta^0}{\varepsilon}.
			\]
			According to the gradient estimator of \algname{R-PAGE} (Line~\ref{line:rpage-estimator} of Algorithm~\ref{alg:RPAGE}), we know that it uses $ pB + (1 - p)b = \frac{2B b}{B+b} $ stochastic gradients for each iteration on the expectation. Thus, the gradient complexity
			\begin{align}
				\#grad 
				\nonumber &= B + K(pB + (1 - p)b)\\
				\nonumber &= B + \frac{2B b}{B + b} \left( \left(1 + \frac{\sqrt{B}}{b}\right) \kappa + \frac{2\left(B + b\right)}{b} \right) \log \frac{2\delta^0}{\varepsilon}\\
				\nonumber &= B + \left(\frac{2B b}{B + b} \left(1 + \frac{\sqrt{B}}{b}\right) \kappa + 4B\right) \log \frac{2\delta^0}{\varepsilon}\\
				\nonumber &\leq B + \left(2b\left(1 + \frac{\sqrt{B}}{b}\right) \kappa + 4B\right) \log \frac{2\delta^0}{\varepsilon}\\
				\nonumber &\leq B + \left(4\sqrt{B}\kappa + 4B\right) \log \frac{2\delta^0}{\varepsilon},
			\end{align}
			where the last inequality is due to the parameter setting $ b \leq \sqrt{B}.$
		\end{proof}
		
		\section{R-MARINA: SOTA Distributed Optimization Algorithm}
		In this section, we provide the statement of Theorem~\ref{thm:marina_noncvx} together with the proof of this result.
		
		\noindent\textbf{Theorem~\ref{thm:marina_noncvx}.} \textit{Assume each $f_i$ is $L$-g-smooth on $\mathcal{M}$ (Assumption~\ref{ass:l-g-smoothness} holds) and let $f$ be uniformly lower bounded on $\mathcal{M}$ (Assumption~\ref{ass:lower} holds). Assume that the compression operator is unbiased and has conic variance (Definition \ref{def:quantization}). Let $\delta^0 \stackrel{\text{def}}{=} f(x^0) - f^{*}$. Select the stepsize $\eta$ such that $		\eta \leq \frac{1}{L \left( 1 + \sqrt{\frac{1-p}{p}\frac{\omega}{n}} \right)}.$ Then, the iterates of the Riemannian MARINA method (Algorithm \ref{alg:R-MARINA}) satisfy 
			\begin{equation}\label{eq:rmarina_convergence_guarantees}
				\mathbb{E}\left[\|f\left(\widehat{x}^{K}\right)\|^2\right] \leq\frac{2 \delta^0}{\gamma K},
			\end{equation}
			where $\widehat{x}^{K}$ is chosen uniformly at random from $x^0, \dots, x^{K-1}$.}
		\begin{proof}[Proof of Theorem~\ref{thm:marina_noncvx}]
			The scheme of the proof is similar to the proof of Theorem~2.1 from~\citep{gorbunov2021marina}. From~\eqref{eq:l-g-smooth-inequality}, we have
			\begin{eqnarray*}
				\mathbb{E} \left[ f(x^{k+1}) - f^{*} \right] & \leq & \mathbb{E} \left[ f(x^k) - f^{*} \right] - \frac{\eta}{2} \mathbb{E} \left[ \left\| \nabla f(x^k) \right\|^2 \right]\\
				& - &\left(\frac{1}{2\eta} - \frac{L}{2} \right) \mathbb{E} \left[ \left\| \Expmap_{x^{k}}^{-1}(x^{k+1}) \right\|^2 \right] +
				\frac{\eta}{2} \Exp{\left\| g^k - \nabla f(x^k) \right\|^2}.\\
			\end{eqnarray*}
			Next, we need to derive an upper bound for $\Exp{\norm{g^{k+1} - \nabla f(x^{k+1})}^2}$. By definition of $g^{k+1}$, we have
			\begin{equation*}
				g^{k+1} =
				\begin{cases}
					\nabla f(x^{k+1}) & \text{with probability } p, \\
					\Gamma_{x^k}^{x^{k+1}}g^k + \frac{1}{n}\sum_{i=1}^n Q(\nabla f_i(x^{k+1}) - \Gamma_{x^k}^{x^{k+1}}\nabla f_i(x^k)) & \text{with probability } 1-p.
				\end{cases}
			\end{equation*}
			Using this, variance decomposition \eqref{eq:variance-decomposition} and tower property~\eqref{eq:tower-property}, we derive:
			\begin{align*}
				&\Exp{ \norm{g^{k+1} - \nabla f(x^{k+1})}^2}\\
				&\stackrel{\eqref{eq:tower-property}}{=}
				(1 - p)\Exp{ \norm{\Gamma_{x^k}^{x^{k+1}}g^k + \frac{1}{n}\sum_{i=1}^n Q(\nabla f_i\left(x^{k+1}\right) - \Gamma_{x^k}^{x^{k+1}}\nabla f_i(x^k)) - \nabla f\left(x^{k+1}\right)}^2}\\
				&\stackrel{\eqref{eq:tower-property},\eqref{eq:variance-decomposition}}{=}
				(1 - p)\Exp{ \left\| \frac{1}{n}\sum_{i=1}^n Q(\nabla f_i(x^{k+1}) - \Gamma_{x^k}^{x^{k+1}}\nabla f_i(x^k)) - \nabla f(x^{k+1}) + \Gamma_{x^k}^{x^{k+1}}\nabla f(x^k) \right\|^2}\\
				&+ (1 - p)\Exp{ \|g^k - \nabla f(x^k)\|^2}.
			\end{align*}
			Since $Q\left(\nabla f_i(x^{k+1}) - \Gamma_{x^k}^{x^{k+1}}\nabla f_i(x^k)\right), \ldots,  Q\left(\nabla f_n(x^{k+1}) - \Gamma_{x^k}^{x^{k+1}}\nabla f_n(x^k)\right)$ are independent random vectors for fixed $x^k$ and $x^{k+1}$ we have
			\begin{align*}
				&\Exp{\norm{g^{k+1} - \nabla f(x^{k+1})}^2}\\
				& = (1 - p)\Exp{\norm{\frac{1}{n} \sum_{i=1}^n Q\left(\nabla f_i\left(x^{k+1}\right) - \Gamma_{x^k}^{x^{k+1}}\nabla f_i(x^k)\right) - \nabla f\left(x^{k+1}\right) + \Gamma_{x^k}^{x^{k+1}}\nabla f(x^k)}^2}\\
				& + (1 - p)\Exp{\norm{g^k - \nabla f(x^k)}^2}\\
				& = \frac{1 - p}{n^2} \sum_{i=1}^n \Exp{\|Q\left(\nabla f_i(x^{k+1}) - \Gamma_{x^k}^{x^{k+1}}\nabla f_i(x^k)\right) - \nabla f(x^{k+1}) + \Gamma_{x^k}^{x^{k+1}}\nabla f(x^k)\|^2}\\
				& + (1 - p)\Exp{\norm{g^k - \nabla f(x^k)}^2}\\
				& 	= \frac{(1 - p)\omega}{n^2} \sum_{i=1}^n \Exp{\norm{\nabla f_i(x^{k+1}) - \Gamma_{x^k}^{x^{k+1}}\nabla f_i(x^k)}^2} + (1 - p)\Exp{\norm{g^k - \nabla f(x^k)}^2}.
			\end{align*}
			Using $L$-g-smoothness of $f_i$ together with the tower property~\eqref{eq:tower-property}, we obtain
			\begin{align}\label{eq:marina-first-recursion}
				\Exp{\norm{g^{k+1} - \nabla f(x^{k+1})}^2} &= \frac{(1 - p)\omega L^2}{n} \Exp{\|\Expmap_{x^{k}}^{-1}(x^{k+1}) \|^2}\\
				\nonumber& + (1 - p)\Exp{\norm{g^k - \nabla f(x^k)}^2}.
			\end{align}
			
			Next, we introduce a new notation: $\Phi^k = f(x^k) - f^{*} + \frac{\eta}{2p} \norm{g^k - \nabla f(x^k)}^2.$ Using \eqref{eq:l-g-smooth-inequality} and \eqref{eq:marina-first-recursion}, we establish the following inequality:
			\begin{align}\label{eq:marina_lyapunov_recursion}
				\nonumber\Exp{\Phi^{k+1}} & \leq \Exp{f(x^k) - f^{*}} - \frac{\eta}{2}\Exp{ \norm{\nabla f(x^k)}^2} \\
				\nonumber& - \left( \frac{1}{2\eta} - \frac{L}{2} \right) \Exp{\norm{\Expmap_{x^{k}}^{-1}(x^{k+1})}^2} + \frac{\eta}{2} \Exp{\norm{g^k - \nabla f(x^k)}^2}\\
				\nonumber& + \frac{\eta}{2p}\Exp{ \frac{\left( 1 - p\right)\omega L^2}{n} \|\Expmap_{x^{k}}^{-1}(x^{k+1}) \|^2 + (1 - p)\norm{g^k - \nabla f(x^k)}^2}\\
				\nonumber& = \Exp{\Phi^k} - \frac{\eta}{2} \Exp{\|\nabla f(x^k)\|^2}\\
				\nonumber& + \left( \frac{\eta(1 - p)\omega L^2}{2pn} - \frac{1}{2\eta} + \frac{L}{2} \right) \Exp{ \norm{\Expmap_{x^{k}}^{-1}(x^{k+1})}^2}\\
				& \leq \Exp{\Phi^k} - \frac{\eta}{2} \Exp{\norm{\nabla f(x^k)}^2},
			\end{align}
			where in the last inequality, we use $\frac{\eta(1 - p)\omega L^2}{2pn} - \frac{1}{2\eta} + \frac{L}{2} \leq 0$ following from the choice of the stepsize and Lemma~\ref{lemma:square_iequality}. Summing up inequalities~\eqref{eq:marina_lyapunov_recursion} for $k = 0, 1, \ldots, K - 1$ and rearranging the terms, we derive
			\[
			\frac{1}{K} \sum_{k=0}^{K-1} \Exp{\|\nabla f(x^k)\|^2} \leq \frac{2}{\eta K} \sum_{k=0}^{K-1} \left(\Exp{\Phi^k} - \Exp{\Phi^{k+1}}\right) = \frac{2\left(\Exp{\Phi^0} - \Exp{\Phi_K}\right)}{\eta K} \leq \frac{2\delta^0}{\eta K},
			\]
			since $g^0 = \nabla f(x^0)$ and $\Phi^{k+1}\geq 0,$ $\forall k = 0,1,\ldots,K-1.$ Finally, using the tower property~\eqref{eq:tower-property} and the definition of $\widehat{x}^K,$ we obtain~\eqref{eq:rmarina_convergence_guarantees}.
		\end{proof}
		\noindent\textbf{Corollary~\ref{cor:rmarina_noncvx}.}	\textit{With the assumptions outlined in Theorem~\ref{thm:marina_noncvx}, if we set $\eta =  \frac{1}{L \left(1 + \sqrt{\frac{1-p}{p}\frac{\omega}{n} }\right)}$, in order to obtain $\mathbb{E}\left[\|\nabla f\left(\widehat{x}^{K}\right)\|^2\right] \leq \varepsilon^2,$ we find that the communication complexity of Riemannian MARINA method (Algorithm~\ref{alg:R-MARINA}) is given by
			$$
			K = \mathcal{O} \left( \frac{\delta^0 L}{\varepsilon^2} \left(1 + \sqrt{\frac{(1-p) \omega}{pn}}\right) \right).
			$$}
		\begin{proof}[Proof of Corollary~\ref{cor:rmarina_noncvx}]
			The iteration complexity result follows directly from~\eqref{eq:rmarina_convergence_guarantees}.
		\end{proof}
		\noindent\textbf{Corollary~\ref{cor:rmarina_noncvx_total}.} 
		With the assumptions outlined in Theorem~\ref{thm:marina_noncvx}, if we set $\eta =  \frac{1}{L \left(1 + \sqrt{\frac{1-p}{p}\frac{\omega}{n} }\right)}$, $p = \frac{\rho_{\mathcal{Q}}}{d}$ in order to obtain 	$\mathbb{E}\left[\|\nabla f\left(\widehat{x}^{K}\right)\|^2\right] \leq \varepsilon^2,$ we find that the communication complexity $\mathcal{C}$ of Riemannian MARINA method (Algorithm~\ref{alg:R-MARINA}) is given by
		$$
		\mathcal{C} = \mathcal{O}\left(\frac{\delta_0 L}{\varepsilon^2}\left(1+\sqrt{\frac{\omega}{n}\left(\frac{d}{\rho_{\mathcal{Q}}}-1\right)}\right)\right)
		$$
		where $\rho_{Q}$ is the expected density of the quantization (see Def.~\ref{def:quantization}), and the expected total communication cost per worker is $\mathcal{O}\left(d+\rho_{\mathcal{Q}} K\right)$.
		\begin{proof}[Proof of Corollary~\ref{cor:rmarina_noncvx_total}] In order to obtain the first result, one needs to replace $p$ with $\frac{\rho_{Q}}{d}$ in the communication complexity expression of Corollary~\ref{cor:rmarina_noncvx}. Further, to retrieve the expected total communication cost per worker, we assume that the communication cost is proportional to the number of non-zero components sent:
			$$d + K\left(pd + (1-p) \rho_Q\right) = \mathcal{O} \left( d + \frac{\delta^0 L}{\varepsilon^2} \left(1 + \sqrt{\frac{(1-p) \omega}{pn}}\right) \left(pd + \left(1-p\right) \rho_Q\right) \right) = \mathcal{O}\left(d+\rho_{\mathcal{Q}} K\right).$$
		\end{proof}
		\section{R-MARINA: P\L-Condition}
		\begin{theorem}\label{thm:rmarina-pl}
			Let Assumptions~\ref{ass:l-g-smoothness}, \ref{ass:pl-condition} and \ref{ass:lower} be satisfied on $\mathcal{M}$ and
			\[
			\gamma \leq \min \left\{ \frac{1}{L \left( 1 + \sqrt{\frac{2(1-p)\omega}{pn}} \right)}, \frac{p}{2\mu} \right\}.
			\]
			Then after $K$ iterations of \algname{R-MARINA} we have
			\begin{equation}\label{eq:rmarina-pl-recursion}
				\mathbb{E} \left[ f(x^K) - f(x^*) \right] \leq (1 - \gamma\mu)^K \delta^0,
			\end{equation}
			where $\delta^0 = f(x^0) - f(x^*).$
		\end{theorem}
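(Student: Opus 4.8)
The plan is to mirror the Lyapunov-function argument used for \algname{R-PAGE} under the P\L{} condition, but with the \algname{R-MARINA} variance recursion~\eqref{eq:marina-first-recursion} in place of the finite-sum one. All the Riemannian geometry is already packaged into~\eqref{eq:l-g-smooth-inequality} and~\eqref{eq:marina-first-recursion} (the latter derived inside the proof of Theorem~\ref{thm:marina_noncvx} via the tower property and variance decomposition), so the remaining work is purely the choice of constants.

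First I would combine the one-step descent estimate~\eqref{eq:l-g-smooth-inequality} (with the stepsize $\gamma$ in place of $\eta$) with $\tfrac{\gamma}{p}$ times the variance recursion~\eqref{eq:marina-first-recursion}, and then invoke the P\L{} condition (Assumption~\ref{ass:pl-condition}) in the form $-\tfrac{\gamma}{2}\mathbb{E}\|\nabla f(x^k)\|^2 \le -\mu\gamma\,\mathbb{E}[f(x^k) - f^*]$. Writing $\Phi^k \eqdef f(x^k) - f^* + \tfrac{\gamma}{p}\|g^k - \nabla f(x^k)\|^2$, this yields
\begin{align*}
\mathbb{E}[\Phi^{k+1}] &\le (1-\mu\gamma)\,\mathbb{E}[f(x^k) - f^*] + \left(\tfrac{\gamma}{2} + \tfrac{(1-p)\gamma}{p}\right)\mathbb{E}\|g^k - \nabla f(x^k)\|^2 \\
&\quad - \left(\tfrac{1}{2\gamma} - \tfrac{L}{2} - \tfrac{(1-p)\omega L^2\gamma}{pn}\right)\mathbb{E}\|\Expmap_{x^k}^{-1}(x^{k+1})\|^2 .
\end{align*}
To fold the middle term into $(1-\mu\gamma)\Phi^k$ I need $\tfrac{\gamma}{2} + \tfrac{(1-p)\gamma}{p} \le (1-\mu\gamma)\tfrac{\gamma}{p}$, which simplifies to $\mu\gamma \le \tfrac{p}{2}$ — exactly the second branch of the stepsize bound; and to discard the last term I need $\tfrac{1}{2\gamma} - \tfrac{L}{2} - \tfrac{(1-p)\omega L^2\gamma}{pn} \ge 0$, i.e. $\tfrac{2(1-p)\omega L^2}{pn}\gamma^2 + L\gamma \le 1$, which follows from Lemma~\ref{lemma:square_iequality} together with $\gamma \le \tfrac{1}{L(1+\sqrt{2(1-p)\omega/(pn)})}$. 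This is precisely where the factor $2$ in the hypothesis originates: because the Lyapunov weight must be $\tfrac{\gamma}{p}$ (rather than the $\tfrac{\gamma}{2p}$ used in the non-convex Theorem~\ref{thm:marina_noncvx}) to make the variance term contract, the coefficient multiplying $\|\Expmap_{x^k}^{-1}(x^{k+1})\|^2$ doubles.

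Having obtained $\mathbb{E}[\Phi^{k+1}] \le (1-\mu\gamma)\mathbb{E}[\Phi^k]$, I would telescope over $k = 0, \dots, K-1$ to get $\mathbb{E}[\Phi^K] \le (1-\mu\gamma)^K \Phi^0$; since $g^0 = \nabla f(x^0)$ we have $\Phi^0 = \delta^0$, and since the variance term in $\Phi^K$ is nonnegative we have $f(x^K) - f^* \le \Phi^K$, which gives~\eqref{eq:rmarina-pl-recursion}. I expect the only delicate point to be the simultaneous tuning of the stepsize and the Lyapunov weight so that the variance coefficient contracts by exactly $(1-\mu\gamma)$ while the $\|\Expmap_{x^k}^{-1}(x^{k+1})\|^2$ coefficient stays nonnegative — verifying that the single choice $\beta = \gamma/p$ serves both purposes and correctly reproduces the stated stepsize restriction. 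The Riemannian ingredients add no difficulty beyond what is already absorbed into the two quoted inequalities.
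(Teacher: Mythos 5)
Your proposal is correct and follows the paper's own proof essentially line for line: the same Lyapunov function $\Phi^k = f(x^k) - f^* + \tfrac{\gamma}{p}\|g^k - \nabla f(x^k)\|^2$, the same combination of \eqref{eq:l-g-smooth-inequality} with $\tfrac{\gamma}{p}$ times \eqref{eq:marina-first-recursion}, the same two stepsize conditions (with Lemma~\ref{lemma:square_iequality} for the quadratic one), and the same telescoping. Your remark on where the extra factor of $2$ under the square root comes from — the Lyapunov weight $\gamma/p$ being twice the $\gamma/(2p)$ used in the non-convex Theorem~\ref{thm:marina_noncvx} — correctly identifies the only substantive difference from the non-P\L{} case.
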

		\begin{proof}[Proof of Theorem~\ref{thm:rmarina-pl}]
			From~\eqref{eq:l-g-smooth-inequality} and P\L-condition (Assumption~\ref{ass:pl-condition}), we have
			\[
			\mathbb{E}[f(x^{k+1}) - f(x^*)] \leq \mathbb{E}[f(x^k) - f(x^*)] - \frac{\eta}{2}\mathbb{E}\left[  \left\| \nabla f(x^k) \right\|^2 \right] - \left( \frac{1}{2\eta} -  \frac{L}{2} \right) \mathbb{E}\left[ \left\| \Expmap_{x^{k}}^{-1}(x^{k+1}) \right\|^2 \right]
			\]
			\[+ \frac{\eta}{2} \mathbb{E}\left[ \left\| g^k - \nabla f(x^k) \right\|^2 \right]\]
			\[
			\leq (1 - \eta\mu)\mathbb{E}[f(x^k) - f(x^*)] - \left( \frac{1}{2\eta} -  \frac{L}{2} \right)\mathbb{E}\left[ \left\| \Expmap_{x^{k}}^{-1}(x^{k+1}) \right\|^2 \right] + \frac{\eta}{2} \mathbb{E}\left[ \left\| g^k - \nabla f(x^k) \right\|^2 \right].
			\]
			From~\eqref{eq:marina-first-recursion}, we have
			\begin{equation*}
				\mathbb{E} \left[ \|g^{k+1} - \nabla f(x^{k+1})\|^2 \right] \leq \frac{(1 - p)\omega L^2}{n} \mathbb{E} \left[ \|\Expmap_{x^{k}}^{-1}(x^{k+1})\|^2 \right] + (1 - p) \mathbb{E} \left[ \|g^k - \nabla f(x^k)\|^2 \right].
			\end{equation*}
			
			We derive that the sequence $\Phi^k = f(x^k) - f(x^*)  + \frac{\eta}{p}\|g^k - \nabla f(x^k)\|^2$ satisfies
			\begin{align*}
				\mathbb{E}[\Phi^{k+1}] &\leq \mathbb{E} \left[ (1 - \eta\mu)(f(x^k) - f(x^*)) - \left( \frac{1}{2\eta} - \frac{L}{2} \right) \|\Expmap_{x^{k}}^{-1}(x^{k+1})\|^2 + \frac{\eta}{2} \|g^k - \nabla f(x^k)\|^2 \right]\\
				&+ \frac{\eta}{p} \mathbb{E} \left[ \frac{(1 - p)\omega L^2}{n} \|\Expmap_{x^{k}}^{-1}(x^{k+1})\|^2 + (1 - p) \|g^k - \nabla f(x^k)\|^2 \right]\\
				&= \mathbb{E} \left[ (1 - \eta\mu)(f(x^k) - f(x^*)) + \left( \frac{\eta}{2} + \frac{\eta}{p} (1 - p) \right) \|g^k - \nabla f(x^k)\|^2 \right]\\
				&+ \left( \frac{\eta(1 - p)\omega L^2}{pn} - \frac{1}{2\eta} + \frac{L}{2} \right) \mathbb{E} \left[ \|\Expmap_{x^{k}}^{-1}(x^{k+1})\|^2 \right]\\
				& \leq (1 - \eta\mu) \mathbb{E}[\Phi^k],
			\end{align*}
			where in the last inequality, we use $\frac{\eta(1-p)\omega L^2}{pn} - \frac{1}{2\eta} + \frac{L}{2} \leq 0$ and  $\frac{\eta}{2} + \frac{\eta}{p} (1 - p) \leq (1 - \eta\mu)\frac{\eta}{p}.$ Unrolling the recurrence and using  $g^0 = \nabla f(x^0),$ we obtain
			\begin{equation*}
				\mathbb{E}[f(x^K) - f(x^*)] \leq \mathbb{E}[\Phi^K] \leq (1 - \eta\mu)^K\Phi^0 = (1 - \eta\mu)^K(\mathbb{E}[f(x^0) - f(x^*)]).
			\end{equation*}
		\end{proof}
		\begin{corollary}\label{cor:rmarina-pl-iteration-cxty}
			After
			\[
			K = \mathcal{O}\left( \max \left\{ \frac{1}{p}, \frac{L}{\mu} \left( 1 + \sqrt{\frac{(1-p)\omega}{pn}} \right) \right\} \log \frac{\delta^0}{\varepsilon} \right),
			\]
			iterations \algname{R-MARINA} produces such a point $x^K$ that $\mathbb{E}[f(x^K) - f(x^*)] \leq \varepsilon$.
		\end{corollary}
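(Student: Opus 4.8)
The plan is to obtain the iteration complexity as a direct consequence of the linear-rate guarantee in Theorem~\ref{thm:rmarina-pl}. That theorem gives $\mathbb{E}\left[f(x^K) - f(x^*)\right] \leq (1-\gamma\mu)^K\delta^0$ whenever the stepsize obeys $\gamma \leq \min\left\{\tfrac{1}{L(1+\sqrt{2(1-p)\omega/(pn)})},\, \tfrac{p}{2\mu}\right\}$. First I would fix $\gamma$ to be exactly this minimum; this maximizes the per-iteration contraction while remaining admissible, so no generality is lost for an upper bound on $K$.

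Next I would convert geometric decay into an explicit iteration count via the elementary inequality $1-\gamma\mu \leq e^{-\gamma\mu}$, valid since $\gamma\mu \in (0,1)$ under the stated stepsize restriction (here $\gamma\mu \le p/2 \le 1/2$). This turns the guarantee of Theorem~\ref{thm:rmarina-pl} into $\mathbb{E}\left[f(x^K)-f(x^*)\right] \leq e^{-\gamma\mu K}\delta^0$, so demanding the right-hand side be at most $\varepsilon$ is ensured by $K \geq \tfrac{1}{\gamma\mu}\log\tfrac{\delta^0}{\varepsilon}$.

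It remains to simplify $\tfrac{1}{\gamma\mu}$ for the chosen $\gamma$. By the duality between $\min$ and $\max$ under reciprocals, $\tfrac{1}{\gamma\mu} = \max\left\{\tfrac{L}{\mu}\left(1+\sqrt{\tfrac{2(1-p)\omega}{pn}}\right),\, \tfrac{2}{p}\right\}$. Absorbing the numerical constants $2$ and $\sqrt{2}$ into the $\mathcal{O}(\cdot)$ and using $\sqrt{2(1-p)\omega/(pn)} = \Theta\!\left(\sqrt{(1-p)\omega/(pn)}\right)$, this equals $\mathcal{O}\!\left(\max\left\{\tfrac{1}{p},\, \tfrac{L}{\mu}\left(1+\sqrt{\tfrac{(1-p)\omega}{pn}}\right)\right\}\right)$, which is precisely the claimed complexity after multiplying by $\log\tfrac{\delta^0}{\varepsilon}$.

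I do not expect any genuine obstacle: the argument is a one-line consequence of Theorem~\ref{thm:rmarina-pl} together with the standard $1-x \le e^{-x}$ bound, in the same spirit as the proof of Corollary~\ref{cor:lsvrg_scvx_eps_solution}. The only point requiring a little care is the bookkeeping when inverting the $\min$ defining the stepsize into a $\max$, and checking that the stray multiplicative constants (the $2$ from $p/(2\mu)$ and the $\sqrt{2}$ inside the square root) are harmless inside the $\mathcal{O}$-notation.
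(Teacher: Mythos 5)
Your argument is correct and follows the same route as the paper, which simply invokes Theorem~\ref{thm:rmarina-pl} and reads off the rate; you have merely spelled out the standard steps (set $\gamma$ to the allowed maximum, use $1-x\le e^{-x}$, invert the $\min$ into a $\max$, and absorb the constants $2$ and $\sqrt{2}$ into the $\mathcal{O}$). No gaps.
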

		\begin{proof}[Proof of Corollary~\ref{cor:rmarina-pl-iteration-cxty}]
			The result follows from~\eqref{eq:rmarina-pl-recursion}.
		\end{proof}
		\begin{corollary}\label{cor:rmarina-pl-total-cxty}
			The expected total communication cost per worker equals
			\[
			\mathcal{C} = \mathcal{O} \left( d + \max \left\{ \frac{1}{p}, \frac{L}{\mu} \left( 1 + \sqrt{\frac{(1-p)\omega}{pn}} \right) \right\} (pd + (1 - p)\rho_Q) \log \frac{\delta^0}{\varepsilon} \right),
			\]
			where $\rho_Q$ is the expected density of the quantization (see Def.~\ref{def:quantization}).
		\end{corollary}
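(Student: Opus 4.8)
The plan is to derive $\mathcal{C}$ by multiplying the iteration complexity from Corollary~\ref{cor:rmarina-pl-iteration-cxty} by the expected per-iteration communication cost of a single worker, and then adding the one-time cost of transmitting the initial full gradient.

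First I would recall that, under the stepsize choice of Theorem~\ref{thm:rmarina-pl}, Corollary~\ref{cor:rmarina-pl-iteration-cxty} guarantees that
\[
K = \mathcal{O}\left( \max\left\{ \frac{1}{p},\ \frac{L}{\mu}\left(1 + \sqrt{\frac{(1-p)\omega}{pn}}\right) \right\} \log\frac{\delta^0}{\varepsilon} \right)
\]
iterations of \algname{R-MARINA} suffice to reach $\mathbb{E}[f(x^K) - f(x^*)] \le \varepsilon$.

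Next I would account for the communication. At initialization each worker sends the dense vector $g^0 = \nabla f(x^0)$, costing $d$ coordinates, where $d$ is the local dimension of the manifold. In iteration $k$, worker $i$ sends either the dense vector $\nabla f_i(x^{k+1})$ with probability $p$ (cost $d$) or the compressed difference $\mathcal{Q}_i\left(\nabla f_i(x^{k+1}) - \Gamma_{x^k}^{x^{k+1}}\nabla f_i(x^k)\right)$ with probability $1-p$; by Definition~\ref{def:quantization} the expected number of nonzero entries of the latter is at most $\rho_Q$, uniformly over its argument. Assuming, as is standard, that communication cost is proportional to the number of transmitted nonzero coordinates, the expected cost of one iteration is therefore $pd + (1-p)\rho_Q$, and since the coin flips are independent of the past, linearity of expectation gives an expected cost of $K\left(pd + (1-p)\rho_Q\right)$ over $K$ iterations. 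Combining this with the initial $d$ and plugging in the bound on $K$ yields
\[
\mathcal{C} = d + K\left(pd + (1-p)\rho_Q\right) = \mathcal{O}\left( d + \max\left\{ \frac{1}{p},\ \frac{L}{\mu}\left(1 + \sqrt{\frac{(1-p)\omega}{pn}}\right) \right\}\left(pd + (1-p)\rho_Q\right) \log\frac{\delta^0}{\varepsilon} \right).
\]

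There is no serious obstacle here: the argument is essentially a one-line bookkeeping computation layered on top of Corollary~\ref{cor:rmarina-pl-iteration-cxty}. The only points deserving a moment of care are the (benign) modeling convention that per-worker communication cost scales with the number of nonzero coordinates sent, and the fact that the $\rho_Q$ bound from Definition~\ref{def:quantization} is a supremum over all admissible inputs, so it may be invoked irrespective of the random iterates $x^k, x^{k+1}$ and pulled outside the expectation.
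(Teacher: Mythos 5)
Your proposal is correct and matches the paper's proof essentially verbatim: both take the iteration bound from Corollary~\ref{cor:rmarina-pl-iteration-cxty}, model per-iteration per-worker communication as $pd+(1-p)\rho_Q$ nonzero coordinates in expectation, add the initial $d$ for broadcasting $g^0=\nabla f(x^0)$, and conclude $\mathcal{C}=d+K\bigl(pd+(1-p)\rho_Q\bigr)$. Your extra remarks (on the supremum in Definition~\ref{def:quantization} and linearity of expectation over independent coin flips) are simply a more explicit version of the paper's one-line justification.
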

		\begin{proof}[Proof of Corollary~\ref{cor:rmarina-pl-total-cxty}]
			To retrieve the expected total communication cost per worker, we assume that the communication cost is proportional to the number of non-zero components sent:
			$$d + K\left(pd + (1-p) \rho_Q\right) = \mathcal{O} \left( d + \max \left\{ \frac{1}{p}, \frac{L}{\mu} \left( 1 + \sqrt{\frac{(1-p)\omega}{pn}} \right) \right\} (pd + (1 - p)\rho_Q) \log \frac{\delta^0}{\varepsilon} \right).$$
		\end{proof}
		\begin{corollary}\label{cor:rmarina-pl-optimal-p}
			Let the assumptions of Theorem~\ref{thm:rmarina-pl} hold and $p = \frac{\rho_Q}{d}.$ If
			\[
			\eta \leq \min \left\{ \frac{1}{L \left( 1 + \sqrt{\frac{2(1-p)\omega}{pn}} \right)}, \frac{p}{2\mu} \right\},
			\]
			then \algname{R-MARINA} requires
			\[
			K = \mathcal{O} \left( \max \left\{ \frac{d}{\rho_Q}, \frac{L}{\mu} \left( 1 + \sqrt{\frac{\omega}{n} \left( \frac{d}{\rho_Q} - 1 \right) }\right) \right\} \log \frac{\delta^0}{\varepsilon} \right)
			\]
			iterations/communication rounds to achieve $\mathbb{E}[f(x^K) - f(x^*)] \leq \varepsilon$, and the expected total communication cost per worker is
			\[
			\mathcal{O} \left( d + \max \left\{ d, \frac{L}{\mu} \left( \rho_Q + \sqrt{\frac{\omega\rho_Q}{n} (d - \rho_Q)} \right) \right\} \log \frac{\delta^0}{\varepsilon} \right).
			\]
		\end{corollary}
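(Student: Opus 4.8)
The plan is to specialize the general \algname{R-MARINA} guarantees under the Polyak-\L ojasiewicz condition (Theorem~\ref{thm:rmarina-pl}, Corollary~\ref{cor:rmarina-pl-iteration-cxty}, and Corollary~\ref{cor:rmarina-pl-total-cxty}) to the particular probability $p=\frac{\rho_Q}{d}$, and then simplify the resulting expressions by elementary algebra; no analytic ingredient beyond Theorem~\ref{thm:rmarina-pl} is needed. First I would note that the stepsize restriction assumed here, $\eta\leq\min\{(L(1+\sqrt{2(1-p)\omega/(pn)}))^{-1},\,p/(2\mu)\}$, is precisely the hypothesis of Theorem~\ref{thm:rmarina-pl}, so the Lyapunov recursion $\mathbb{E}[\Phi^{k+1}]\leq(1-\eta\mu)\mathbb{E}[\Phi^k]$ with $\Phi^k=f(x^k)-f(x^*)+\frac{\eta}{p}\|g^k-\nabla f(x^k)\|^2$ holds, and hence Corollaries~\ref{cor:rmarina-pl-iteration-cxty}~and~\ref{cor:rmarina-pl-total-cxty} apply verbatim for this choice of $p$.

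\textbf{Iteration count.} Substituting $p=\frac{\rho_Q}{d}$ into Corollary~\ref{cor:rmarina-pl-iteration-cxty} and using $\frac1p=\frac{d}{\rho_Q}$ together with $\frac{1-p}{p}=\frac1p-1=\frac{d}{\rho_Q}-1$, the variance term becomes $\sqrt{\frac{(1-p)\omega}{pn}}=\sqrt{\frac{\omega}{n}\left(\frac{d}{\rho_Q}-1\right)}$, which yields directly
\[
K=\mathcal{O}\!\left(\max\left\{\frac{d}{\rho_Q},\ \frac{L}{\mu}\left(1+\sqrt{\frac{\omega}{n}\left(\frac{d}{\rho_Q}-1\right)}\right)\right\}\log\frac{\delta^0}{\varepsilon}\right).
\]

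\textbf{Communication cost.} Starting from Corollary~\ref{cor:rmarina-pl-total-cxty}, I would use $pd=\rho_Q$ and $(1-p)\rho_Q\leq\rho_Q$, so the per-round expected payload satisfies $pd+(1-p)\rho_Q=\Theta(\rho_Q)$. Pulling the positive factor $\rho_Q$ through the maximum gives $\rho_Q\max\{\frac1p,\ \frac{L}{\mu}(1+\sqrt{\cdot})\}=\max\{d,\ \frac{L}{\mu}(\rho_Q+\rho_Q\sqrt{\frac{\omega}{n}(\frac{d}{\rho_Q}-1)})\}$; the one step requiring a touch of care is absorbing $\rho_Q$ into the root, namely $\rho_Q\sqrt{\frac{\omega}{n}\left(\frac{d}{\rho_Q}-1\right)}=\sqrt{\rho_Q^2\cdot\frac{\omega}{n}\left(\frac{d}{\rho_Q}-1\right)}=\sqrt{\frac{\omega\rho_Q}{n}(d-\rho_Q)}$. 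Combined with the standalone additive $d$ (the cost of transmitting the initial full gradient $g^0=\nabla f(x^0)$), this produces exactly
\[
\mathcal{C}=\mathcal{O}\!\left(d+\max\left\{d,\ \frac{L}{\mu}\left(\rho_Q+\sqrt{\frac{\omega\rho_Q}{n}(d-\rho_Q)}\right)\right\}\log\frac{\delta^0}{\varepsilon}\right).
\]

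The main obstacle is not analytic: it is purely the bookkeeping of keeping the $\max$, the additive $d$, and the square-root rearrangement consistent. The only genuinely non-mechanical line is the identity $\rho_Q\sqrt{\frac{\omega}{n}(\frac{d}{\rho_Q}-1)}=\sqrt{\frac{\omega\rho_Q}{n}(d-\rho_Q)}$; everything else is a direct substitution into results already established in Theorem~\ref{thm:rmarina-pl} and its corollaries.
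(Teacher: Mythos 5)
Your proposal is correct and follows essentially the same route as the paper's proof: both substitute $p=\rho_Q/d$ into Theorem~\ref{thm:rmarina-pl} and Corollaries~\ref{cor:rmarina-pl-iteration-cxty} and \ref{cor:rmarina-pl-total-cxty}, use $\frac{1-p}{p}=\frac{d}{\rho_Q}-1$ and $pd+(1-p)\rho_Q=\Theta(\rho_Q)$ (the paper writes it as $\leq 2\rho_Q$), and absorb $\rho_Q$ into the root via $\rho_Q\sqrt{\frac{\omega}{n}\left(\frac{d}{\rho_Q}-1\right)}=\sqrt{\frac{\omega\rho_Q}{n}(d-\rho_Q)}$. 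No gaps.
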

		\begin{proof}[Proof of Corollary~\ref{cor:rmarina-pl-optimal-p}]
			The choice of $ p = \frac{\rho_Q}{d} $ implies
			\[
			\frac{1 - p}{p} = \frac{d}{\rho_Q} - 1,
			\]
			\[
			pd + (1 - p)\rho_Q \leq \rho_Q + \left( 1 - \frac{\rho_Q}{d} \right) \cdot \rho_Q \leq 2\rho_Q.
			\]
			Plugging these relations in into the results of Theorem~\ref{thm:rmarina-pl} and Corollaries~\ref{cor:rmarina-pl-iteration-cxty}~and~\ref{cor:rmarina-pl-total-cxty}, we get that if
			\[
			\eta \leq \min \left\{ \frac{1}{L \left( 1 + \sqrt{\frac{2\omega}{n} \left( \frac{d}{\rho_Q} - 1 \right)} \right)}, \frac{p}{2\mu} \right\},
			\]
			then \algname{R-MARINA} requires
			\[
			K = \mathcal{O} \left( \max \left\{ \frac{1}{p}, \frac{L}{\mu}\left( 1 + \sqrt{\frac{(1-p)\omega}{pn}}\right) \right\} \log \frac{\delta^0}{\varepsilon} \right)
			\]
			\[
			= \mathcal{O} \left( \max \left\{ \frac{d}{\rho_Q}, \frac{L}{\mu}\left( 1 + \sqrt{\frac{\omega}{n} \left( \frac{d}{\rho_Q} - 1 \right)}\right) \right\} \log \frac{\delta^0}{\varepsilon} \right)
			\]
			iterations in order to achieve $\mathbb{E}[f(x^K) - f(x^*)] \leq \varepsilon$, and the expected total communication cost per worker is
			\[d + K(pd + (1 - p)\rho_Q)=
			\]
			\[
			\mathcal{O} \left( d + \max \left\{ \frac{1}{p}, \frac{L}{\mu}\left(  1 + \sqrt{\frac{(1 - p)\omega}{pn}}\right) \right\} (pd + (1 - p)\rho_Q) \log \frac{\delta^0}{\varepsilon} \right)
			\]
			\[
			= \mathcal{O} \left( d + \max \left\{ d, \frac{L}{\mu} \left( \rho_Q + \sqrt{\frac{\omega\rho_Q}{n} (d - \rho_Q)} \right) \right\} \log \frac{\delta^0}{\varepsilon} \right)
			\]
			under an assumption that the communication cost is proportional to the number of non-zero components of transmitted vectors from workers to the server.
		\end{proof}
		\section{Auxiliary Results}\label{sec:supplementary}
		The following statement is Lemma~5 from \cite{richtarik2021ef21}.
		\begin{lemma}\label{lemma:square_iequality}
			Let $a,b>0$ be some constants. If $0\leq\eta\leq\frac{1}{\sqrt{a}+b},$ then $a\eta^2+b\eta \leq 1.$ In particular, $\eta\leq \min\left\lbrace \frac{1}{\sqrt{a}},\frac{1}{b} \right\rbrace.$ The bound is tight up to a factor of $2$ since $\frac{1}{\sqrt{a}+b}\leq \min\left\lbrace \frac{1}{\sqrt{a}},\frac{1}{b}\right\rbrace\leq \frac{2}{\sqrt{a}+b}.$
		\end{lemma}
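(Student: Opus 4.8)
The plan is to establish the three assertions of Lemma~\ref{lemma:square_iequality} in order, each by an elementary manipulation. For the main inequality $a\eta^2 + b\eta \le 1$, the key observation is that the hypothesis $\eta \le \frac{1}{\sqrt{a}+b}$ already forces $\sqrt{a}\,\eta \le \frac{\sqrt{a}}{\sqrt{a}+b} \le 1$, since $a,b>0$. First I would factor $a\eta^2 + b\eta = \eta\,(\sqrt{a}\cdot\sqrt{a}\,\eta + b)$ and use $\sqrt{a}\,\eta \le 1$ to bound the bracketed term by $\sqrt{a}+b$; the product is then at most $\eta\,(\sqrt{a}+b) \le 1$ by hypothesis. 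That is the whole argument for the first part, and I expect no obstacle: positivity of $a$ and $b$ makes every quantity well-defined and nonnegative, so no sign or domain issues arise.

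For the ``in particular'' statement, I would simply note that $\sqrt{a}+b \ge \sqrt{a}$ and $\sqrt{a}+b \ge b$ (because the omitted summand is strictly positive), whence $\frac{1}{\sqrt{a}+b} \le \frac{1}{\sqrt{a}}$ and $\frac{1}{\sqrt{a}+b} \le \frac{1}{b}$, so $\frac{1}{\sqrt{a}+b} \le \min\{\tfrac{1}{\sqrt{a}},\tfrac{1}{b}\}$. Combined with the first part, any $\eta$ in the stated range automatically satisfies $\eta \le \min\{\tfrac{1}{\sqrt{a}},\tfrac{1}{b}\}$, which is what is claimed.

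For the tightness claim $\min\{\tfrac{1}{\sqrt{a}},\tfrac{1}{b}\} \le \frac{2}{\sqrt{a}+b}$, I would argue by symmetry: assume without loss of generality that $\sqrt{a} \ge b$, so that the minimum equals $\frac{1}{\sqrt{a}}$; then the inequality to verify reduces to $\sqrt{a}+b \le 2\sqrt{a}$, i.e.\ $b \le \sqrt{a}$, which holds by the assumed ordering. The complementary case $b > \sqrt{a}$ is handled identically with the roles of $\sqrt{a}$ and $b$ interchanged. The only point requiring a little care—and it is very minor—is stating this ``without loss of generality'' reduction cleanly; beyond that there is no real difficulty anywhere in the lemma, which is purely a collection of one-line estimates.
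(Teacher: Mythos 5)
Your argument is correct in all three parts: the factorization $a\eta^2 + b\eta = \eta(\sqrt{a}\cdot\sqrt{a}\,\eta + b)$ together with $\sqrt{a}\,\eta \le 1$ gives the main bound cleanly, the monotonicity of $t \mapsto 1/t$ handles the ``in particular'' clause, and the WLOG case split settles tightness. The paper itself supplies no proof for this lemma (it simply cites it as Lemma~5 of \citet{richtarik2021ef21}), so there is nothing to compare against; your derivation is a complete, self-contained justification of exactly the kind one would expect for such an elementary statement.
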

		For a random vector $X\in\mathbb{R}^d$ and any deterministic vector $x\in\mathbb{R}^d,$ the variance can be decomposed as 
		\begin{equation}\label{eq:variance-decomposition}
			\Exp{\norm{X-\Exp{X}}^2} = \Exp{\norm{X-x}^2} - \norm{\Exp{X} - x}^2.
		\end{equation}
		
		For random vectors~$X,Y\in\mathbb{R}^d,$ we have that
		\begin{equation}\label{eq:tower-property}
			\Exp{X} = \Exp{\Exp{X|Y}}
		\end{equation}
		under an assumption that all expectations in the expression above are well-defined. 
		
		The next auxiliary statement is a modified version of Lemma~2 from~\citep{li2021page}.
		\begin{lemma}\label{lem:identity_for_dot_prod}
			Let $x \in \mathcal{M},$ $g\in T_x\mathcal{M}.$ Then for any $M \geq 0$ we have the identity
			\begin{eqnarray*}
				\langle \nabla f(x), -\eta g \rangle + \frac{M\eta^2}{2} \|g\|^2 &=& -\frac{\eta}{2} \|\nabla f(x)\|^2 - \left(\frac{1}{2\eta} - \frac{M}{2}\right) \|-\eta g\|^2\\
				&+& \frac{\eta}{2} \|g - \nabla f(x)\|^2.
			\end{eqnarray*}
		\end{lemma}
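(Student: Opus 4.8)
Since $\nabla f(x)$ and $g$ both lie in the single tangent space $T_x\mathcal{M}$, which carries the inner product $\langle\cdot,\cdot\rangle = \mathfrak{g}_x(\cdot,\cdot)$, this is a purely algebraic identity in an inner-product space, and the plan is simply to expand the right-hand side and collect terms. First I would write out the cross term on the right using bilinearity and symmetry of $\langle\cdot,\cdot\rangle$,
\begin{equation*}
\frac{\eta}{2}\,\|g - \nabla f(x)\|^2 = \frac{\eta}{2}\,\|g\|^2 - \eta\,\langle \nabla f(x), g\rangle + \frac{\eta}{2}\,\|\nabla f(x)\|^2 ,
\end{equation*}
and substitute $\|-\eta g\|^2 = \eta^2\|g\|^2$ into the term $-\left(\tfrac{1}{2\eta} - \tfrac{M}{2}\right)\|-\eta g\|^2 = -\tfrac{\eta}{2}\|g\|^2 + \tfrac{M\eta^2}{2}\|g\|^2$.

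Next I would add up the three pieces of the right-hand side. The $\|\nabla f(x)\|^2$ contributions are $-\tfrac{\eta}{2}\|\nabla f(x)\|^2$ from the first term and $+\tfrac{\eta}{2}\|\nabla f(x)\|^2$ from the expanded square, so they cancel. The $\|g\|^2$ contributions are $-\tfrac{\eta}{2}\|g\|^2 + \tfrac{M\eta^2}{2}\|g\|^2$ from the second term and $+\tfrac{\eta}{2}\|g\|^2$ from the expanded square, which sum to $\tfrac{M\eta^2}{2}\|g\|^2$. What remains is the cross term $-\eta\langle \nabla f(x), g\rangle = \langle \nabla f(x), -\eta g\rangle$. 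Adding these gives exactly $\langle \nabla f(x), -\eta g\rangle + \tfrac{M\eta^2}{2}\|g\|^2$, which is the left-hand side, completing the proof.

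There is essentially no obstacle here: the statement is a rearrangement of the polarization identity with the parameter $M$ carried along formally, so no properties of $f$, no smoothness, and no Riemannian structure beyond the inner product on $T_x\mathcal{M}$ are needed. The only point worth flagging is that the identity holds for \emph{every} $M \geq 0$ (indeed for every real $M$), which is what lets later arguments instantiate it with $M = L$ when combining the descent inequality with the smoothness bound; I would simply note that $M$ never enters any inequality in the derivation, only the bookkeeping of the $\|g\|^2$ coefficient.
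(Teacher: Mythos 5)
Your proposal is correct, and it is essentially the same approach as the paper's proof: a direct algebraic verification using bilinearity and symmetry of the inner product on $T_x\mathcal{M}$. The only cosmetic difference is direction---you expand the right-hand side and collect terms to recover the left, whereas the paper works forward from the left-hand side by inserting $\pm g$ into the cross term and then invoking a polarization-type identity; your expansion is arguably cleaner and makes the cancellation of the $\|\nabla f(x)\|^2$ and $\|g\|^2$ pieces explicit in one pass.
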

		\begin{proof}
			Indeed,
			\begin{align*}
				&\left\langle \nabla f(x), -\eta g \right\rangle + \frac{M\eta^2}{2} \|g\|^2  \\
				& =\left\langle \nabla f(x) - g, -\eta g \right\rangle + \left\langle g, -\eta g \right\rangle + \frac{M\eta^2}{2} \|g\|^2\\
				& = \eta \left\langle \nabla f(x) - g, -g \right\rangle - \left( \frac{1}{2\eta} - \frac{M}{2} \right) \eta^2\|g\|^2\\
				& = \eta \left\langle \nabla f(x) - g, \nabla f(x) - g - \nabla f(x) \right\rangle - \left( \frac{1}{2\eta} - \frac{M}{2} \right) \|\eta g\|^2\\
				& = \eta \| \nabla f(x) - g \|^2 - \eta \left\langle \nabla f(x) - g, \nabla f(x) \right\rangle - \left( \frac{1}{2\eta} - \frac{M}{2} \right) \|-\eta g\|^2\\
				& = \eta \| \nabla f(x) - g \|^2 - \left( \frac{1}{2\eta} - \frac{M}{2} \right) \|\eta g\|^2\\
				& - \frac{\eta}{2} \left( \|\nabla f(x) - g\|^2 + \|\nabla f(x)\|^2 - \|g\|^2 \right)\\
				& = \eta \| \nabla f(x) - g \|^2 - \left( \frac{1}{2\eta} - \frac{M}{2} \right) \|\eta g\|^2\\
				& - \frac{\eta}{2} \left( \| \nabla f(x) - g \|^2 + \| \nabla f(x) \|^2 - \|g\|^2 \right)\\
				& = -\frac{\eta}{2} \| \nabla f(x) \|^2 - \left( \frac{1}{2\eta} - \frac{M}{2} \right) \|\eta g\|^2 + \frac{\eta}{2} \| g - \nabla f(x) \|^2.\\
			\end{align*}
		\end{proof}
	\end{document}